\title{Pseudo-Boolean Proof Logging for Optimal Classical Planning}
\author{
Simon Dold\equalcontrib\textsuperscript{\rm 1},
Malte Helmert\equalcontrib\textsuperscript{\rm 1},
Jakob Nordstr{\"o}m\equalcontrib\textsuperscript{\rm 2, \rm 3},
Gabriele R{\"o}ger\equalcontrib\textsuperscript{\rm 1},
Tanja Schindler\equalcontrib\textsuperscript{\rm 1}
}
\newtheorem{definition}{Definition}
\newtheorem{lemma}{Lemma}
\newtheorem{theorem}{Theorem}
\newcommand{\variables}{\ensuremath{\mathcal{V}}}
\newcommand{\actions}{\ensuremath{\mathcal{A}}}
\newcommand{\pre}{\ensuremath{\textit{pre}}}
\newcommand{\add}{\ensuremath{\textit{add}}}
\newcommand{\del}{\ensuremath{\textit{del}}}
\newcommand{\cost}{\ensuremath{\textit{cost}}}
\newcommand{\init}{\ensuremath{I}}
\newcommand{\goal}{\ensuremath{G}}
\newcommand{\app}[2]{\ensuremath{#1\llbracket #2 \rrbracket}}
\newcommand{\effvars}{\ensuremath{\textit{evars}}}
\newcommand{\astar}{\ensuremath{\textup{A}^*}}
\newcommand{\hmax}{\ensuremath{h^{\textup{max}}}}
\newcommand{\vmax}{\ensuremath{V^{\textup{max}}}}
\newcommand{\boundedvmax}{\ensuremath{W^{\textup{max}}}}
\newcommand{\restrict}[2]{{{#1}\!\!\upharpoonright_{#2}}}
\newcommand{\maxbit}{\lceil\log_2 B\rceil}
\newcommand{\gecost}[1]{\cost_{\geq #1}}
\newcommand{\geprimedcost}[1]{\cost'_{\geq #1}}
\newcommand{\deltacost}[1]{\Delta c^{= #1}}
\newcommand{\eqvar}[1]{eq_{#1,#1'}}
\newcommand{\geqvar}[1]{geq_{#1,#1'}}
\newcommand{\leqvar}[1]{leq_{#1,#1'}}
\newcommand{\cvars}{\mathcal V_c}
  \newcommand{\cinit}{\ensuremath{\mathcal C_\textup{init}}}
  \newcommand{\cgoal}{\ensuremath{\mathcal C_\textup{goal}}}
  \newcommand{\ctrans}{\ensuremath{\mathcal C_\textup{trans}}}
  \newcommand{\cgeq}{\ensuremath{\mathcal C_\geq}}
\newcommand{\ctask}{\ensuremath{\mathcal C_\Pi}}
\newcommand{\etask}{\ensuremath{\mathcal E_\Pi}}
\newcommand{\ccert}{\ensuremath{\mathcal C_\varphi}}
\newcommand{\ccost}{\ensuremath{\mathcal C_K}}
\newcommand{\rinit}{\ensuremath{r_I}}
\newcommand{\rtrans}{\ensuremath{r_T}}
\newcommand{\rgoal}{\ensuremath{r_G}}
\newcommand{\rcert}{\ensuremath{r_{\varphi}}}
\newcommand{\proofinit}{\ensuremath{\mathcal{P}_\textup{init}}}
\newcommand{\proofinductive}{\ensuremath{\mathcal{P}_\textup{ind}}}
\newcommand{\proofgoal}{\ensuremath{\mathcal{P}_\textup{goal}}}
\newcommand{\proofhstate}[1]{\ensuremath{\mathcal{P}_{#1}^h}}
\newcommand{\proofhstateinductive}[1]{\ensuremath{\mathcal{P}_{#1,\textup{ind}}^h}}
\newcommand{\proofhstategoal}[1]{\ensuremath{\mathcal{P}_{#1,\textup{goal}}^h}}
\newcommand{\Hpdb}{H_{\textup{PDB}}}
\newcommand{\rmax}{r^{\textup{max}}}
\newcommand{\Hmax}{H_{\textup{max},s}}
\newcommand{\rstateming}[2]{r_{#1,\textup{g}\geq #2}}
\newcommand{\reprset}{M}
\newcommand{\open}{\textit{Open}}
\newcommand{\closed}{\textit{Closed}}
\newcommand{\mmgecost}[1]{K_{\geq #1}}
\newcommand{\mmgeprimedcost}[1]{K'_{\geq #1}}
\newcommand{\mmvars}{\mathcal K}
\newcommand{\rpdbstate}[1]{r^{#1}}
\newcommand{\rpdbprimedstate}[1]{{r^{#1}}'}
\newcommand{\rpdbstategeg}[1]{r^{#1}_{\geq B-d(#1)}}
\newcommand{\rpdbprimedstategeg}[1]{{r^{#1}_{\geq B-d(#1)}}'}
\newcommand{\pdbstate}{s_{\alpha}}
\newcommand{\aalpha}{a^{\alpha}}
\newcommand{\rpdbinfinite}{r^{\infty}}
\newcommand{\inlinecite}[1]{\citeauthor{#1} \shortcite{#1}}
\newcommand{\egcite}[1]{\citep[e.g.,][]{#1}}
\newcommand{\includefigure}[3][scale=1]{
  \begin{figure}[t]
    \centering
    \noindent
    \includegraphics[#1]{figures-#2}
    \caption{#3}
    \label{figure:#2}
  \end{figure}
}
\newcommand{\dq}[1]{``#1''}
\begin{document}

\maketitle

\begin{abstract}
  {
    We introduce lower-bound certificates for classical planning
    tasks, which can be used to prove the unsolvability of a task or
    the optimality of a plan in a way that can be verified by an
    independent third party. We describe a general framework for
    generating lower-bound certificates based on pseudo-Boolean constraints, which
    is agnostic to the planning algorithm used.

    As a case study, we show how to modify the $\astar$ algorithm to
    produce proofs of optimality with modest overhead,
    using pattern database heuristics and $\hmax$ as concrete examples.
    The same proof logging approach works for any heuristic whose inferences
    can be efficiently expressed as reasoning over pseudo-Boolean constraints.
  }
\end{abstract}

\section{Introduction}

Optimal classical planning algorithms make three promises: that the
plans they produce are correct, that no cheaper plans achieving the
goal exist, and that any task reported as unsolvable actually is. As
\inlinecite{mcconnell-et-al-csr2011} argue in their seminal work on
\emph{certifying algorithms}, there are many good reasons not to
accept such promises blindly. Instead, a \emph{certifying planning
algorithm} outputs some kind of proof (a \emph{certificate}) that an
independent third party can use to verify the truthfulness of the
planner's claims.

For the correctness of plans, such a verification is performed
routinely: the generated plans themselves serve as certificates for
this, and plan validation tools such as VAL
\cite{howey-long-icaps2003wscompetition} or INVAL
\cite{haslum-github2016-accessed-2025-05-02} can be used to check that the produced plans
are correct. For unsolvability, Eriksson et
al.~\cite{eriksson-et-al-icaps2017,eriksson-et-al-icaps2018,eriksson-helmert-icaps2020}
recently introduced two forms of unsolvability certificates that cover
very diverse planning algorithms.

Certifying the \emph{optimality} of plans, in contrast, is still in
its infancy. The only existing work in this direction is a paper by
\inlinecite{mugdan-et-al-icaps2023} which describes two approaches:
one based on a compilation to unsatisfiability, and one based on an
extension of the unsatisfiability proof system of
\inlinecite{eriksson-et-al-icaps2018}. The first approach is in
general not computationally feasible, as it requires a task
reformulation that increases the number of state variables and actions
exponentially in the size of the planning task. The second approach
does not share this weakness, but is not sufficiently general to
encompass a wide range of planning approaches. \citeauthor{mugdan-et-al-icaps2023} identify
five essential properties for optimality certificates:
\emph{soundness} (if an optimality certificate is accepted by the
verifier, then optimality holds), \emph{completeness} (if a solution
is optimal, then a certificate of its optimality exists),
\emph{efficient generation} (a non-certifying algorithm can be changed
to produce certificates with reasonable, at most polynomial overhead),
\emph{efficient verification} (the verifier runtime is
polynomial in the task and certificate size), and
\emph{generality} (certificates can be efficiently produced by a wide
variety of different planning algorithms rather than being
algorithm-specific).

\citeauthor{mugdan-et-al-icaps2023} show that their approach is sound and complete, but
critically comment that ``the other three properties for practical
usability do not have clear-cut answers''. Compared to the
unsolvability proof system of Eriksson et al., their approach appears
much more specific to heuristic forward search using heuristics that
are themselves based on some kind of forward search, such as the
\hmax\ heuristic \cite{bonet-geffner-aij2001}. For example, it is not
at all clear how to certify heuristics computed in a backward
direction, such as pattern databases \cite{edelkamp-ecp2001} or other
abstraction heuristics without redoing the abstract state space
exploration for every heuristic evaluation.

While in the planning community the interest in certifying algorithms
is quite recent, they are standard in the SAT community, where
unsatisfiability certificates based on proof systems like DRAT
\cite{heule-et-al-cade2013} and LRAT \cite{cruz-filipe-et-al-cade2017}
are required for participating in SAT competitions and supported by
formally verified checkers \cite{tan-et-al-sttt2023,lammich-jar2020}.
A recently proposed alternative are pseudo-Boolean proofs based
on cutting planes, which are supported by the formally verified
checker VeriPB \cite{bogaerts-et-al-aaai2022}. Unlike the other
proof systems mentioned, cutting planes are able to
directly incorporate linear arithmetic, making them very appealing for
optimization problems such as optimal classical planning. Indeed,
VeriPB has been applied to a wide range
of optimization problems such as
MaxSAT \cite{berg-et-al-cade2023},
ILP presolving \cite{hoen-et-al-cpaior2024},
constraint programming
\cite{gocht-et-al-cp2022,mcilree-mccreesh-cp2023,mcilree-et-al-cpaior2024},
and dynamic programming \cite{demirovic-et-al-cp2024}.

\includefigure[width=.8\linewidth]{optimalityproofs}{Interaction of Components}

In this work, we propose to certify plan optimality by means of such
pseudo-Boolean proofs. We claim that our optimality certificates have all five
desirable properties discussed above: they are sound, complete,
efficiently generatable, efficiently verifiable, and general. (For
space reasons, the generality of the proof system must remain a
conjecture for the time being, but at least we show that we can cover
the techniques covered by Mugdan et al.\ as well as pattern database
heuristics that their approach struggles with.)

The overall concept is shown in Figure \ref{figure:optimalityproofs}.
In addition to a plan, the certifying planning system produces an
optimality certificate that proves that the input task has no cheaper
solution. A plan validator such as VAL verifies that the
plan solves the task and determines its cost. The optimality
verifier has access to the original task, the plan cost determined by
the validator, and the certificate from the planning system. On this
basis it verifies that the plan is indeed optimal.

The certificate is based on cutting planes proofs with reification
\cite{bogaerts-et-al-jair2023}, in which the atomic pieces of
knowledge are pseudo-Boolean constraints. Such constraints allow us to
reason conveniently about costs of actions and bounds on costs to
reach a state. Roughly speaking, the certificate describes an
overapproximation of which states can be reached at which cost and
shows that no goal state can be reached at a cost below the claimed
optimal plan cost. The heart of the proof is a standard pseudo-Boolean
proof that can be verified by an off-the-shelf proof checker such as
the VeriPB system.

The paper is structured as follows: we introduce the general framework
for certifying plan optimality by providing an encoding of planning
tasks as pseudo-Boolean constraints and defining lower-bound
certificates. Next, we define heuristic certificates, which represent
the concept of an admissible heuristic estimate as pseudo-Boolean
constraints. We then show how heuristic certificates can be used to
integrate proof-logging into \astar, capturing the arguments for
optimality of \astar\ with an admissible heuristic on a given state
space as pseudo-Boolean constraints. Finally, we describe how
heuristic certificates for pattern database heuristics and $\hmax$ can
be generated, which amounts to proving the admissibility of these
heuristics with pseudo-Boolean constraints.

\section{Background}
We first introduce the STRIPS planning formalism
\cite{fikes-nilsson-aij1971}, followed by pseudo-Boolean constraints and the
cutting planes with reification proof system.

\subsection{STRIPS Planning Tasks}

A STRIPS planning task $\Pi = \langle \variables, \actions, \init,
\goal\rangle$ consists of a finite set $\variables$ of propositional
\emph{state variables}, a finite set $\actions$ of \emph{actions}, the
\emph{initial state} $\init\subseteq \variables$ and the \emph{goal}
$\goal\subseteq \variables$.

A \emph{state} $s\subseteq \variables$ of $\Pi$ induces a variable assignment
$\rho_s$ that maps every $v\in s$ to $1$ and every $v\notin s$ to
$0$.
Each action $a\in \actions$ is a tuple $\langle\pre(a),\add(a),\del(a),\cost(a)\rangle$,
where $\pre(a)\subseteq\variables$ is the set of \emph{preconditions},
$\add(a)\subseteq\variables$ is the set of \emph{add effects},
$\del(a)\subseteq \variables\setminus\add(a)$ is the set of \emph{delete
effects}, and $\cost(a)\in \mathbb{N}_0$ is the \emph{cost} of $a$.
We write $\effvars(a)$ for the set $\add(a)\cup\del(a)$ of affected variables.
Action $a$ is \emph{applicable} in state $s$ if $\pre(a)\subseteq s$.
The \emph{successor state} is $\app{s}{a} = (s\setminus\del(a))\cup\add(a)$.
For a sequence $\pi=a_1,\dots,a_n$ of actions that are successively applicable
in state $s$, we write $\app{s}{\pi}$ for the resulting state $\app{s}{a_1}\dots\llbracket a_n
\rrbracket$. If this state is a \emph{goal state}, i.e.\ $\goal\subseteq
\app{s}{\pi}$, then $\pi$ is a \emph{plan} for $s$. A plan for the initial
state $\init$ is a plan for task $\Pi$. The cost of $\pi$ is
$\cost(\pi) = \sum_{i=1}^n\cost(a_i)$. A plan is \emph{optimal} if there is no
plan of lower cost. Task $\Pi$ is solvable if there is a plan for $\Pi$,
otherwise it is \emph{unsolvable}.

\subsection{Pseudo-Boolean Formulas}
A \emph{Boolean} variable has domain $\{0,1\}$. A literal $\ell$ is a Boolean variable $x$ or
its negation $\bar x$. The negation of a literal $\ell = \bar x$ is $\bar \ell = x$.
A \emph{pseudo-Boolean (PB) constraint} (in normalized form) over a finite set $X=\{x_1,\dots,x_n\}$ of
Boolean variables is an inequality
\begin{equation*}
  \sum\nolimits_{i} a_i \ell_i \geq A,
\end{equation*}
where all literals $\ell_i$ are over distinct variables from $X$, $A\in\mathbb N_0$ and all
coefficients $a_i$ are from $\mathbb N_0$.
We will also write linear constraints more flexibly, but they can always be
transformed to normalized form by simple algebraic transformations. As
syntactic sugar, we will also write $(\ell_1\land\dots\land\ell_n)\rightarrow \ell$
as abbreviation for $\overline{\ell_1} + ... + \overline{\ell_n} + \ell \geq 1$.

A \emph{solution} of the constraint is an assignment
$\rho:X\rightarrow\{0,1\}$ such that the inequality is satisfied by replacing
every variable $x_i$ with $\rho(x_i)$ and every negated variable $\bar x_i$
with $1-\rho(x_i)$. We also use assignments for all literals, implicitly
requiring that $\rho(\bar x ) = 1-\rho(x)$ for all variables $x$.

For constraint $C \doteq \sum\nolimits_{i} a_i \ell_i \geq
A$, the negation $\lnot C$ is the normalized form of $\sum\nolimits_{i}
a_i \ell_i \leq A - 1$.

A \emph{pseudo-Boolean (PB) formula} \cite{buss-nordstroem-hos2021} is a finite set
$\mathcal C$ of pseudo-Boolean constraints over a set $X$ of
variables. An assignment $\rho:X\rightarrow\{0,1\}$ is a \emph{model} of the formula if it is
a solution of all constraints. If $\mathcal C$ has no model, it is
\emph{unsatisfiable}. We say that a constraint $C$ is \emph{implied} by a PB
formula $\mathcal C$ (written $\mathcal C\models C$) if every model of
$\mathcal C$ is a solution of $C$ and that PB formula $\mathcal D$ is implied
by $\mathcal C$ (written $\mathcal C\models\mathcal D$) if $\mathcal C\models
D$ for all $D\in \mathcal D$.

For a partial variable assignment $\rho:X\nrightarrow\{0,1\}$ and constraint
$C$, we write $\restrict{C}{\rho}$ for the constraint obtained from $C$
by replacing each variable $v$ in the domain of $\rho$ by $\rho(v)$ and
normalizing.

\subsection{Cutting Planes with Reification Proof System}

The VeriPB proof system \cite{bogaerts-et-al-jair2023} is an extension of the
cutting planes proof system. We use a subset of the VeriPB proof
system which we call
\emph{cutting planes with reification} (CPR).

Proofs in the cutting planes proof system \cite{cook-et-al-dam1987}
are built from one axiom and three derivation rules. For any literal
$\ell$, the \emph{literal axiom} allows us to derive $\ell\geq 0$
without prerequisites. If we already have $\sum\nolimits_{i} a_i
\ell_i \geq A$ and $\sum\nolimits_{i} b_i \ell_i \geq B$, we can
derive
\begin{itemize}
\item $\sum\nolimits_{i} (c_Aa_i + c_Bb_i) \ell_i \geq c_A A + c_B B$\\for every $c_A, c_B\in\mathbb N_0$
(\emph{linear combination}),
\item $\sum\nolimits_{i} \lceil a_i/c\rceil \ell_i \geq \lceil A/c\rceil$\\ for
  every $c\in\mathbb N^+$ (\emph{division}), and
\item $\sum\nolimits_{i} \min\{a_i,A\} \ell_i \geq A$ (\emph{saturation}).
\end{itemize}

All constraints that can be derived from a PB formula $\mathcal
C$ by these rules are implied by $\mathcal C$.

\subsubsection{Reverse Unit Propagation} A constraint $C\in \mathcal C$ \emph{unit propagates} literal $\ell$ under
partial assignment $\rho$ if all models of $\restrict{C}{\rho}$ assign $\ell$ to
$1$.
When this happens, we can extend $\rho$ with $\ell\mapsto 1$ and unit-propagate
further literals under the extended assignment.
If this process derives a conflict (assigning 0 and 1 to the same
variable)
starting from the empty variable assignment, then $\mathcal C$ is
unsatisfiable.

Formula $\mathcal C$ implies constraint $C$ by \emph{reverse unit
propagation} (RUP) if $\mathcal C\cup\{\lnot C\}$ unit propagates to a
conflict from the empty assignment. Any constraint implied by RUP can
be derived by a cutting plane proof. As in VeriPB, our proof system
allows adding any constraint implied by RUP directly in a single step.
This is a useful shortcut that drastically compresses many cutting
plane proofs.

\subsubsection{Reification}

In addition to cutting planes reasoning, our proof system also
allows \emph{reification}, i.e., introducing a new variable that
represents the truth value of a constraint. If $C$ is a constraint and
$r$ is a new variable, we
write $r \Leftrightarrow C$ to express that variable $r$ must be 1
if $C$ is true under the assignment and 0 otherwise.
If $C$ is $\sum\nolimits_{i} a_i \ell_i \geq A$, this is a shorthand notation
for the two constraints
\begin{align*}
  A\bar r + \sum\nolimits_{i} a_i \ell_i &\geq
  A,\text{ and}
  \\
  (M - A + 1)r + \sum\nolimits_{i} a_i \bar \ell_i &\geq
  M - A + 1
\end{align*}
where $M = \sum\nolimits_{i}a_i$. We use the notation $r \Rightarrow
C$ for the first and $r \Leftarrow C$ for the second constraint.

To summarize, a \emph{CPR proof} consists of a sequence of derivation
steps from the cutting plane proof system (literal axiom, linear
combination, division, and saturation), constraints derived by RUP,
and reifications. In addition, we allow the \emph{redundance-based
strengthening} (RED) rule from VeriPB, which can be understood as a
form of proof by contradiction. We only use it in
\ifthenelse{\boolean{icaps}}{an extended version of this paper
  \cite{dold-et-al-arxiv2025}}{the appendix} and therefore describe it
there.

\section{Lower-Bound Certificates}

We propose certificates that prove that there is no plan of lower cost
than a given bound $B$. A typical, but not the only, application of
such lower-bound certificates is to certify that a plan of cost $B$ is
optimal.
Before we define the full framework, we first introduce
how we encode planning tasks by means of PB formulas.

\subsection{Encoding Planning Tasks}

The PB encoding of task $\Pi = \langle \variables, \actions, \init,
\goal\rangle$ uses the propositional variables from $\variables$ as
Boolean variables, and variables $\cvars
= \{c_0,\dots,c_{\lceil\log_2(B)\rceil}\}$ as a binary representation
of a number in the range $0,\dots,B$. These variables allow us to
represent pairs $\langle s, c\rangle$ consisting of a state $s$ and a
number $c \leq B$. In the following, one can use the intuition that $s$
is a state of the task, which has been reached incurring cost $c$.
In addition, we introduce a number of reification variables.

Reification variable $\rinit$ is true in a model iff the
state variables encode the initial state:
\begin{equation}
\rinit\Leftrightarrow \sum_{v\in\init}
  v + \sum_{v\in\variables\setminus\init}\bar v \geq |\variables|
\label{eq:rinit}
\end{equation}

For the goal, we introduce a reification variable $\rgoal$, which is true in
a model iff
the state variables encode a goal state:
\begin{equation}
  \rgoal\Leftrightarrow \sum_{v\in\goal} v \geq |\goal|
\label{eq:rgoal}
\end{equation}

For the actions, we encode transitions from a state $s$ to successor
state $\app{s}{a}$ in a similar way to symbolic search
\egcite{edelkamp-kissmann-ijcai2009} or planning as satisfiability
\egcite{rintanen-et-al-aij2006},
encoding the successor state by means of additional variables $v'$ for each
state variable $v$. The variables $c_i$ encode a cost by which state
$s$ can be reached, and analogously we use a variable $c'_i$ for
each $c_i$ to encode the cost to reach $\app{s}{a}$ via this transition. For
this purpose, we need constraints that ensure that the difference between the
two values corresponds to the cost of the action. We do this by means of
additional reification variables $\deltacost{k}$ that express that the
difference between the two numbers is $k$:
\begin{equation}
  \deltacost{k}\Leftrightarrow \sum\nolimits_{i=0}^{\maxbit} 2^ic'_i
  - \sum\nolimits_{i=0}^{\maxbit} 2^ic_i = k
  \label{eq:deltacost}
\end{equation}

To express that the variables $c_i$ or $c'_i$ encode a value that is
at least $k$ for some $k\in\{0,\dots,B\}$, we use reification
variables $\gecost{k}$ and $\geprimedcost{k}$:

\begin{align}
  \gecost{k} &\Leftrightarrow \sum\nolimits_{i=0}^{\maxbit} 2^ic_i \geq k
  \label{eq:gecost}\\
  \geprimedcost{k} &\Leftrightarrow \sum\nolimits_{i=0}^{\maxbit} 2^ic'_i \geq k
  \label{eq:geprimedcost}
\end{align}

Note that we do not introduce these reification variables for
all values of $k$ up to $B$, which would require an exponential number
of variables in the encoding size of $B$. Rather, we lazily introduce
only the variables used by the proof.

For handling the state variables $v$ that are not affected by an action, we
introduce reification variables $\eqvar{v}$ that are true in a model iff it
assigns $v$ and $v'$ the same value:
\begin{equation}
  \begin{aligned}
    \eqvar{v} &\Leftrightarrow \leqvar{v} + \geqvar{v} \geq 2\\
    \geqvar{v} &\Leftrightarrow v + \overline{v'} \geq 1\\
    \leqvar{v} &\Leftrightarrow \bar{v} + v' \geq 1
  \end{aligned}
  \label{eq:eqvars}
\end{equation}

For each action $a\in\actions$ we introduce a variable $r_a$ expressing
that whenever the action is applied, the cost is increased
by $\cost(a)$, the action precondition is satisfied, the primed
variables truthfully represent the successor state, and the successor
cost is within the cost bound:
\begin{equation}
\begin{aligned}
  r_a\Rightarrow {}&\deltacost{\cost(a)} + \sum_{\mathclap{v\in\pre(a)}}
  v + \sum_{\mathclap{v\in\add(a)}}v'
  + \sum_{\mathclap{v\in\del(a)}}\overline{v'} \\
  &{} + \sum_{\mathclap{v\in \variables\setminus\effvars(a)}}  \eqvar{v}
   + \overline{\geprimedcost{B}}\geq 2 + |\pre(a)| + |\variables|
\end{aligned}
  \label{eq:actionvar}
\end{equation}

Observe that this constraint represents a conjunction: all literals must
be true to meet the bound $2 + |\pre(a)| + |\variables|$.

Finally, reification variable $\rtrans$ encodes that a state
transition happens, i.e., some action variable is
selected:
\begin{equation}
\rtrans\Leftrightarrow \sum_{a\in\actions} r_a \geq 1
  \label{eq:rtrans}
\end{equation}
This representation allows selecting several actions
at the same time, but only if they all lead to the same state change
under the same cost.

\begin{definition}
  For planning task $\Pi = \langle \variables, \actions, \init, \goal\rangle$
  and cost bound $B\in\mathbb N_0$, a \emph{PB task encoding} is a tuple
  $\etask = \langle\ctask, \rinit, \rgoal, \rtrans\rangle$, where
  $\ctask=\langle\cinit,\cgoal,\ctrans,\cgeq\rangle$ such that
  $\cinit$, $\cgoal$, $\ctrans$ and $\cgeq$ are sets of reifications from equations \eqref{eq:rinit}--\eqref{eq:rtrans},
  $\cinit$ from \eqref{eq:rinit} and \eqref{eq:gecost},
  $\cgoal$ from \eqref{eq:rgoal} and \eqref{eq:gecost},
  $\ctrans$ from \eqref{eq:deltacost}--\eqref{eq:rtrans},
  $\cgeq$ from \eqref{eq:deltacost}--\eqref{eq:geprimedcost}
  and $\rinit$, $\rgoal$, and $\rtrans$ are the
  reification variables introduced in \eqref{eq:rinit}, \eqref{eq:rgoal}, and
  \eqref{eq:rtrans}.
\end{definition}

\subsection{Certifying Unsolvability under Cost Bound}

A certificate shows that the task is unsolvable under a cost bound $B$, i.e.,
there is no plan $\pi$ with $\cost(\pi) < B$.

Intuitively, it is based on an invariant $\varphi$ which represents an overapproximation of the reachable state-cost pairs.
An invariant in general is a property that is preserved through action
applications, i.e. (1) whenever $\varphi$ is true for state $s$
and cost $c$, and action $a$ is applicable in $s$, then $\varphi$ is true for
state $\app{s}{a}$ and cost $c + \cost(a)$.\footnote{For unsolvability under
a cost bound, it is sufficient to only require the invariant property for
action applications within the cost bound. We implicitly do this because our
task encoding only permits such action applications, cf.\ Eq.\
\eqref{eq:actionvar}.}
In addition, we require that (2)
$\varphi$ is true for the initial state and cost $0$. Together, (1) and (2)
ensure that $\varphi$ is true in all reachable state-cost pairs. If in
addition, (3) $\varphi$ is \emph{not} true for any goal state with a cost
strictly lower than $B$, this implies that the task cannot be solved with cost
$< B$: initially, the invariant is true (2), it is impossible to change this by
an action application (1), so for all reachable goal states the incurred cost
is at least $B$ (3).

In our certificates, the invariant will be defined by a sequence of PB reifications,
and the certificate must prove the three properties above by means of three
separate CPR proofs. We call property (1)  the \emph{inductivity lemma},
property (2) the \emph{initial state lemma} and property (3) the \emph{goal
lemma}.

We can think of the representation of the invariant as a circuit where
each gate evaluates a PB constraint. The state variables and cost bits
$c_i$ are the inputs to the circuit, and the output of the circuit
determines whether the invariant is true for the input state-cost pair. We now formalize this notion.

\begin{definition}
  A \emph{PB circuit} with input variables $V$ is a pair $\mathcal R =
  \langle R, r\rangle$,
  where 
  \begin{itemize}
    \item $R$ is a sequence $\langle r_1\Leftrightarrow \varphi_1, \dots,
      r_n\Leftrightarrow \varphi_n\rangle$ of PB reifications such that each PB
      constraint $\varphi_i$ only has non-zero coefficients for variables from
      $V\cup\{r_j\mid j<i\}$, and 
    \item $r \in \{r_1,\dots,r_n\}$ is the \emph{output variable}.
  \end{itemize}
\end{definition}
Slightly abusing notation, we also interpret a sequence of PB constraints
as the PB formula consisting of its components (i.e., treat the sequence
as a set).
For a PB circuit $\langle R,r\rangle$ with input variables
$\variables\cup\cvars$, we will write $\reprset(R, r)$ for the
represented set of pairs $\langle s,c\rangle$.
In other words, $\reprset(R, r)$ contains the pair $\langle s,c\rangle$ iff
there exists a model $\rho$ of $R\cup\{r = 1\}$, such that $s
= \{v\in\variables\mid \rho(v)=1\}$ and $c = \sum_i 2^i \rho(c_i)$.

Putting the pieces together, we get the following formal definition of
lower-bound certificates. Remember that in the encoding of the planning task,
$\variables$ contains the state variables and $\cvars$ the binary variables for
representing cost.

\begin{definition}
  Let $\Pi = \langle \variables, \actions, \init, \goal\rangle$ be a planning
  task and $B\in\mathbb N_0$ a cost bound. Let $\mathcal E_\Pi = \langle\ctask, \rinit, \rgoal,
  \rtrans\rangle$ be a PB task encoding for $\Pi$ and $B$.

  A \emph{lower-bound certificate for $\Pi$ with bound $B$} is a tuple
  $\langle\langle\ccert, \rcert\rangle, \proofinit, \proofinductive, \proofgoal\rangle$, where
  \begin{itemize}
    \item $\langle\ccert, \rcert\rangle$ is a PB circuit with input variables $\variables
      \cup\cvars$ not mentioning a primed variable.
    \item \textbf{initial state lemma:} $\proofinit$ is a CPR proof for\\
      $\cinit\cup\ccert\cup\cgeq \models (\rinit \land
       \overline{\gecost{1}})\rightarrow \rcert$.
    \item \textbf{goal lemma:} $\proofgoal$ is a CPR proof for\\
      $\cgoal\cup\ccert\cup\cgeq \models (\rgoal\land {\rcert})\rightarrow {\gecost{B}}$.
    \item \textbf{inductivity lemma:} $\proofinductive$ is a CPR proof for\\
      $\ctrans\cup\ccert\cup\ccert'\cup\cgeq \models (\rcert\land
       \rtrans)\rightarrow\rcert'$,
      where $\ccert'$ is a copy of $\ccert$ where all PB variables are replaced
      with their primed version.
  \end{itemize}
  \label{def:certificate}
\end{definition}

Lower-bound certificates are sound:

\begin{theorem}
If there is a lower-bound certificate for planning task $\Pi$ with bound $B$,
then the task has no plan $\pi$ with $\cost(\pi) < B$.
\end{theorem}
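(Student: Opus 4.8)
The plan is to argue by contradiction. Suppose $\pi = a_1, \dots, a_n$ is a plan for $\Pi$ with $\cost(\pi) < B$, put $s_0 = \init$ and $s_j = \app{s_{j-1}}{a_j}$, and $c_0 = 0$ and $c_j = c_{j-1} + \cost(a_j)$. Since action costs are non-negative, $c_j \le c_n = \cost(\pi) < B$ for every $j$, so each $c_j$ lies in the range $\{0,\dots,B\}$ represented by the cost bits $\cvars$. I will show by induction on $j$ that $\langle s_j, c_j\rangle \in \reprset(\ccert, \rcert)$ for all $j \le n$, and then contradict this using the goal lemma at $j = n$.

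Every step relies on one routine observation about the reification structure, which I also expect to be the only real obstacle. Because the task-encoding reifications \eqref{eq:rinit}--\eqref{eq:rtrans} and the circuit reifications of $\ccert$ and $\ccert'$ have an acyclic dependency order, and because each reification variable is introduced by a single defining reification (the action variables $r_a$ occurring only in the one-directional constraint \eqref{eq:actionvar}, with no variable given two different definitions across the union), any assignment to the input variables $\variables \cup \cvars$ and their primed copies extends to an assignment of all reification variables that satisfies all of $\cinit, \cgoal, \ctrans, \cgeq, \ccert, \ccert'$, in which every reification variable takes its intended value (so $\rinit$ is true iff the unprimed state variables encode $\init$, $\rgoal$ is true iff they encode a goal state, $\gecost{k}$ is true iff the encoded cost is at least $k$, and so on), and in which $r_a$ may be set to $1$ whenever the conjunction on the right-hand side of \eqref{eq:actionvar} holds. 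One must check that these numerous, variable-sharing reifications never force a variable to two different values once the inputs are fixed; given the layered structure of the encoding this is straightforward, but it is the step that needs care, and I will use its conclusion without further comment below.

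For the base case, extend the assignment ``$\variables$ encodes $\init$, $\cvars$ encodes $0$'' to a model of $\cinit \cup \ccert \cup \cgeq$ as above. There $\rinit$ is true and $\gecost{1}$ is false, so $(\rinit \land \overline{\gecost{1}})$ holds, and the initial state lemma forces $\rcert$ true; restricting to $\ccert \cup \{\rcert = 1\}$ gives $\langle \init, 0\rangle \in \reprset(\ccert, \rcert)$. For the inductive step, let $\rho$ witness $\langle s_j, c_j\rangle \in \reprset(\ccert, \rcert)$ with $j < n$, and extend $\rho$ by letting the primed state/cost inputs encode $s_{j+1} = \app{s_j}{a_{j+1}}$ and $c_{j+1}$, setting $r_{a_{j+1}} = 1$ and every other $r_a = 0$, and filling in the remaining reification variables of $\ctrans \cup \ccert' \cup \cgeq$. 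Then $\rtrans$ becomes true, and the constraint $r_{a_{j+1}} \Rightarrow (\dots)$ of \eqref{eq:actionvar} is satisfied because each of its conjuncts holds: $\deltacost{\cost(a_{j+1})}$ since $c_{j+1} - c_j = \cost(a_{j+1})$; the precondition literals since $a_{j+1}$ is applicable in $s_j$; the add- and delete-effect literals by the definition of $\app{s_j}{a_{j+1}}$; the literals $\eqvar{v}$ for $v \notin \effvars(a_{j+1})$ since $v \in s_j$ iff $v \in s_{j+1}$; and $\overline{\geprimedcost{B}}$ since $c_{j+1} < B$. Thus we have a model of $\ctrans \cup \ccert \cup \ccert' \cup \cgeq$ with $\rcert$ and $\rtrans$ both true, so the inductivity lemma forces $\rcert'$ true; renaming primed variables to unprimed, the primed part of this model is a model of $\ccert \cup \{\rcert = 1\}$ encoding $s_{j+1}$ and $c_{j+1}$, i.e.\ $\langle s_{j+1}, c_{j+1}\rangle \in \reprset(\ccert, \rcert)$.

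By induction $\langle s_n, c_n\rangle \in \reprset(\ccert, \rcert)$; take a witnessing model and extend it over the reifications of $\cgoal \cup \cgeq$. Since $\pi$ is a plan, $s_n$ is a goal state, so $\rgoal$ is true, and $\rcert$ is true by assumption, whence the goal lemma $\cgoal\cup\ccert\cup\cgeq \models (\rgoal\land {\rcert})\rightarrow {\gecost{B}}$ forces $\gecost{B}$ true, i.e.\ $c_n \ge B$. This contradicts $c_n = \cost(\pi) < B$, so $\Pi$ has no plan of cost below $B$.
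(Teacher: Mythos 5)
Your proof is correct and follows essentially the same route as the paper's: induct along the state--cost sequence induced by the hypothetical cheap plan, using the initial state lemma for the base case, the inductivity lemma (with the transition encoding capturing each within-bound action application) for the step, and the goal lemma for the final contradiction. The only difference is that you spell out the model-extension argument for the reification variables, which the paper leaves implicit.
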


\begin{proof}
  Assume there is a plan $\pi = \langle a_1,\dots,a_n\rangle$ with $\cost(\pi) < B$.
  It induces a sequence of state-cost pairs $\langle s_0,
  g_0\rangle,\dots,\langle s_n, g_n\rangle$ such that $s_0$ is the initial
  state, $g_0 =0$, and for all $i\in\{1,\dots,n\}$, $s_i = \app{s_{i-1}}{a_i}$
  and $g_i = g_{i-1} + \cost(a_i)$. Note that  $s_n$ is a goal state and
  $g_n = \cost(\pi) < B$.

  By the initial state lemma, $\langle s_0, g_0\rangle$ is in $\reprset(\ccert,
  \rcert)$. In $\rtrans$, the PB encoding captures all action applications that do not
  exceed the cost bound, which is the case for the action applications in $\pi$
  because $\cost(\pi) < B$. Thus, by the inductivity lemma, we get that
  $\langle s_i, g_i\rangle\in\reprset(\ccert, \rcert)$
  for all $i\in\{1,\dots,n\}$. By the goal lemma, we have for all $\langle
  s, g\rangle\in\reprset(\ccert, \rcert)$ that $g\geq B$ if $s$ is a goal
  state. Since $s_n$ is a goal state, and $\langle s_n,g_n\rangle\in\reprset(\ccert,
  \rcert)$, this implies that $g_n\geq B$, a contradiction to $g_n < B$.
\end{proof}

The verifier component (see Fig.~\ref{figure:optimalityproofs})
receives the planning task, the lower-bound certificate, and the
validated plan cost. Inside, it confirms that the reifications in
$\mathcal{E}_\Pi$ are a system of pseudo-Boolean constraints that
encode the original planning task. Additionally, it generates
$\langle\mathcal{C}'_\varphi,r'_\varphi\rangle$ based on
$\langle\mathcal{C}_\varphi,r_\varphi\rangle$ by creating a copy of
each PB reification in $\mathcal{C}_\varphi$ where all variables are
replaced by their primed counterparts. It then verifies that the
proofs $\proofinit$, $\proofgoal$ and $\proofinductive$ actually
derive the statements in Definition~\ref{def:certificate}. Only if all
these steps are successful, the verifier confirms that the provided
plan cost is optimal.

This concludes our formalization of lower-bound certificates and their
verification. Lower-bound certificates are efficiently verifiable: the
verifier runs in polynomial time because the underlying
PB proof checker does and all steps other than the proof checking are
easy to perform in polynomial time.
In the rest of the paper we describe a case study of how certain
optimal classical planning algorithms can be augmented to efficiently
generate lower-bound certificates. Because the algorithms we discuss
are complete, this also shows that lower-bound certificates are
complete.

\section{Proof-Logging Heuristic Search}

We now examine how a planner can be transformed into a proof-logging
system, in our case a planner that emits a lower-bound certificate for
the cost of the found plan. The idea is to already log most relevant
information about its internal reasoning during its normal operation,
keeping the overhead for generating the proof low.

$\astar$ search \cite{hart-et-al-ieeessc1968} with an admissible heuristic $h$, i.e.\
a distance estimator that provides a lower bound on the goal distance of
a state, is the most common approach for optimal planning as heuristic
search.

It maintains a priority queue $\open$ of states $s$, ordered by
$f=g+h(s)$, where $g$ is the best known cost to reach $s$ from the initial
state. The open list is initialized with the initial state. $\astar$ removes
a state $s$ from the list and expands it until it removes a goal state. A state
expansion generates all successor states and adds them to the open list if the
implicit path via $s$ improves its $g$-value.

One challenge when transforming the approach into a proof-logging system is
that the bound $B$ that should be certified is only known once a plan has been
found. To handle this, we will use ``placeholder'' PB variables
$\mmgecost{l}$ and $\mmgeprimedcost{l}$ with integers $l$. Once $B$ is known, the
proof will be augmented by reifications that give them the intended meaning.
The idea is that they represent $\gecost{l}$ and $\geprimedcost{l}$,
but $l$ will not necessarily be in the range $\{0,\dots,B\}$. For
$l<0$, we will instead use $0$, and for $l>B$, we will use $B$. So the
reifications for these variables will be
\begin{align}
  \mmgecost{l} &\Leftrightarrow \gecost{\min\{B,\max\{0, l\}\}}\text{
    and}\label{eq:mmgecost}\\
  \mmgeprimedcost{l} &\Leftrightarrow \geprimedcost{\min\{B,\max\{0,
  l\}\}}\label{eq:mmgeprimedcost}
\end{align}
By $\mmvars$, we denote the (infinite) set of all such variables. Upon
termination $\astar$ adds a set $\ccost$ of reifications for
the finitely many such variables that actually occur in the proof. In the
proofs in this paper, we will silently switch from $\mmgecost{\cdots}$ to the
corresponding $\gecost{\cdots}$ expression, implicitly using
\eqref{eq:mmgecost}.

The following two lemmas establish that these clipped costs behave as
expected.\footnote{Throughout the paper, we omit proofs of technical
lemmas that do not provide further insight. These proofs are
included in the
\ifthenelse{\boolean{icaps}}{extended version of this paper
  \cite{dold-et-al-arxiv2025}}{appendix}.}
Intuitively, the first lemma says that if the
costs represented by the variables in $\cvars$ exceed $j+k$, they also exceed the
smaller value $j$:

\begin{restatable}{lemma}{costsone}
  Let $j\in\mathbb Z$ and $k\in\mathbb N_0$. It is possible to derive
  $\gecost{\min\{B,\max\{0,j+k\}\}}\rightarrow
  \gecost{\min\{B,\max\{0,j\}\}}$ from $\cgeq$.
  \label{lemma:costs1-variant}
\end{restatable}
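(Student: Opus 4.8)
The plan is to first reduce the statement to a monotonicity sub-claim about the $\gecost{\cdot}$ variables, and then establish that sub-claim by a short explicit cutting-planes derivation. Set $m = \min\{B,\max\{0,j+k\}\}$ and $m' = \min\{B,\max\{0,j\}\}$. Since $k\in\mathbb N_0$ we have $j+k\geq j$, and the clipping map $t\mapsto\min\{B,\max\{0,t\}\}$ is monotone, so $m\geq m'$; moreover $0\leq m'\leq m\leq B$, and $B\leq\sum_{i=0}^{\maxbit}2^i=:M$. Hence it suffices to prove the following: for integers $0\leq m'\leq m\leq B$, the constraint $\gecost{m}\rightarrow\gecost{m'}$ is derivable from the reifications \eqref{eq:gecost} for $\gecost{m}$ and $\gecost{m'}$ contained in $\cgeq$.

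For this sub-claim I would take the $\Rightarrow$-half of the reification of $\gecost{m}$, namely $m\,\overline{\gecost{m}}+\sum_i 2^i c_i\geq m$, and the $\Leftarrow$-half of the reification of $\gecost{m'}$, namely $(M-m'+1)\gecost{m'}+\sum_i 2^i\bar c_i\geq M-m'+1$. Forming their linear combination with coefficients $1$ and $1$, and using that each pair of terms $2^i c_i + 2^i \bar c_i$ normalizes to the constant $2^i$ (summing to $M$ on the right), yields $m\,\overline{\gecost{m}}+(M-m'+1)\gecost{m'}\geq m-m'+1$. When $m'\geq 1$ we have $1\leq m-m'+1\leq m\leq M$, so saturation collapses both coefficients to $m-m'+1$, giving $(m-m'+1)\bigl(\overline{\gecost{m}}+\gecost{m'}\bigr)\geq m-m'+1$, and a single division by $m-m'+1$ produces $\overline{\gecost{m}}+\gecost{m'}\geq 1$, which is exactly $\gecost{m}\rightarrow\gecost{m'}$. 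The degenerate case $m'=0$ must be treated separately but is immediate: $\gecost{0}$ reifies $\sum_i 2^i c_i\geq 0$, which is vacuously true, so combining its $\Leftarrow$-half with $\sum_i 2^i c_i\geq 0$ (a linear combination of literal axioms) derives $\gecost{0}\geq 1$, and adding the literal axiom $\overline{\gecost{m}}\geq 0$ gives the claim.

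I do not expect a real obstacle: the derivation is only a handful of cutting-planes steps, so the only thing requiring care is the bookkeeping around the clipping function — checking $m\geq m'$ and $m\leq M$ — and the edge cases $m'=0$ and $m=m'$. As an alternative to the explicit derivation, one can observe that $\cgeq\cup\{\lnot(\gecost{m}\rightarrow\gecost{m'})\}$ unit-propagates $\gecost{m}\mapsto 1$ and $\gecost{m'}\mapsto 0$, and then, exploiting the power-of-two structure of the cost bits, the two restricted constraints $\sum_i 2^i c_i\geq m$ and $\sum_i 2^i\bar c_i\geq M-m'+1$ propagate the bits $c_i$ (in alternating rounds) until a conflict is reached, so the whole step can also be recorded as a single RUP step.
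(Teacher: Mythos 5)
Your derivation is correct, but it takes a genuinely different route from the paper's own proof. The paper establishes the lemma via redundance-based strengthening (RED) with an empty witness, i.e., as a proof by contradiction: it assumes the negation, obtains $\sum_{i=0}^{\maxbit} 2^i c_i \geq \min\{B,\max\{0,j+k\}\}$ from the $\Rightarrow$-half of the reification for the larger threshold, lowers the right-hand side to $\min\{B,\max\{0,j\}\}$ by a full weakening over exactly the bits set in the binary representation of the difference $\Delta$ (re-adding those bits with scaled literal axioms), and then uses the $\Leftarrow$-half of the reification of the smaller threshold to contradict the assumed negation. You instead give a direct forward cutting-planes derivation: adding the $\Rightarrow$-half for $\gecost{m}$ to the $\Leftarrow$-half for $\gecost{m'}$ cancels the cost bits, and saturation followed by division by $m-m'+1$ yields $\overline{\gecost{m}}+\gecost{m'}\geq 1$; the side conditions you check ($m'\leq m\leq B\leq M$, with the separate easy case $m'=0$) are exactly what saturation needs, and they follow from the monotonicity of the clipping map, so the argument goes through (the paper handles $m'=0$ uniformly, since weakening down to right-hand side $0$ still works). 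What your route buys is that it avoids RED and any reasoning about the bit decomposition of $\Delta$, using only the core rules (linear combination, saturation, division), at the cost of one small case split; the paper's contradiction-style RED argument has the advantage of following the same template as the second cost lemma and the other derivations in the appendix. Your closing observation that the implication is even RUP-derivable is also sound: after propagating $\gecost{m}\mapsto 1$ and $\gecost{m'}\mapsto 0$, the two restricted bit constraints propagate the cost bits from the most significant bit downward until a conflict is reached.
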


The second lemma informally states that if the costs already exceed $l$ and
we spend cost $m$, the successor cost exceeds $l+m$ (clipping all costs into
$\{0,\dots,B\}$).
\begin{restatable}{lemma}{coststwo}
  For $l\in\mathbb Z$ and $m\in\mathbb N_0$ it is possible to derive
  $(\gecost{\min\{B,\max\{0, l\}\}}\land \deltacost{m})\rightarrow\geprimedcost{\min\{B,\max\{0,l+m\}\}}$ from $\cgeq$.
  \label{lemma:costs3}
\end{restatable}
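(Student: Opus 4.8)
The plan is to treat $k_1 := \min\{B,\max\{0,l\}\}$ and $k_2 := \min\{B,\max\{0,l+m\}\}$ as fixed integers in $\{0,\dots,B\}$ and to obtain the target implication from exactly three of the reifications present in $\cgeq$: the $\Rightarrow$-half of $\gecost{k_1}$ (which contributes the constraint $k_1\overline{\gecost{k_1}} + \sum_{i}2^i c_i \geq k_1$), the $\Rightarrow$-direction of the equality reification $\deltacost{m}$ (which in particular yields a constraint encoding $\deltacost{m}\rightarrow(\sum_{i}2^ic'_i - \sum_{i}2^ic_i \geq m)$), and the $\Leftarrow$-half of $\geprimedcost{k_2}$ (the constraint $(M-k_2+1)\geprimedcost{k_2} + \sum_{i}2^i\bar c'_i \geq M-k_2+1$, where $M := \sum_{i=0}^{\maxbit}2^i$; all index sums run over $i = 0,\dots,\maxbit$). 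Intuitively these say $C \geq k_1$, $C' \geq C + m$, and ``$C' \geq k_2$ implies $\geprimedcost{k_2}$'' for the numbers $C,C'$ encoded by the $c_i$ and $c'_i$, so that $C' \geq k_1 + m$, and it only remains to check that $k_1+m$ is large enough.

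The one purely arithmetic fact the derivation rests on is $k_1 + m \geq k_2$, i.e.\ that the clipping map $x\mapsto\min\{B,\max\{0,x\}\}$ satisfies $\mathrm{clip}(l)+m \geq \mathrm{clip}(l+m)$ for $m\in\mathbb N_0$; this is a short case distinction on where $l$ and $l+m$ lie relative to $0$ and $B$. Given this, I would form a single linear combination, with coefficient $1$ on each, of the three constraints above. When the sum is renormalized, the unprimed cost bits cancel (the terms $\sum_i 2^i c_i$ and $\sum_i 2^i\bar c_i$ together contribute the constant $M$) and so do the primed cost bits, leaving
\[
  k_1\overline{\gecost{k_1}} + (m+M)\overline{\deltacost{m}} + (M-k_2+1)\geprimedcost{k_2} \;\geq\; k_1 + m - k_2 + 1 .
\]
By the arithmetic fact the right-hand side is at least $1$, so the normalized target $\overline{\gecost{k_1}} + \overline{\deltacost{m}} + \geprimedcost{k_2} \geq 1$ — which is exactly $(\gecost{k_1}\land\deltacost{m})\rightarrow\geprimedcost{k_2}$ — follows from this constraint in one RUP step (falsifying all three goal literals turns the derived constraint into $0 \geq k_1 + m - k_2 + 1 \geq 1$, a conflict), or equivalently by weakening the right-hand side to $1$ and applying saturation.

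The step I expect to require the most care is the clipping bookkeeping rather than any PB manipulation: since $\cgeq$ only contains the variables $\gecost{k}$, $\geprimedcost{k}$ for $k\in\{0,\dots,B\}$, the derivation must never refer to an out-of-range threshold such as $\geprimedcost{k_1+m}$ when $k_1+m>B$. The linear-combination route above avoids this because it only mentions $k_1$ and $k_2$, which are in range by construction; the cost is the slightly fiddly verification that $k_1+m\geq k_2$ holds uniformly (the tempting shortcut of deriving $\geprimedcost{k_1+m}\rightarrow\geprimedcost{k_2}$ would be ill-formed). An alternative that ties into the previous lemma is to first derive $(\gecost{k_1}\land\deltacost{m})\rightarrow\geprimedcost{\min\{B,k_1+m\}}$ — still an in-range threshold — and then weaken $\min\{B,k_1+m\}$ down to $k_2$ using the primed analogue of Lemma~\ref{lemma:costs1-variant}; this needs a small extra case split when $l\geq B$, which is handled directly. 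It is also worth flagging that only the $\geq$ direction of the equality reification $\deltacost{m}$ is used; its $\leq$ direction plays no role.
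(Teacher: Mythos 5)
Your derivation is correct, and it rests on the same arithmetic core as the paper's own proof: combining $\sum_i 2^i c_i \geq k_1$, $\sum_i 2^i c'_i - \sum_i 2^i c_i \geq m$, and the complement of $\sum_i 2^i c'_i \geq k_2$, and then using the clipping inequality $k_1 + m \geq k_2$ to obtain positive slack. The difference is purely in the proof-system mechanics. The paper wraps the argument in redundance-based strengthening (RED) with an empty witness: it assumes the negation of the target, reads off the three indicator literals as units, uses them with the reifications \eqref{eq:gecost}, \eqref{eq:deltacost} and \eqref{eq:geprimedcost} to derive $0 \geq k_1 + m - k_2 + 1$, and concludes by contradiction. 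You instead keep the indicator literals inside a single linear combination of the three reification halves (with their coefficients $k_1$, $m+M$, $M-k_2+1$), obtaining your displayed constraint with degree $k_1+m-k_2+1 \geq 1$, and then extract the clausal target in one RUP step. Both are sound; your route has the advantage of staying within plain cutting planes plus RUP, so the RED rule (which the paper introduces only for these cost lemmas) is not needed at all, at the price of tracking the reification coefficients explicitly. Two small points: when $k_1 = 0$ the term $k_1\overline{\gecost{k_1}}$ simply vanishes, and the RUP conflict still goes through (the restricted constraint is $0 \geq m - k_2 + 1 \geq 1$); and your aside about ``weakening the right-hand side to 1 and applying saturation'' is loose, since there is no rule that lowers the degree directly---saturation followed by division by the degree (plus a literal axiom if $k_1=0$), or just the RUP step you already give, is the clean finish. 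Neither point affects correctness.
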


\subsection{Proof-Logging Heuristics}

Parts of the overall proof must be contributed by the heuristic. Intuitively,
this contribution can be seen as a certificate that shows that it is impossible
to reach the goal with a total cost strictly less than $B$ from a state $s$
that has been reached incurring a cost of at least $c$. For an admissible heuristic, this
will naturally be the case for all $c$ where $c + h(s) \geq B$. We will later
see examples for different heuristics where some use the same invariant for
all states and others generate a new invariant for each evaluated state.

Throughout its execution (a potential initialization phase and evaluations
for a number of states), the heuristic maintains its own PB circuit
$\langle H,r^h\rangle$ with input variables $\variables$ and $\mmvars$. We assume
that the heuristic uses its own
namespace, so it does not introduce reification variables that are also
introduced by the search or some other heuristic. Whenever the search uses the
heuristic to evaluate a state $s$, the heuristic not only returns the
estimate but also a PB variable $r^h_s$ with the following
requirements:

\begin{itemize}
  \item $\reprset(H, r^h_s)$ contains all pairs $\langle s, \hat c\rangle$ with $\hat
    c + h(s) \geq B$.
  \item If $\reprset(H, r^h_s)$ contains pair $\langle \hat s, \hat c\rangle$, it
    contains all pairs $\langle\tilde s, \tilde c\rangle$ such that $\tilde s$ is
    reachable from $\hat s$ with overall cost $\tilde c < B$ (i.e., there is an
    action sequence $\pi$ with $\app{\hat s}{\pi} = \tilde s$ and
    $\tilde c = \hat c + \cost(\pi) < B$).
  \item $\reprset(H, r^h_s)$ contains no pair $\langle\hat s, \hat c\rangle$ where $\hat s$ is a goal state and $\hat c < B$.
\end{itemize}

This motivation leads to the following definition:

\begin{definition}
  Let $\Pi = \langle \variables, \actions, \init, \goal\rangle$ be a planning
  task and $B\in\mathbb N_0$ a cost bound. Let $\langle\ctask, \rinit, \rgoal,
  \rtrans\rangle$ be the PB task encoding for $\Pi$ and $B$ and $s$ be a state over
  $\variables$.

  A heuristic certificate for state $s$ of $\Pi$ with bound $B$
  is a tuple $\langle\langle H, r^h_s\rangle, \proofhstate{s}, \proofhstateinductive{s}, \proofhstategoal{s}\rangle$, where
  \begin{itemize}
    \item $\langle H, r^h_s\rangle$ is a PB circuit with input variables $\variables
      \cup\mmvars$ not mentioning a primed variable.
    \item \textbf{state lemma:} $\proofhstate{s}$ is a CPR proof for\\
      $\mathcal C_s\cup H\cup\cgeq\cup\ccost \models (r_s\land \gecost{\max\{0,B-h(s)\}})
      \rightarrow r_s^h$, where $\mathcal C_s = \{r_s
      \Leftrightarrow \sum_{v\in s} v + \sum_{v\in\variables\setminus s}\bar v \geq
      |\variables|\}$.
      \item \textbf{goal lemma:} $\proofhstategoal{s}$ is a CPR proof for\\
        $\cgoal\cup H\cup\cgeq\cup\ccost \models (\rgoal\land{r_s^h})\rightarrow
        {\gecost{B}}$.
    \item \textbf{inductivity lemma:} $\proofhstateinductive{s}$ is a CPR proof for\\
      $\ctrans\cup H\cup H'\cup\cgeq\cup\ccost \models (r_s^h \land\rtrans)\rightarrow {r'}_s^h$,\\
      where $H'$ is a copy of $H$ where all PB variables are replaced
      with their primed version.
  \end{itemize}
\end{definition}

As mentioned earlier, heuristic certificates can be seen as certificates
for state $s$ where some cost of at least $B-h(s)$ has already been spent to
reach $s$. Thus it is no coincidence that these heuristic certificates also
structurally resemble lower-bound certificates under cost bound $B$.

We will later showcase for pattern database heuristics and $\hmax$ how such
heuristic certificates can be generated. But first we show how they contribute
to the overall lower-bound certificate generated by $\astar$.

\subsection{Proof-Logging \astar}

The invariant from a proof-logging $\astar$ search will conceptually cover
two aspects:
(1) the invariant is true for all closed states with their corresponding
cost from the initial state. For these states, the heuristic estimates
cannot rule out that they could be traversed with the corresponding cost by an optimal plan.
(2) the invariant is also true for any state-cost pair for which any
invariant is true that was  produced by the heuristic to actually rule states out.
The first  part considers the distances from the initial state to the states in the closed list, while
the second part considers the distances from the states in the open list to a goal.

Proof-logging $\astar$ maintains a sequence $A$ of reifications and a proof
log $L$ of derivations.  During its initialization, the heuristic already writes
some information to $A$ and $L$.

Whenever $\astar$ removes a state $s$ with g-value $g$ from the open list, it
adds a reification
\begin{equation}
\rstateming{s}{g} \Leftrightarrow \sum_{v\in s}
v + \sum_{v\in\variables\setminus s} \bar v + \gecost{g} \geq |\variables| + 1
\label{eq:stateming}
\end{equation}
to $A$, characterizing all pairs $\langle s,\tilde g\rangle$ with $\tilde g \geq g$.
In addition, it adds $\langle s,g\rangle$ to the initially empty collection $\closed$. It
also logs some derivations in $L$ that we describe later. If a successor 
$s'$ with g-value $g'$ is not added to $\open$ because $\astar$'s duplicate
detection is aware of an earlier encounter with $g''\leq g'$, it uses Lemma~\ref{lemma:costs1-variant} and logs
\begin{equation}
  \mmgeprimedcost{g'}\rightarrow\mmgeprimedcost{g''}.
  \label{eq:duplicatedetection}
\end{equation}

Whenever the search uses the heuristic to evaluate a state, the heuristic can
extend $A$ with further reifications, adds derivations for the corresponding
state lemma, inductivity lemma and goal lemma to $L$ and returns the
corresponding PB variable $r_s^h$ to the search. In addition, we require it to
log a proof for the state lemma in terms of the primed variables.

When the search terminates, it adds to $A$ a reification
\begin{equation}
r_{\astar} \Leftrightarrow \sum_{\langle s,g\rangle\in \closed} \rstateming{s}{g}
  + \sum_{\langle s,g,h\rangle\in\open} r_s^h \geq 1
\label{eq:rastar}
\end{equation}
and prepends $A$ with the necessary reifications \eqref{eq:mmgecost} and
\eqref{eq:mmgeprimedcost}, where $B$ is the cost of the found plan.

In the following we explain how we can generate the three proofs $\proofinit,
\proofinductive$ and $\proofgoal$ for the PB circuit $\langle A, r_{\astar}\rangle$.
This generation relies on the correctness of \astar.
However, the verifier receiving the generated proofs relies only on the correctness of the derivation rules of CPR.
We start by showing how $L$ can be extended to $\proofinit$.

\begin{lemma}
  RUP can derive the initial state lemma
  $(\rinit \land \overline{\gecost{1}})\rightarrow r_{\astar}$ from $\cinit\cup A\cup\cgeq$.
  \label{lemma:astarinitialstate}
\end{lemma}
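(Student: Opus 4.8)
The plan is to show that $\{\lnot((\rinit \land \overline{\gecost{1}})\rightarrow r_{\astar})\} \cup \cinit \cup A \cup \cgeq$ unit-propagates to a conflict. Negating the goal constraint forces $\rinit \mapsto 1$, $\gecost{1} \mapsto 0$, and $r_{\astar} \mapsto 0$. First I would observe that from $\rinit = 1$ and the reification \eqref{eq:rinit} in $\cinit$, unit propagation sets every state variable $v$ to its value in the initial state $\init$, i.e.\ the state inputs encode exactly $s_0 = \init$. Similarly, from $\gecost{1} = 0$ and the reification \eqref{eq:gecost} in $\cgeq$ (specifically the $\gecost{1} \Leftarrow \dots$ direction), unit propagation forces $\sum_i 2^i c_i < 1$, hence every cost bit $c_i \mapsto 0$, so the cost inputs encode $g_0 = 0$.

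The key step is to connect $\langle \init, 0\rangle$ to the disjunction inside $r_{\astar}$. Since $\astar$ is correct and complete, the initial state $\init$ is the first state placed on $\open$ and is eventually expanded (unless a goal state is found first — but in any case $\init$ is either moved to $\closed$ with $g$-value $0$, or $\init$ itself is the goal state removed from $\open$; here I would rely on the construction that $\astar$ always adds $\langle \init, 0\rangle$ to $\closed$, or handle the degenerate case where $\init$ is a goal separately). Assuming $\langle \init, 0\rangle \in \closed$, the reification \eqref{eq:stateming} for $\rstateming{\init}{0}$ is in $A$; its left-to-right direction ($\rstateming{\init}{0} \Rightarrow \dots$) together with the already-propagated literals ($v$ set according to $\init$ and $\gecost{0}$, which is trivially $1$ and propagated from $\cgeq$) does not directly help, so instead I use the $\Leftarrow$ direction: all the literals $\sum_{v\in\init} v + \sum_{v\notin\init}\bar v + \gecost{0}$ are satisfied (each term is $1$), meeting the bound $|\variables| + 1$, so unit propagation forces $\rstateming{\init}{0} \mapsto 1$. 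Then from \eqref{eq:rastar}, the constraint $\overline{r_{\astar}} \lor (\text{disjunction} \geq 1)$ — more precisely the $r_{\astar} \Leftarrow \dots$ direction — together with $\rstateming{\init}{0} = 1$ would force $r_{\astar} \mapsto 1$, contradicting $r_{\astar} \mapsto 0$. That is the conflict.

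The main obstacle I anticipate is the bookkeeping around $\gecost{0}$: the reification $\gecost{0} \Leftrightarrow \sum_i 2^i c_i \geq 0$ is a tautology on the right-hand side, so $\gecost{0}$ is propagated to $1$ unconditionally by its $\Leftarrow$ direction, and I should make sure this reification is actually among the ones in $\cgeq$ (or derivable/available) — the definition says $\cgeq$ contains reifications from \eqref{eq:deltacost}--\eqref{eq:geprimedcost}, and we are free to include $\gecost{0}$ lazily, so this is fine but worth stating. The other subtlety is the edge case where $\astar$ terminates having expanded a goal state that happens to be $\init$ itself (zero-cost optimal plan when $B = 0$, or when $\init$ is a goal); then $\langle \init, 0\rangle$ appears in $\closed$ (if $\astar$ closes the goal upon removal) or we argue directly via the $r_\init^h$ term in \eqref{eq:rastar} using the state lemma for $\init$ — either way the disjunction in \eqref{eq:rastar} is satisfied. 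I would handle this by noting that the construction guarantees $\langle\init,0\rangle\in\closed$ regardless, so a single uniform argument suffices.
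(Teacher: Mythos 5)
Your proposal is correct and follows essentially the same route as the paper's proof: propagate the initial-state variables from $\rinit$ via \eqref{eq:rinit}, force all cost bits to $0$ from $\overline{\gecost{1}}$ via \eqref{eq:gecost}, obtain $\gecost{0}$, then propagate $\rstateming{\init}{0}$ via \eqref{eq:stateming} and finally $r_{\astar}$ via \eqref{eq:rastar}, yielding the conflict. Your extra care about the lazily introduced $\gecost{0}$ reification and the fact that $\langle\init,0\rangle\in\closed$ (which holds by construction, since $\init$ is the first state removed from $\open$) only makes explicit what the paper leaves implicit.
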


\begin{proof}
  Assume (a) $\rinit\geq 1$, (b) $\overline{\gecost{1}}\geq 1$, and (c)
  $\overline{r_{\astar}}\geq 1$. From (a) and \eqref{eq:rinit}, we receive (d)
  $v\geq1$ for each $v\in\init$ and $\overline{v}\geq 1$ for each $v\in\variables\setminus\init$.
  Reification \eqref{eq:gecost} and (b) express $\sum\nolimits_{i=0}^{\maxbit}
  2^ic_i<1$, from them we get for all $c_i\in\cvars$ that $\overline{c_i}\geq
  1$. With \eqref{eq:gecost}, we receive $\gecost{0}$, which with (d) and
  \eqref{eq:stateming} gives $\rstateming{\init}{0}\geq 0$.
  Since $\langle\init,0\rangle$ is in $\closed$, with (a) and \eqref{eq:rastar}
  we get $r_{\astar}\geq 1$, contradicting (c).
\end{proof}

The proof $\proofgoal$ builds on the goal lemmas logged by the heuristic:
\begin{lemma}
  It is possible to derive the goal lemma $(\rgoal\land r_{\astar})\rightarrow
  \gecost{B}$ from $\cgoal\cup A\cup\cgeq$.
\end{lemma}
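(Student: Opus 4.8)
The plan is to reduce the statement to a case analysis over the ``disjuncts'' of the reification \eqref{eq:rastar} defining $r_{\astar}$. Writing $d$ for a generic summand of $\sum_{\langle s,g\rangle\in\closed}\rstateming{s}{g} + \sum_{\langle s,g,h\rangle\in\open} r_s^h$, reification \eqref{eq:rastar} gives us the direction $r_{\astar}\Rightarrow\bigl(\sum_d d\geq 1\bigr)$. So it suffices to derive, for every such $d$, the implication $(\rgoal\land d)\rightarrow\gecost{B}$ from $\cgoal\cup A\cup\cgeq$, and then to combine all of these with that reification in one concluding step. I expect the combining step to be a single RUP inference: assuming $\rgoal$, $r_{\astar}$, and $\overline{\gecost{B}}$, unit propagation through \eqref{eq:rastar} yields $\sum_d d\geq 1$, while each derived constraint $(\rgoal\land d)\rightarrow\gecost{B}$ together with $\rgoal$ and $\overline{\gecost{B}}$ propagates $\bar d$; this empties every term of $\sum_d d\geq 1$ and produces a conflict.

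For an open-list disjunct $d=r_s^h$, the implication $(\rgoal\land r_s^h)\rightarrow\gecost{B}$ is literally the goal lemma of the heuristic certificate for $s$, whose CPR proof $\proofhstategoal{s}$ the heuristic has already appended to $L$ when the search evaluated $s$. Its premise set $\cgoal\cup H\cup\cgeq\cup\ccost$ is contained in $\cgoal\cup A\cup\cgeq$ (the heuristic's reifications $H$ and the clipped-cost reifications $\ccost$ are part of $A$), so monotonicity of CPR proofs in the available constraints means nothing new has to be done here.

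For a closed-list disjunct $d=\rstateming{s}{g}$, I claim $(\rgoal\land\rstateming{s}{g})\rightarrow\gecost{B}$ is obtained by one RUP step. Assuming its negation, we have $\rgoal$, $\rstateming{s}{g}$, and $\overline{\gecost{B}}$. Unit-propagating $\rstateming{s}{g}$ through \eqref{eq:stateming} forces every state variable to the value it takes in $s$ and forces $\gecost{g}$; unit-propagating $\rgoal$ through \eqref{eq:rgoal} forces every goal variable true. If $s$ is not a goal state there is a goal variable $v\notin s$, forced simultaneously to $1$ by $\rgoal$ and to $0$ by $\rstateming{s}{g}$, a conflict. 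If $s$ is a goal state, then since $\astar$ stops as soon as it removes a goal state and $B$ is by definition the cost of the plan it then returns, the stored value is $g=B$, so \eqref{eq:stateming} forces $\gecost{B}$ itself, contradicting $\overline{\gecost{B}}$. In both cases unit propagation reaches a conflict, so RUP derives the implication (vacuously in the first case). Collecting everything, $\proofgoal$ is $L$ extended by one RUP step per closed state and the final combining RUP step described above.

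The main obstacle is the closed-goal-state subcase: it is the only place where the proof-\emph{generation} argument actually invokes the operational behaviour of $\astar$, namely that the unique goal state it closes is stored with $g$-value exactly $B$ (not a stale, larger value, and not a value below $B$, which $\astar$'s termination condition rules out). Once that is pinned down, the remainder is routine unit propagation over the reifications \eqref{eq:rgoal}, \eqref{eq:stateming}, and \eqref{eq:rastar}, and the reuse of the heuristic's already-logged goal lemmas.
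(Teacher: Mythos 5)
Your proof is correct and follows essentially the same route as the paper: reuse the heuristic's already-logged goal lemmas for the open-list summands, handle closed non-goal states via a goal-variable conflict, and use that $\astar$ closes only the returned goal state with $g=B$ so that $\gecost{B}$ clashes with the negated conclusion. The only difference is packaging — you derive one RUP implication per closed state and then combine, whereas the paper folds all closed and open summands into a single concluding RUP step after establishing the heuristic goal lemmas — which does not change the substance.
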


\begin{proof}
  The heuristic certificate can derive the goal lemma for all open states.
  The rest is then by RUP, assuming (a) $\rgoal\geq 1$, (b) $r_{\astar}\geq 1$
  and (c) $\overline{\gecost{B}}\geq 1$.

  From \eqref{eq:rgoal} and (a) we get that (d) $v\geq 1$ for all goal variables.
  For all closed pairs $\langle s,g\rangle$, $s$ is not a goal state or it is the reached
  goal state $s_{\star}$ with $g=B$.
  If $s$ is not a goal state, then some $v\in\goal$ is false in $s$ and 
  \eqref{eq:stateming} yields with (d) that $\overline{\rstateming{s}{g}}\geq
  1$. We also get $\overline{\rstateming{s_{\star}}{B}}\geq 1$ from
  \eqref{eq:stateming}, using (c).

  For all variables $r_s^h$, the goal lemma provided by the heuristic implies
  with (a) and (c) that $\overline{r_s^h}\geq 1$, so overall we get with
  \eqref{eq:rastar} that $\overline{r_{\astar}}\geq 1$, contradicting (b).
\end{proof}

To support the derivation of the inductivity lemma, $\astar$ already extends
$L$ upon every expansion for every action with the derivation described in the
following lemma:
\begin{lemma}
    For every action $a$ applicable in $s$ and state $s$ closed with cost $g$,
  it is possible to derive $(\rstateming{s}{g} \land r_a)\rightarrow r'_{\astar}$
  from $\ctrans\cup A\cup A'\cup\cgeq$.
  \label{lemma:expandedaction}
\end{lemma}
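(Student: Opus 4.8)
The plan is to prove this by RUP, following the same contradiction-style pattern as the previous lemmas. We assume $\rstateming{s}{g} \geq 1$, $r_a \geq 1$, and $\overline{r'_{\astar}} \geq 1$, and derive a conflict. The intuition is: being in state $s$ with cost $\geq g$ (this is what $\rstateming{s}{g}$ says, via \eqref{eq:stateming}) and applying action $a$ takes us to the successor state $\app{s}{a}$ with cost $\geq g + \cost(a)$; this successor-state/successor-cost pair must be witnessed on the primed side by one of the disjuncts of $r'_{\astar}$ in the primed copy of \eqref{eq:rastar}, contradicting $\overline{r'_{\astar}}$.

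The key steps, in order. First, from $\rstateming{s}{g} \geq 1$ and \eqref{eq:stateming} I extract $v \geq 1$ for $v \in s$, $\overline{v} \geq 1$ for $v \notin s$, and $\gecost{g} \geq 1$. Second, from $r_a \geq 1$ and the action constraint \eqref{eq:actionvar} (which is a conjunction), I extract: $\deltacost{\cost(a)} \geq 1$; the primed literals describing $\app{s}{a}$, i.e. $v' \geq 1$ for $v \in \add(a)$ and $\overline{v'} \geq 1$ for $v \in \del(a)$; and $\eqvar{v} \geq 1$ for all unaffected $v$, which via \eqref{eq:eqvars} forces $v'$ to agree with $v$, so combined with step one I get the full primed description of the state $\app{s}{a}$ (every $v \in \app{s}{a}$ has $v' \geq 1$, every $v \notin \app{s}{a}$ has $\overline{v'} \geq 1$). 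Also, $r_a$ gives $\overline{\geprimedcost{B}} \geq 1$ (note $\pre(a) \subseteq s$ is consistent, so the precondition literals add no new information). Third, I combine $\gecost{g} \geq 1$ with $\deltacost{\cost(a)} \geq 1$ via Lemma~\ref{lemma:costs3} (with $l = g$, $m = \cost(a)$) to obtain $\geprimedcost{\min\{B, g + \cost(a)\}} \geq 1$; call this cost value $g'' = \min\{B, g+\cost(a)\}$. Fourth — the crux — I invoke correctness of \astar: since $s$ was expanded (closed with $g$) and $a$ is applicable in $s$, \astar\ generated the successor $\app{s}{a}$, and either it became a closed state $\langle \app{s}{a}, g^* \rangle \in \closed$ with $g^* \leq g + \cost(a)$, or its $g$-value was not improved because of an earlier encounter $g'' \leq g+\cost(a)$ recorded via \eqref{eq:duplicatedetection}, or it currently sits in $\open$ with some $g$-value; in every case there is a disjunct of the primed \eqref{eq:rastar} — either $\rstateming{\app{s}{a}}{g^*}{}'$ or $r^h_{\app{s}{a}}{}'$ — whose reification constraint (primed \eqref{eq:stateming} or the primed state lemma, which \astar\ was required to log) is satisfied by the primed assignment we have assembled, together with the cost bound $\geprimedcost{g^*}$ obtained from $\geprimedcost{g''}$ via the primed version of Lemma~\ref{lemma:costs1-variant} (and, where duplicate detection applies, via \eqref{eq:duplicatedetection}). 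That disjunct forces $r'_{\astar} \geq 1$ through the primed \eqref{eq:rastar}, contradicting the assumption.

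The main obstacle is the fourth step: it requires a careful case analysis of what \astar\ does with the successor $\app{s}{a}$ upon expanding $s$, and in each case exhibiting the correct witnessing disjunct of $r'_{\astar}$ and the correct chain of cost-monotonicity derivations (Lemma~\ref{lemma:costs1-variant} primed, \eqref{eq:duplicatedetection}) that bridge the cost $g + \cost(a)$ we can certify to the possibly-smaller cost recorded by the search. I also need to handle the boundary case $g + \cost(a) \geq B$, where the relevant disjunct is the heuristic variable $r^h_{\app{s}{a}}$ and the primed state lemma gives $r^h_{\app{s}{a}}{}'$ from $r_{\app{s}{a}}{}'$ and $\geprimedcost{\max\{0, B - h(\app{s}{a})\}}$ — which holds since $B - h(\app{s}{a}) \leq B \leq g + \cost(a)$ clips appropriately. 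Everything else is routine extraction from conjunctive reification constraints of the kind already carried out in the preceding proofs, so I would state those manipulations briskly and spend the detail budget on the case split.
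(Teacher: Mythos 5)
Your overall plan coincides with the paper's proof (establish $(\gecost{g}\land\deltacost{\cost(a)})\rightarrow\geprimedcost{\min\{B,g+\cost(a)\}}$ via Lemma~\ref{lemma:costs3}, then RUP with a case split on whether the successor is closed, pruned by duplicate detection, or open and covered by the heuristic's primed state lemma), but your step four has a genuine gap: it presumes that every state in $\closed$ was actually expanded, so that its successors are witnessed somewhere in $A$. This fails for exactly one closed entry, namely the goal state $s_\star$ that $\astar$ removes last. It is added to $\closed$ with $g=B$ (and it must be covered by this lemma, since $\rstateming{s_\star}{B}$ is a disjunct of \eqref{eq:rastar} that the inductivity lemma has to discharge), but $\astar$ terminates without expanding it, so for an action $a$ applicable in $s_\star$ the successor $\app{s_\star}{a}$ need not appear anywhere: no closed reification, no $\open$ entry, no logged state lemma, no constraint \eqref{eq:duplicatedetection}. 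Your claim that ``in every case there is a disjunct of the primed \eqref{eq:rastar}'' is false here, and the propagation you describe has no conflict to reach along that route. The paper closes this case separately: with $g=B$, the cost derivation already gives $\geprimedcost{B}\geq 1$, which clashes with the conjunct $\overline{\geprimedcost{B}}$ that $r_a\geq 1$ forces through \eqref{eq:actionvar}, so RUP conflicts without ever inspecting the successor. You actually extract $\overline{\geprimedcost{B}}\geq 1$ in your second step but never use it; adding this one case repairs the argument.

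A secondary, smaller point: in the general open-successor case you need $g+\cost(a)\geq\max\{0,B-h(\app{s}{a})\}$ in order to bridge from $\mmgeprimedcost{g+\cost(a)}$ to the trigger of the primed state lemma via Lemma~\ref{lemma:costs1-variant}, but your justification ($B-h(\app{s}{a})\leq B\leq g+\cost(a)$) only applies when $g+\cost(a)\geq B$. The correct general argument, used in the paper, is that $\astar$ closes all states with $f<B$, so a state still in $\open$ at termination has $g+\cost(a)+h(\app{s}{a})\geq B$. With these two repairs your derivation matches the paper's proof.
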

\begin{proof}
  We first establish by Lemma~\ref{lemma:costs3} that
  (a) $(\gecost{g}\land \deltacost{\cost(a)})\rightarrow\geprimedcost{\min\{B,g+\cost(a)\}}$.

  The desired constraint follows by RUP:
  Assume
  (b) $\rstateming{s}{g}\geq 1$,
  (c) $r_a\geq 1$, and
  (d) $\overline{r'_{\astar}}\geq 1$.

  From (b), we can derive with \eqref{eq:stateming} that
  (e) $v\geq 1$ for all $v\in s$,
  (f) $\overline{v}\geq 1$ for all $v\in\variables\setminus s$, and
  (g) $\gecost{g}\geq 1$.
  We use (c) with \eqref{eq:actionvar} to derive
  (h) $\deltacost{\cost(a)}\geq1$.
  With (h), (g), and (a) we get
  (i) $\geprimedcost{\min\{B,g+\cost(a)\}}\geq 1$.
  If $s$ is a goal state then $g=B$ and we get from (i) and \eqref{eq:actionvar} that
  $\overline{r_a}\geq 1$, contradicting (c). 

  Otherwise, 
  we
  define $\tilde{s}=\app{s}{a}$
  and use \eqref{eq:actionvar}, (c), (e), (f), (g), and \eqref{eq:eqvars}
  to derive that
  (j) the primed state variables encode $\tilde{s}$ and that
  (k) $\mmgeprimedcost{g+\cost(a)}\geq 1$.

  If the successor was not considered because of a duplicate $\langle
  \tilde{s},\hat{g}\rangle$ with $\hat{g}<g+\cost(a)$, we use the derived constraint 
  \eqref{eq:duplicatedetection} to derive (k') $\mmgeprimedcost{\hat{g}}\geq 1$.

  If $\langle\tilde{s},g+\cost(a)\rangle\in\closed$ or
  $\langle\tilde{s},\hat{g}\rangle\in\closed$, respectively, we use the primed
  version of \eqref{eq:stateming} with (j) and (k) or (k') to derive
  ${\rstateming{\tilde{s}}{g+\cost(a)}}'\geq 1$ or
  ${\rstateming{\tilde{s}}{\hat{g}}}'\geq 1$.
  With the primed version of \eqref{eq:rastar} we derive $r'_{\astar}\geq 1$,
  contradicting (d).

  Otherwise, there is an entry for $\tilde{s}$ in $\open$.
  Since $\astar$ closes all states with $f < B$, we know
  that in this case $g+\cost(a)+h(\tilde{s})\geq B$,
  so $g+\cost(a)\geq \max\{0, B - h(\tilde{s})\}$. We can thus use (j) and (k)
  with the primed version of the state lemma for
  $\tilde{s}$ from the heuristic and receive ${r_{\tilde{s}}^h}'\geq 1$.
  The primed version of \eqref{eq:rastar} yields $r'_{\astar}\geq 1$,
  contradicting (d).
\end{proof}

\begin{lemma}
  For every state $s$ closed with cost $g$, it is possible to derive
  $(\rstateming{s}{g} \land \rtrans )\rightarrow r'_{\astar}$ from $\ctrans\cup A\cup A'\cup\cgeq$.
  \label{lemma:expandedtransition}
\end{lemma}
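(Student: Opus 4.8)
The plan is to prove this by reverse unit propagation, leaning on Lemma~\ref{lemma:expandedaction} for the actions that are applicable in $s$ and handling the remaining actions directly via the action reification \eqref{eq:actionvar}. Concretely, I would negate the target constraint and assume (a) $\rstateming{s}{g}\geq 1$, (b) $\rtrans\geq 1$, and (c) $\overline{r'_{\astar}}\geq 1$, and then show that unit propagation reaches a conflict.

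First I would unpack the assumptions. From (a) and reification \eqref{eq:stateming}, since the constraint has $|\variables|+1$ as its bound and only $|\variables|+1$ literals (the $|\variables|$ state literals plus $\gecost{g}$), every state literal must be $1$; i.e.\ we propagate $v\geq 1$ for all $v\in s$ and $\overline v\geq 1$ for all $v\in\variables\setminus s$ (and also $\gecost{g}\geq 1$, which we will not even need here). From (b) and \eqref{eq:rtrans} we propagate $\sum_{a\in\actions} r_a\geq 1$.

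The core step is to show that for every action $a\in\actions$, the assumption forces $\overline{r_a}\geq 1$. If $a$ is applicable in $s$, then Lemma~\ref{lemma:expandedaction} supplies the constraint $(\rstateming{s}{g}\land r_a)\rightarrow r'_{\astar}$, i.e.\ $\overline{\rstateming{s}{g}}+\overline{r_a}+r'_{\astar}\geq 1$, which together with (a) and (c) unit-propagates $\overline{r_a}\geq 1$. If $a$ is not applicable in $s$, choose some $v\in\pre(a)\setminus s$; we already have $\overline v\geq 1$, and since $v$ occurs as one of the unit-coefficient literals that must all be $1$ to meet the bound $2+|\pre(a)|+|\variables|$ in the $r_a\Rightarrow(\cdots)$ half of \eqref{eq:actionvar}, this constraint now forces $\overline{r_a}\geq 1$ as well. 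Hence all $r_a$ propagate to $0$, which contradicts $\sum_{a\in\actions} r_a\geq 1$; this conflict establishes that $(\rstateming{s}{g}\land\rtrans)\rightarrow r'_{\astar}$ is RUP from $\ctrans\cup A\cup A'\cup\cgeq$, and since RUP derivations are permitted in CPR, the lemma follows.

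I expect the only subtlety — and thus the main thing to get right — is the treatment of actions that are \emph{not} applicable in $s$: Lemma~\ref{lemma:expandedaction} says nothing about them, so one has to observe separately that $r_a$ being true is incompatible with the state variables encoding $s$ when a precondition of $a$ is absent from $s$, which is exactly what the conjunctive reading of \eqref{eq:actionvar} (pointed out in the text right after that equation) gives us. Everything else is routine bookkeeping over the reifications already introduced.
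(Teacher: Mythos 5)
Your proposal is correct and follows essentially the same route as the paper's proof: invoke Lemma~\ref{lemma:expandedaction} for each action applicable in $s$, handle inapplicable actions by deriving a violated precondition literal from \eqref{eq:stateming} and using the conjunctive reading of \eqref{eq:actionvar} to force $\overline{r_a}\geq 1$, and then obtain the conflict with $\rtrans$ via \eqref{eq:rtrans}. The subtlety you flag (inapplicable actions being outside the scope of Lemma~\ref{lemma:expandedaction}) is exactly the point the paper's proof also addresses.
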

\begin{proof}
  We establish by Lemma~\ref{lemma:expandedaction} that 
    (a)
  $(\rstateming{s}{g} \land r_a)\rightarrow r'_{\astar}$ for every action
  $a\in \actions$ that is applicable in $s$.

  Then RUP derives the constraint:
  Assume 
  (b) $\rstateming{s}{g}\geq 1$,
  (c) $\rtrans\geq 1$, and
  (d) $\overline{r'_{\astar}}\geq 1$.
  With (a), (b) and (d) we derive $\overline{r_a}\geq 1$ for each action
  $a$ applicable in $s$. %
  From (b), we can derive with \eqref{eq:stateming} that
  (e) $v\geq 1$ for all $v\in s$,
  (f) $\overline{v}\geq 1$ for all $v\in\variables\setminus s$, and
  if $a$ is not applicable in $s$, some precondition is violated 
  and we use (f) and \eqref{eq:actionvar} to derive $\overline{r_a}\geq 1$.
  With \eqref{eq:rtrans}, this yields $\overline{\rtrans}\geq 1$,
  contradicting (c).
\end{proof}

These derivations are already logged during the execution of the search. We
extend the log with derivations from the following lemma to generate
$\proofinductive$:

\begin{lemma}
  It is possible to derive the inductivity lemma $(r_{\astar}\land
  \rtrans)\rightarrow r'_{\astar}$ from $\ctrans\cup A\cup A'\cup\cgeq$.
  \label{lemma:astarinductivity}
\end{lemma}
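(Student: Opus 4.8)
The plan is to derive $(r_{\astar}\land\rtrans)\rightarrow r'_{\astar}$ by RUP, using the per-state implications that were already logged during search via Lemma~\ref{lemma:expandedtransition}, together with the per-heuristic inductivity lemmas that the heuristic logged for each evaluated open state. The structure of the argument mirrors the definition of $r_{\astar}$ in \eqref{eq:rastar}: $r_{\astar}$ is the disjunction over all closed pairs $\langle s,g\rangle$ of $\rstateming{s}{g}$, plus the disjunction over all open states of $r_s^h$, so to propagate the invariant forward it suffices to propagate each of these disjuncts forward and note they all feed into the primed $r'_{\astar}$.

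Concretely, I would assume for contradiction that (a) $r_{\astar}\geq 1$, (b) $\rtrans\geq 1$, and (c) $\overline{r'_{\astar}}\geq 1$ hold under the empty assignment extended by unit propagation, and show a conflict. From the primed version of \eqref{eq:rastar} and (c) we propagate $\overline{{\rstateming{s}{g}}'}\geq 1$ for every closed $\langle s,g\rangle$ and $\overline{{r_s^h}'}\geq 1$ for every open $s$. Now consider the closed pairs: Lemma~\ref{lemma:expandedtransition} gives us $(\rstateming{s}{g}\land\rtrans)\rightarrow r'_{\astar}$ for every closed $\langle s,g\rangle$; combined with (b) and (c), unit propagation yields $\overline{\rstateming{s}{g}}\geq 1$ for each such pair. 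Next consider the open states: for each open $s$ with heuristic variable $r_s^h$, the heuristic has logged the inductivity lemma $(r_s^h\land\rtrans)\rightarrow {r'}_s^h$; with (b) and the already-derived $\overline{{r_s^h}'}\geq 1$, propagation gives $\overline{r_s^h}\geq 1$. Thus every literal summed in the (unprimed) reification \eqref{eq:rastar} for $r_{\astar}$ has been forced to $0$, so \eqref{eq:rastar} propagates $\overline{r_{\astar}}\geq 1$, contradicting (a). Since $\mathcal C_\Pi\cup A\cup A'\cup\cgeq$ together with the negation of the target constraint propagates to a conflict, the target is derivable by RUP, hence by a CPR proof.

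The main subtlety — and the step I would be most careful about — is making sure all the ingredient implications are genuinely available in the proof log at this point: Lemma~\ref{lemma:expandedtransition}'s derivations for closed states are logged during expansion, but the heuristic's inductivity lemmas $\proofhstateinductive{s}$ were logged (per the earlier description) when the heuristic evaluated each open state, so these are all in $L$ before $\proofinductive$ is appended. A second point worth a sentence is that the propagation order matters only superficially: RUP does not care about order, but I would present it in the order above (primed reification first to rule out all primed disjuncts, then closed disjuncts via Lemma~\ref{lemma:expandedtransition}, then open disjuncts via the heuristic inductivity lemmas, then the unprimed reification) so the reader can follow the cascade. No new cutting-planes arithmetic is needed beyond what the cited lemmas already encapsulate; the whole proof is a bookkeeping argument over the two families of disjuncts that constitute $r_{\astar}$.
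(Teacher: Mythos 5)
Your proof is correct and takes essentially the same route as the paper: closed-state disjuncts are handled via Lemma~\ref{lemma:expandedtransition}, open-state disjuncts via the heuristic's logged inductivity lemmas, and the contradiction is reached through the reification \eqref{eq:rastar}. The only (immaterial) difference is that the paper first derives the intermediate constraints $(r_s^h\land\rtrans)\rightarrow r'_{\astar}$ for each open state as separate RUP steps before the final RUP, whereas you fold everything into a single RUP propagation.
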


\begin{proof}
  We first derive by RUP for every $s$ with some $\langle s,g,h\rangle\in\open$ that
  (a) $(r_s^h \land \rtrans )\rightarrow r'_{\astar}$: From the assumption
  $\overline{r'_{\astar}}\geq 1$, we get $\overline{{r'}_s^h}\geq 1$. Using the
  inductivity lemma from the heuristic for $s$ with the other assumptions
  $r_s^h\geq 1$ and $\rtrans\geq 1$, we get the contradiction.
  
  For every state $s$ expanded with cost $g$, we get
  (b) $(\rstateming{s}{g} \land \rtrans )\rightarrow r'_{\astar}$ as described
  in Lemma~\ref{lemma:expandedtransition}.

  Afterwards RUP can derive the constraint, assuming (c) $r_{\astar}\geq 1$,
  (d) $\rtrans\geq 1$, and (e) $\overline{r'_{\astar}}\geq 1$.

  For every open state $s$, we get from (a,d,e) that (f) $\overline{r_s^h}\geq 1$.
  For every state $s$ closed with g-value $g$, we get from (b, d, e) that (g)
  $\overline{\rstateming{s}{g}}\geq 1$. With \eqref{eq:rastar}, we get from (f)
  and (g) that $\overline{r_{\astar}}\geq 1$, contradicting (c).
\end{proof}

To analyze the overhead of proof-logging $\astar$, we assume that each
proof from the heuristic is provided with only a constant-factor
overhead to the heuristic computation time.
Logging a single reification constraint \eqref{eq:stateming} is an operation that requires
time linear in $|\variables|$.
A constraint of this kind has to be logged for each closed state.
This is still a constant-factor overhead because the closed state has to be generated first,
which is an operation with time linear in $|\variables|$, too.
The reification of \eqref{eq:rastar} is linear in the number of states generated.
Both the goal lemma and the initial state lemma 
are logged by single RUP statements of constant size.

In the proof of the inductivity lemma we first use as many constant-size statements as there are open states.
The second RUP subproof in the inductivity lemma requires the constraint from
  Lemma~\ref{lemma:expandedtransition}
   specified on each state $s\in\closed$, which in turn requires the constraint from 
   Lemma~\ref{lemma:expandedaction} specified on each applicable
   action in $s$.
This amortizes with the generation of the successors of $s$ when $s$ moves from $\open$ to $\closed$. 
We see that in total there is a constant-factor overhead to the proof-logging of $\astar$.

\subsection{Proof-Logging Pattern Database Heuristics}

Abstraction heuristics such as \emph{pattern database (PDB) heuristics} use
goal distances in an induced abstract task for the heuristic estimates.
Let $\Pi = \langle \variables, \actions, \init, \goal\rangle$ be a STRIPS
planning task.  A PDB heuristic is defined in terms of a \emph{pattern}
$P\subseteq \variables$ and its abstraction function $\alpha$ maps each state
$s$ over $\variables$ to an abstract state $\alpha(s)$ over $P$ as $\alpha(s)
= s\cap P$.  Each action $a\in\actions$ induces the abstract action $\aalpha$
with $\pre(\aalpha) = \pre(a)\cap P$, $\add(\aalpha) = \add(a)\cap P$,
$\del(\aalpha) = \del(a)\cap P$, and $\cost(\aalpha) = \cost(a)$. The abstract
goal $\goal^\alpha$ is $\goal\cap P$. The heuristic estimate of the PDB for
state $s$ is the cost of an optimal solution of the abstract task
$\langle P, \{a^\alpha\mid a\in \actions\}, \alpha(s), \goal^{\alpha}\rangle$
or $\infty$ if it is unsolvable. This is a lower bound of the goal distance of
$s$ in $\Pi$, as any solution of the concrete task corresponds to a solution of
the abstract task with equal cost.
In practice, a PDB heuristic does not build an abstract task for each heuristic
evaluation but precomputes the abstract goal distances $d(\pdbstate)$ for all
abstract states $\pdbstate$ and stores them in a so-called pattern database. When
a concrete state $s$ is evaluated, the PDB heuristic computes $\alpha(s)$ and
returns the stored goal distance $d(\alpha(s))$ as the heuristic estimate $h(s)$.

The invariant for a PDB heuristic should hold for all pairs $\langle s,c\rangle$
such that the abstract goal distance $d$ of $\alpha(s)$ is already so high
that $c+d\geq B$, and thus it is impossible to reach the goal from $s$ with
a strictly lower cost than $B$ if reaching $s$ already costs $c$.

Let $S_\alpha$ be the set of all abstract states.
For each $\pdbstate \in S_\alpha$, the heuristic introduces two PB variables.
The variable $\rpdbstate{\pdbstate}$ is true iff the variables from $\variables$ encode
a state $s$ with $\alpha(s) = \pdbstate$ by adding a reification
\begin{equation}
 \rpdbstate{\pdbstate}\Leftrightarrow \sum\nolimits_{v\in \pdbstate}v
+  \sum\nolimits_{v\in P\setminus\pdbstate}\bar{v}\geq
|P|.\label{eq:rinefficientpdbstate}
\end{equation}
The variable
$\rpdbstategeg{\pdbstate}$ is true iff the variables from $\variables$ and
$\cvars$ encode a pair $\langle s,c\rangle$ such that $\alpha(s) =\pdbstate$ and $c\geq
\max\{B-d(\pdbstate),0\}$ by adding reification
\begin{equation}
 \rpdbstategeg{\pdbstate}\Leftrightarrow \rpdbstate{\pdbstate}
+ \mmgecost{B-d({\pdbstate})}\geq 2.\label{eq:rinefficientpdbstategeg}
\end{equation}
As a final reification for the invariant the heuristic adds
\begin{equation}
r_{\textup{PDB}} \Leftrightarrow \sum\nolimits_{\pdbstate\in S_{\alpha}}
\rpdbstategeg{\pdbstate} \geq 1.
\label{eq:rinefficientPDB}
\end{equation}

The PB circuit for any state then is $\langle\Hpdb, r_{\textup{PDB}}\rangle$, where $\Hpdb$ is the
sequence of all reifications \eqref{eq:rinefficientpdbstate}, \eqref{eq:rinefficientpdbstategeg} and \eqref{eq:rinefficientPDB}.
On the evaluation of a state $s$, the heuristic always returns reification variable
$r_{\textup{PDB}}$ to the search. In the following we discuss how the required
proofs can be generated.
This generation relies on the correctness and admissibility of the PDB heuristic,
but the generated proofs themselves do not.

We begin with the state lemma, which requires a new proof for each evaluated
state.

\begin{restatable}{lemma}{pdbstatelemma}
RUP can derive the state lemma\\
 $(r_s\land \gecost{\max\{0,B-h(s)\}})
    \rightarrow r_{\textup{PDB}}$ from $\mathcal C_s\cup \Hpdb\cup\cgeq\cup\ccost$,
where $\mathcal C_s = \{r_s
\Leftrightarrow \sum_{v\in s} v + \sum_{v\in\variables\setminus s}\bar v \geq
|\variables|\}$.
\label{lemma:inefficientPDBstatelemma}
\end{restatable}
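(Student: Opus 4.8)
The goal is to show that under the assumptions $r_s \geq 1$ and $\gecost{\max\{0, B - h(s)\}} \geq 1$, together with the negation $\overline{r_{\textup{PDB}}} \geq 1$, reverse unit propagation reaches a conflict. The plan is to trace through the reifications that define $r_{\textup{PDB}}$ and exhibit, for the particular abstract state $\alpha(s)$, that its associated variable $\rpdbstategeg{\alpha(s)}$ must be $1$, which forces $r_{\textup{PDB}} \geq 1$ by Eq.~\eqref{eq:rinefficientPDB} and contradicts the negation.

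\textbf{Key steps.} First, from $r_s \geq 1$ and the reification $\mathcal C_s$, unit propagation gives $v \geq 1$ for every $v \in s$ and $\overline v \geq 1$ for every $v \in \variables \setminus s$; in particular this pins down the values of all pattern variables $v \in P$ according to $s \cap P = \alpha(s)$. Combined with the reification \eqref{eq:rinefficientpdbstate} for the abstract state $\pdbstate = \alpha(s)$, this propagates $\rpdbstate{\alpha(s)} \geq 1$. Second, observe that $h(s) = d(\alpha(s))$ by definition of the PDB heuristic, so $\max\{0, B - h(s)\} = \max\{0, B - d(\alpha(s))\}$, and the assumption $\gecost{\max\{0,B-h(s)\}} \geq 1$ is exactly $\gecost{\min\{B,\max\{0,B-d(\alpha(s))\}\}} \geq 1$ (using $B - d(\alpha(s)) \leq B$); via the reification \eqref{eq:mmgecost} in $\ccost$ this gives $\mmgecost{B - d(\alpha(s))} \geq 1$. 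Third, feeding $\rpdbstate{\alpha(s)} \geq 1$ and $\mmgecost{B - d(\alpha(s))} \geq 1$ into \eqref{eq:rinefficientpdbstategeg} propagates $\rpdbstategeg{\alpha(s)} \geq 1$. Finally, \eqref{eq:rinefficientPDB} then propagates $r_{\textup{PDB}} \geq 1$, contradicting the assumed $\overline{r_{\textup{PDB}}} \geq 1$.

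\textbf{Main obstacle.} The only delicate point is the bookkeeping around the clipped-cost placeholder variable: one must check that $B - h(s)$ falls in the regime where $\min\{B, \max\{0, B - h(s)\}\} = \max\{0, B - h(s)\}$ (which holds because $h(s) = d(\alpha(s)) \geq 0$ implies $B - h(s) \leq B$), so that the $\gecost{\cdot}$ appearing in the statement's antecedent really is the one that the reification for $\mmgecost{B - d(\alpha(s))}$ in $\ccost$ refers to, making the silent substitution from $\gecost{\cdot}$ to $\mmgecost{\cdot}$ legitimate. Everything else is a straightforward chain of unit propagations through three reification gates, so no cutting-planes arithmetic beyond RUP is needed. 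One should also note the degenerate case $h(s) = \infty$ (the abstract task unsolvable from $\alpha(s)$): then the antecedent $\gecost{\max\{0, B - h(s)\}} = \gecost{0}$ is trivially true, $B - d(\alpha(s))$ is treated as $\leq 0$ so $\mmgecost{B - d(\alpha(s))} = \gecost{0}$ propagates immediately, and the argument goes through unchanged.
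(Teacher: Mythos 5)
Your proof is correct and follows essentially the same route as the paper's: propagate the pattern variables from $\mathcal C_s$ to get $\rpdbstate{\alpha(s)}$, use $h(s)=d(\alpha(s))$ to match the cost assumption with $\mmgecost{B-d(\alpha(s))}$ (a substitution the paper performs silently via \eqref{eq:mmgecost}), then chain through \eqref{eq:rinefficientpdbstategeg} and \eqref{eq:rinefficientPDB} to contradict $\overline{r_{\textup{PDB}}}\geq 1$. The extra remarks on the clipping bookkeeping and the $h(s)=\infty$ case are harmless additions not present in the paper's proof.
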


Since the PB circuit for the heuristic is the same for all evaluated states, we
only need to include the proof for the goal lemma and for the inductivity lemma
once in the overall generated proof. For the goal lemma, we use the following:

\begin{restatable}{lemma}{pdbgoallemma}
RUP can derive the goal lemma\\
$(\rgoal\land{r_{\textup{PDB}}})\rightarrow  {\gecost{B}}$ from $\cgoal\cup \Hpdb\cup\cgeq\cup\ccost$.
\label{lemma:inefficientPDBgoallemma}
\end{restatable}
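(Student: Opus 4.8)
The plan is to establish the lemma as a single RUP step, in the style of the earlier $\astar$ proofs: assume the negation of the conclusion together with $\cgoal\cup\Hpdb\cup\cgeq\cup\ccost$ and unit-propagate to a conflict. Negating $(\rgoal\land r_{\textup{PDB}})\rightarrow\gecost{B}$ yields the three unit assumptions $\rgoal\geq 1$, $r_{\textup{PDB}}\geq 1$, and $\overline{\gecost{B}}\geq 1$. From $\rgoal\geq 1$ and reification \eqref{eq:rgoal} we propagate $v\geq 1$ for every $v\in\goal$, and from $r_{\textup{PDB}}\geq 1$ and \eqref{eq:rinefficientPDB} we propagate $\sum_{\pdbstate\in S_{\alpha}}\rpdbstategeg{\pdbstate}\geq 1$. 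The conflict will then follow once we have propagated $\overline{\rpdbstategeg{\pdbstate}}\geq 1$ for every abstract state $\pdbstate\in S_{\alpha}$, since that falsifies the last constraint.

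To obtain $\overline{\rpdbstategeg{\pdbstate}}\geq 1$ for each $\pdbstate$ I split on whether $\pdbstate$ is an abstract goal state, i.e.\ whether $\goal\cap P\subseteq\pdbstate$. If it is not, pick $v\in(\goal\cap P)\setminus\pdbstate$; we have already propagated $v\geq 1$, and since $v\in P\setminus\pdbstate$ the reification \eqref{eq:rinefficientpdbstate} (its $\rpdbstate{\pdbstate}\Rightarrow\cdots$ half) propagates $\overline{\rpdbstate{\pdbstate}}\geq 1$, which in turn propagates $\overline{\rpdbstategeg{\pdbstate}}\geq 1$ via \eqref{eq:rinefficientpdbstategeg}. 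If $\pdbstate$ is an abstract goal state, then $d(\pdbstate)=0$, so the variable appearing in \eqref{eq:rinefficientpdbstategeg} for this $\pdbstate$ is $\mmgecost{B}$; by the reification \eqref{eq:mmgecost} (for $l=B$) contained in $\ccost$, $\mmgecost{B}$ coincides with $\gecost{B}$, so from $\overline{\gecost{B}}\geq 1$ we propagate $\overline{\mmgecost{B}}\geq 1$, and then \eqref{eq:rinefficientpdbstategeg} propagates $\overline{\rpdbstategeg{\pdbstate}}\geq 1$. This covers all abstract states (including the degenerate case $P=\emptyset$, whose single abstract state is a goal state with $d=0$), and the conflict follows.

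The conceptual point is mild and is the observation in the second case: once the concrete state satisfies the goal, every abstract state that could witness $r_{\textup{PDB}}$ must itself be an abstract goal state, whose recorded cost threshold is exactly $B$ --- the bound that the negated conclusion denies. The part that needs care is purely mechanical: checking that each step above is a genuine unit propagation on the normalized forms of \eqref{eq:rgoal}, \eqref{eq:rinefficientpdbstate}, \eqref{eq:rinefficientpdbstategeg}, \eqref{eq:rinefficientPDB} and \eqref{eq:mmgecost}. The only slightly delicate one is \eqref{eq:rinefficientpdbstate}: falsifying the single literal $\bar v$ leaves the remaining right-hand-side coefficients summing to $|P|-1<|P|$, which is just tight enough to force $\overline{\rpdbstate{\pdbstate}}$; all other propagations have comfortable slack. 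I expect this bookkeeping, rather than any genuine difficulty, to be the main work.
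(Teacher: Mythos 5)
Your proof is correct and follows essentially the same route as the paper's: assume the negated implication, propagate the goal variables, case-split each abstract state on whether it is an abstract goal state (using \eqref{eq:rinefficientpdbstate} and \eqref{eq:rinefficientpdbstategeg} in one case and $d(\pdbstate)=0$ with $\overline{\gecost{B}}$ in the other), and conclude the conflict via \eqref{eq:rinefficientPDB}. Your extra care about the $\mmgecost{B}$-to-$\gecost{B}$ reification in $\ccost$ and the $P=\emptyset$ edge case only makes explicit what the paper handles implicitly.
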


For the inductivity lemma, we develop the derivation by means of four lemmas. The
first one derives that applying an induced abstract action
$\aalpha$ in abstract state $\pdbstate$ leads to the abstract successor state.

\begin{restatable}{lemma}{pdbinductivitylemmaone}
For each action $a\in \actions$ and abstract state $\pdbstate$ such that $\aalpha$
is applicable in $\pdbstate$, RUP can derive\\
$(\rpdbstate{\pdbstate}\land r_a)\rightarrow
  \rpdbprimedstate{\app{\pdbstate}{a^\alpha}}$ from $\ctrans\cup \Hpdb\cup \Hpdb'\cup\cgeq\cup\ccost.$
\label{lemma:inefficientPDB1}
\end{restatable}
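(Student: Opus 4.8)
The claim is that, whenever the abstract action $\aalpha$ is applicable in abstract state $\pdbstate$, we can derive $(\rpdbstate{\pdbstate}\land r_a)\rightarrow \rpdbprimedstate{\app{\pdbstate}{a^\alpha}}$ by RUP. The plan is to negate the consequent and derive a conflict by unit propagation. Concretely, I would assume (a) $\rpdbstate{\pdbstate}\geq 1$, (b) $r_a\geq 1$, and (c) $\overline{\rpdbprimedstate{\app{\pdbstate}{a^\alpha}}}\geq 1$, and show the PB constraints in $\ctrans\cup \Hpdb\cup \Hpdb'$ propagate to a conflict.

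First I would unpack (a) via the reification \eqref{eq:rinefficientpdbstate}: it forces $v\geq 1$ for all $v\in\pdbstate$ and $\overline v\geq 1$ for all $v\in P\setminus\pdbstate$. Next, from (b) and the action constraint \eqref{eq:actionvar}, since that constraint encodes a conjunction, I can read off $v'\geq 1$ for each $v\in\add(a)$, $\overline{v'}\geq 1$ for each $v\in\del(a)$, and $\eqvar v\geq 1$ for each $v\in\variables\setminus\effvars(a)$; the last of these, together with the definitions in \eqref{eq:eqvars}, propagates $v'$ to the same value as $v$ for all unaffected variables. The key bookkeeping step is then to combine these facts to determine the value of each primed variable $v'$ for $v\in P$ and check it matches membership in $\app{\pdbstate}{a^\alpha}=(\pdbstate\setminus\del(a^\alpha))\cup\add(a^\alpha)$: for $v\in\add(a)\cap P=\add(a^\alpha)$ we have $v'=1$ and $v\in\app{\pdbstate}{a^\alpha}$; for $v\in\del(a)\cap P=\del(a^\alpha)$ we have $v'=0$ and $v\notin\app{\pdbstate}{a^\alpha}$; and for $v\in P\setminus\effvars(a)$ we have $v'=v$, which is $1$ iff $v\in\pdbstate$ iff $v\in\app{\pdbstate}{a^\alpha}$. (Here I would use that $\del(a)\cap\add(a)=\emptyset$ and that $\del(a^\alpha),\add(a^\alpha)\subseteq P$, so the cases are disjoint and exhaustive on $P$.) Hence the primed variables restricted to $P$ encode exactly $\app{\pdbstate}{a^\alpha}$, which by the primed copy of \eqref{eq:rinefficientpdbstate} in $\Hpdb'$ propagates $\rpdbprimedstate{\app{\pdbstate}{a^\alpha}}\geq 1$, contradicting (c).

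The main obstacle I anticipate is purely notational rather than mathematical: one must be careful that the action constraint \eqref{eq:actionvar} only pins down $v'$ directly for $v\in\add(a)\cup\del(a)$ and leaves the rest to the $\eqvar v$ machinery of \eqref{eq:eqvars}, so the argument that every $v\in P$ gets a definite primed value has to route through $\geqvar v$ and $\leqvar v$ for the unaffected variables. There is also a subtlety that the applicability hypothesis ``$\aalpha$ applicable in $\pdbstate$'' — i.e.\ $\pre(a^\alpha)=\pre(a)\cap P\subseteq\pdbstate$ — is what makes $\app{\pdbstate}{a^\alpha}$ well-defined as the target of the reification; it is not actually needed to force the primed-variable values, since \eqref{eq:actionvar} does that unconditionally once $r_a$ is assumed. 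I would keep the bulk of this casework implicit, as the paper does for its other technical lemmas, and note it is discharged in the appendix.
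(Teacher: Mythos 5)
Your proposal matches the paper's own proof essentially step for step: assume the two antecedent literals and the negated consequent, unit-propagate the state literals from \eqref{eq:rinefficientpdbstate}, the effect and $\eqvar{v}$ literals from \eqref{eq:actionvar} and \eqref{eq:eqvars}, conclude that the primed variables restricted to $P$ encode $\app{\pdbstate}{a^\alpha}$, and obtain the conflict via the primed copy of \eqref{eq:rinefficientpdbstate} in $\Hpdb'$. The argument is correct and no different in substance from the paper's.
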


The next lemma again considers individual actions and abstract states, but
takes cost into account and derives that the invariant of the certificate is
true for the successor state-cost pair.
\begin{lemma}
For each action $a\in \actions$ and abstract state $\pdbstate$ such that $\aalpha$
is applicable in $\pdbstate$, it is possible to derive
$(\rpdbstategeg{\pdbstate}\land r_a)\rightarrow
  r'_{\textup{PDB}}$ from $\ctrans\cup \Hpdb\cup \Hpdb'\cup\cgeq\cup\ccost$.
\label{lemma:inefficientPDB2}
\end{lemma}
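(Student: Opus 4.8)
The plan is to chain Lemma~\ref{lemma:inefficientPDB1} with Lemma~\ref{lemma:costs3} via a RUP argument, exactly mirroring how Lemma~\ref{lemma:expandedaction} combined a transition step with a cost-bookkeeping step. First I would set up the contrapositive: assume (a) $\rpdbstategeg{\pdbstate}\geq 1$, (b) $r_a\geq 1$, and (c) $\overline{r'_{\textup{PDB}}}\geq 1$, and aim to derive a conflict. From (a) and reification \eqref{eq:rinefficientpdbstategeg} we get $\rpdbstate{\pdbstate}\geq 1$ and $\mmgecost{B-d(\pdbstate)}\geq 1$, i.e.\ $\gecost{\max\{0,B-d(\pdbstate)\}}\geq 1$ after the implicit switch via \eqref{eq:mmgecost}. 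With $\rpdbstate{\pdbstate}\geq 1$, (b), and Lemma~\ref{lemma:inefficientPDB1} we obtain $\rpdbprimedstate{\app{\pdbstate}{a^\alpha}}\geq 1$.

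Next I would handle the cost. Using (b) with the action reification \eqref{eq:actionvar} gives $\deltacost{\cost(a)}\geq 1$. Since $\cost(a^\alpha)=\cost(a)$ and the abstract action is applicable, admissibility of goal distances in the abstract task gives the triangle-inequality-style bound $d(\pdbstate)\leq d(\app{\pdbstate}{a^\alpha})+\cost(a)$, hence $B-d(\app{\pdbstate}{a^\alpha})\leq (B-d(\pdbstate))+\cost(a)$. Applying Lemma~\ref{lemma:costs3} with $l = B-d(\pdbstate)$ and $m=\cost(a)$ yields $\geprimedcost{\min\{B,\max\{0,(B-d(\pdbstate))+\cost(a)\}\}}\geq 1$, and by Lemma~\ref{lemma:costs1-variant} (monotonicity of clipped costs) this implies $\geprimedcost{\max\{0,B-d(\app{\pdbstate}{a^\alpha})\}}\geq 1$, i.e.\ $\mmgeprimedcost{B-d(\app{\pdbstate}{a^\alpha})}\geq 1$. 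Combining this with $\rpdbprimedstate{\app{\pdbstate}{a^\alpha}}\geq 1$ via the primed version of \eqref{eq:rinefficientpdbstategeg} gives $\rpdbprimedstategeg{\app{\pdbstate}{a^\alpha}}\geq 1$, and then the primed version of \eqref{eq:rinefficientPDB} forces $r'_{\textup{PDB}}\geq 1$, contradicting (c). Since all of these are single implications derivable by RUP from the listed constraint sets, the whole derivation is a short RUP argument (modulo the two technical cost lemmas).

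I would also need to address the case where $\app{\pdbstate}{a^\alpha}$ is a goal state of the abstract task, in which $d(\app{\pdbstate}{a^\alpha})=0$; this is not special --- the inequality $B-d(\app{\pdbstate}{a^\alpha})\leq (B-d(\pdbstate))+\cost(a)$ still holds --- but it is worth noting that the clipping in \eqref{eq:mmgecost} already makes $\mmgecost{\cdot}$ behave correctly for values outside $\{0,\dots,B\}$, so no boundary reasoning is required in the proof itself.

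The main obstacle is the admissibility inequality $d(\pdbstate)\leq d(\app{\pdbstate}{a^\alpha})+\cost(a^\alpha)$: this is the one place where we genuinely invoke the correctness of precomputed abstract goal distances rather than purely syntactic PB reasoning. It holds because prepending $a^\alpha$ to an optimal plan from $\app{\pdbstate}{a^\alpha}$ is a plan from $\pdbstate$ (valid since $a^\alpha$ is applicable in $\pdbstate$ by hypothesis), so the optimal cost from $\pdbstate$ is at most $\cost(a^\alpha)$ plus the optimal cost from $\app{\pdbstate}{a^\alpha}$. Once this inequality is in hand, everything else is the same bookkeeping pattern already used in Lemma~\ref{lemma:expandedaction}, so I do not expect further difficulty; the generated proof object is simply the invocations of Lemmas~\ref{lemma:costs1-variant}, \ref{lemma:costs3}, and~\ref{lemma:inefficientPDB1} followed by one RUP step.
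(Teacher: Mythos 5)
Your proposal is correct and matches the paper's proof essentially step for step: both establish the auxiliary cost constraints via Lemma~\ref{lemma:costs3} and Lemma~\ref{lemma:costs1-variant} (the latter on primed variables, justified by the admissibility inequality $d(\app{\pdbstate}{a^\alpha})+\cost(a)\geq d(\pdbstate)$), invoke Lemma~\ref{lemma:inefficientPDB1} for the abstract successor, and then close with a single RUP argument through the primed reifications \eqref{eq:rinefficientpdbstategeg} and \eqref{eq:rinefficientPDB}. No gaps.
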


\begin{proof}
We start by deriving some constraints over costs that we will use later in a RUP
proof.

First, we derive (a)
  $(\mmgecost{B-d(\pdbstate)}\land\deltacost{\cost(a)})\rightarrow\mmgeprimedcost{B-d(\pdbstate)+\cost(a)}$
as described in Lemma~\ref{lemma:costs3}.

We know that for the abstract goal distance it holds that
$d(\app{\pdbstate}{a^\alpha})+\cost(a^\alpha)\geq d(\pdbstate)$.
Together with $\cost(a^\alpha) =\cost(a)$, we get
$B-d(\app{\pdbstate}{a^\alpha})\leq B-d(\pdbstate) + \cost(a)$.
We use this to derive (b)
$\mmgeprimedcost{B-d(\pdbstate)+\cost(a)}
\rightarrow\mmgeprimedcost{B-d(\app{\pdbstate}{a^\alpha})}$ as described
in Lemma~\ref{lemma:costs1-variant}.
In addition, we establish (c) $(\rpdbstate{\pdbstate}\land r_a)\rightarrow
\rpdbprimedstate{\app{\pdbstate}{a^\alpha}}$ by RUP (Lemma~\ref{lemma:inefficientPDB1}).

Now we can derive the constraint in the claim by RUP, assuming that
(d) $\rpdbstategeg{\pdbstate}\geq 1$, (e) $r_a\geq 1$, and
(f) $\overline{r'_{\textup{PDB}}}\geq 1$.
From (d), we get with \eqref{eq:rinefficientpdbstategeg} that (g) $\rpdbstate{\pdbstate}\geq 1$
and (h) $\mmgecost{B-d({\pdbstate})}\geq 1$. From (e),(g) and (c) we
derive (i) $\rpdbprimedstate{\app{\pdbstate}{a^\alpha}}\geq 1$.

From (e) and \eqref{eq:actionvar}, we derive (j) $\deltacost{\cost(a)}\geq 1$.
From (a) together with (h) and (j) we derive that (k)
$\mmgeprimedcost{B-d(\pdbstate)+\cost(a)}\geq 1$.
From (b) together with (k) we derive that (l)
$\mmgeprimedcost{B-d(\app{\pdbstate}{a^\alpha})}\geq1$.
We get with this, (i) and the primed constraint
\eqref{eq:rinefficientpdbstategeg} from $\Hpdb$ that
$\rpdbprimedstategeg{\app{\pdbstate}{a^\alpha}}$. With the primed version of
\eqref{eq:rinefficientPDB} from $\Hpdb'$ this yields $r'_{\textup{PDB}}\geq 1$,
contradicting (f).
\end{proof}

The third lemma generalizes the previous lemma from individual actions to the
entire transition relation.
\begin{restatable}{lemma}{pdbinductivitylemmatwo}
For each abstract state $\pdbstate$ it is possible to derive
$(\rpdbstategeg{\pdbstate}\land \rtrans)
  \rightarrow r'_{\textup{PDB}}$ from $\ctrans\cup \Hpdb\cup \Hpdb'\cup\cgeq\cup\ccost$.
\label{lemma:inefficientPDB3}
\end{restatable}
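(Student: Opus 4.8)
The plan is to reduce Lemma~\ref{lemma:inefficientPDB3} to the per-action statement of Lemma~\ref{lemma:inefficientPDB2} via a RUP argument that case-splits on which action variable is responsible for the transition. First I would invoke Lemma~\ref{lemma:inefficientPDB2} to obtain, for every concrete action $a\in\actions$ whose induced abstract action $\aalpha$ is applicable in $\pdbstate$, the constraint $(\rpdbstategeg{\pdbstate}\land r_a)\rightarrow r'_{\textup{PDB}}$. These are the constraints already derived earlier in the proof log, so they are available as premises for the RUP step.

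Then the RUP proof proceeds from the negation: assume (a) $\rpdbstategeg{\pdbstate}\geq 1$, (b) $\rtrans\geq 1$, and (c) $\overline{r'_{\textup{PDB}}}\geq 1$. For every action $a$ with $\aalpha$ applicable in $\pdbstate$, unit propagation on the constraint from Lemma~\ref{lemma:inefficientPDB2} together with (a) and (c) forces $\overline{r_a}\geq 1$. For the remaining actions $a$ — those whose abstract action $\aalpha$ is \emph{not} applicable in $\pdbstate$ — some precondition variable $v\in\pre(\aalpha)=\pre(a)\cap P$ is false in $\pdbstate$; from (a) and reification \eqref{eq:rinefficientpdbstategeg} we get $\rpdbstate{\pdbstate}\geq 1$, hence by \eqref{eq:rinefficientpdbstate} that $v$ is false, so $v\in\variables\setminus\pdbstate$ with $\overline v\geq 1$, and then \eqref{eq:actionvar} (which contains $v$ as a conjunct since $v\in\pre(a)$) yields $\overline{r_a}\geq 1$. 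Thus $\overline{r_a}\geq 1$ for \emph{every} $a\in\actions$, and reification \eqref{eq:rtrans} then forces $\overline{\rtrans}\geq 1$, contradicting (b).

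The one subtlety to be careful about is the case of an action $a$ that deletes some variable from $P$ that it does not have as a precondition: such an action may have $\aalpha$ applicable in $\pdbstate$ (so it falls under Lemma~\ref{lemma:inefficientPDB2}) or not, but the dichotomy ``$\aalpha$ applicable in $\pdbstate$ or some precondition of $a$ in $P$ is violated in $\pdbstate$'' is exactly complementary, so every action is covered by precisely one of the two branches. I do not expect a serious obstacle here; the main thing to check is that the precondition variable witnessing inapplicability of $\aalpha$ in $\pdbstate$ is indeed one of the literals appearing as a conjunct in constraint \eqref{eq:actionvar} for the concrete action $a$ — which holds because $\pre(\aalpha)\subseteq\pre(a)$ — so the argument mirroring the inapplicable-action case in the proof of Lemma~\ref{lemma:expandedtransition} goes through verbatim.
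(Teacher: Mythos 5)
Your proposal is correct and follows essentially the same route as the paper's own proof: invoke Lemma~\ref{lemma:inefficientPDB2} for every action whose abstract counterpart is applicable in $\pdbstate$, then a single RUP step that derives $\overline{r_a}\geq 1$ for applicable actions via those constraints and for inapplicable ones via the violated precondition in $P$ (extracted through \eqref{eq:rinefficientpdbstategeg}, \eqref{eq:rinefficientpdbstate} and \eqref{eq:actionvar}), finishing with \eqref{eq:rtrans} to contradict $\rtrans$. The dichotomy you flag as a subtlety is exactly the case split the paper uses, so no gap remains.
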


\begin{proof}[Proof sketch]
  For each action $a\in \actions$ that is applicable in $\pdbstate$,
  establish $(\rpdbstategeg{\pdbstate}\land r_a)\rightarrow
  r'_{\textup{PDB}}$ with Lemma~\ref{lemma:inefficientPDB2}.
  Then the desired constraint can be derived by RUP.
\end{proof}

The final step for the inductivity lemma generalizes this from individual
abstract states to the entire PB circuit of the heuristic.
\begin{restatable}{lemma}{pdbinductivitylemma}
  It is possible to derive the inductivity lemma $(r_{\textup{PDB}} \land\rtrans)\rightarrow
  r'_{\textup{PDB}}$ from $\ctrans\cup \Hpdb\cup
  \Hpdb'\cup\cgeq\cup\ccost$.
  \label{lemma:inefficientPDBinductivitylemma}
\end{restatable}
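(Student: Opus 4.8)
The plan is to prove this by RUP after first specializing Lemma~\ref{lemma:inefficientPDB3} to every abstract state. Concretely, the first step is: for each $\pdbstate\in S_{\alpha}$, derive the constraint $(\rpdbstategeg{\pdbstate}\land \rtrans)\rightarrow r'_{\textup{PDB}}$ using Lemma~\ref{lemma:inefficientPDB3}. Since the pattern $P$ is finite, $S_{\alpha}$ is finite, so this adds only finitely many constraints to the proof log.

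Then I would derive the inductivity lemma by RUP. Assume (a)~$r_{\textup{PDB}}\geq 1$, (b)~$\rtrans\geq 1$, and (c)~$\overline{r'_{\textup{PDB}}}\geq 1$. For each abstract state $\pdbstate$, the specialized constraint from the first step together with (b) and (c) unit-propagates $\overline{\rpdbstategeg{\pdbstate}}\geq 1$. With all these literals set, the reification~\eqref{eq:rinefficientPDB} --- more precisely its direction $r_{\textup{PDB}}\Rightarrow\sum_{\pdbstate\in S_{\alpha}}\rpdbstategeg{\pdbstate}\geq 1$ --- forces $\overline{r_{\textup{PDB}}}\geq 1$, contradicting (a). This conflict establishes the implication by RUP.

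The main obstacle --- and it is a mild one --- is bookkeeping rather than mathematics: one has to make sure that the constraints from Lemma~\ref{lemma:inefficientPDB3} are available for \emph{all} of $S_{\alpha}$, since the RUP step needs to eliminate every term of the disjunction in~\eqref{eq:rinefficientPDB}. Because the construction in this section deliberately materializes a reification for every abstract state, this causes no difficulty here, and the argument is a direct analogue of the final RUP step in the proof of Lemma~\ref{lemma:astarinductivity}, with $\rpdbstategeg{\pdbstate}$ playing the role of the per-state invariant variables.
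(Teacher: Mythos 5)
Your proof is correct and matches the paper's own argument: the paper likewise first instantiates Lemma~\ref{lemma:inefficientPDB3} for every abstract state $\pdbstate$ and then finishes with a single RUP step that falsifies all $\rpdbstategeg{\pdbstate}$ and uses reification \eqref{eq:rinefficientPDB} to contradict $r_{\textup{PDB}}\geq 1$. No gaps; the bookkeeping point about covering all of $S_\alpha$ is handled exactly as you describe.
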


\begin{proof}[Proof sketch]
  Establish 
  $(\rpdbstategeg{\pdbstate}\land \rtrans) \rightarrow r'_{\textup{PDB}}$
  for each abstract state $\pdbstate$ 
  by Lemma~\ref{lemma:inefficientPDB3}. The rest follows by RUP.
\end{proof}

We considered PDB heuristics as one example of abstractions because they have
a particularly simply structured abstraction function. To adapt the approach to
other abstraction heuristics, the overall line of argument still works but we
would have to replace \eqref{eq:rinefficientpdbstate} with one or more reifications that
allow to identify the concrete states that correspond to a given abstract
state. In addition, the derivations from Lemmas~\ref{lemma:inefficientPDBstatelemma},
\ref{lemma:inefficientPDBgoallemma} and \ref{lemma:inefficientPDB1} need to be adapted
accordingly.

The described proof logging only has a constant-factor overhead for a somewhat
naive implementation of PDBs. While the state and goal lemma only require
a single RUP statement, the inductivity lemma requires effort in
$O(|S_\alpha||\actions|)$, iterating once over all actions for each abstract
state. A better PDB implementation can deal more efficiently with abstract
states that are not reverse-reachable.
In the
\ifthenelse{\boolean{icaps}}{extended version of this paper}{appendix}
we describe a variant of the certificate that
does not explicitly represent such states and which guarantees
a constant-factor overhead also for such a more efficient computation of PDBs.

\subsection{Proof-Logging \hmax}

The maximum heuristic \cite{bonet-geffner-aij2001} is based on
a relaxation of the planning task that makes two simplifying assumptions:
first, actions do not delete variables; second, the cost of making a set of
variables true corresponds to the cost of making its most expensive variable
true; since action preconditions are sets of variables, the relaxation affects
the cost of enabling action applications, which in turn affects the cost to
make the individual variables true. For this reason, the heuristic is
mathematically specified by a system of equations.
The maximum heuristic $\hmax(s)$ is $\hmax(s,G)$, where $\hmax(s,V)$ is the
pointwise greatest function with $\hmax(s,V) =\\
\begin{cases}
  0 &\text{if $V\subseteq s$}\\
  \min\limits_{\substack{a\in \actions,\\v\in \add(a)}}(\cost(a)
    + \hmax(s,\pre(a)))
    &\stackunder{\text{if $|V|=1$}}
                {\text{and $V\not\subseteq s$}}\\
  \max\limits_{v\in V}\hmax(s,\{v\}) & \text{otherwise}
\end{cases}$

For $v\in \variables$, the \emph{max value} is $\vmax(v) = \hmax(s,\{v\})$,
which can be seen as the cost of making $v$ true (starting from $s$) under this
relaxation. The heuristic certificate will build on the following insight:
since $\hmax$ is admissible, it will be impossible to reach the goal from $s$
with overall cost $< B$ if the cost to reach $s$ already exceeds $B
- \hmax(s)$. If in a state $\hat s$ additional variables are true, the state
can be closer to the goal but under the relaxation of the heuristic the
advantage gained from an individual variable $v$ will be at most $\vmax(v)$, so
for arbitrary non-empty states $\hat s$, the goal cannot be reached if the cost
to reach $\hat s$ already exceeds $B - \hmax(s) + \max_{v\in \hat s}\vmax(v)$
(note that $\vmax(v) = 0$ for $v\in s$), even under the relaxation of the
heuristic. In the delete-relaxed task, the empty state cannot be closer to the
goal than $s$, so we can use bound $B - \hmax(s)$.

A typical efficient implementation of the maximum heuristic only
determines the max value for all $v$ with $\vmax(v) < \hmax(s)$. We can still
build a valid heuristic certificate, requiring that the cost $B$ has already
been exceeded for states that contain a variable for which
$\vmax(v)\geq\hmax(s)$. For the formal definition of the certificate, we will
handle this aspect by an auxiliary value $\boundedvmax(v)
= \min\{\vmax(v),\hmax(s)\}$.

For each $v\in \variables$, the heuristic adds a reification:
\begin{equation}
  r_{v,s}\Leftrightarrow \bar
  v + \mmgecost{B-\hmax(s)+\boundedvmax(v)} \geq 1
\label{eq:rmaxv}
\end{equation}

In addition, it adds the following reification:

\begin{equation}
\rmax_s \Leftrightarrow \mmgecost{B-\hmax(s)} + \sum_{v\in \variables} r_{v,s} \geq
  |\variables| + 1
\label{eq:rmaxs}
\end{equation}

The PB circuit for state $s$ then is $\langle\Hmax, \rmax_s\rangle$, where $\Hmax$ is the
sequence of all reifications \eqref{eq:rmaxv} and \eqref{eq:rmaxs}.

Intuitively, \eqref{eq:rmaxv} expresses the implication that the cost is at least
$B-\hmax(s)+\boundedvmax(v)$ if $v$ is true. Equation \eqref{eq:rmaxs}
corresponds to a conjunction of $\mmgecost{B-\hmax(s)}$ and such implications
for each variable, so overall $\rmax_s$ will be true for all state value pairs
$\langle \hat s, \hat c\rangle$, where $\hat s\neq\emptyset$ and $\hat c\geq
\max_{v\in \hat s}\{B-\hmax(s)+\boundedvmax(v)\}$ or where $\hat s=\emptyset$
and $\hat c\geq B-\hmax(s)$.

\begin{restatable}{lemma}{hmaxstatelemma}
RUP can derive the state lemma\\
$(r_s\land \gecost{\max\{0,B-\hmax(s)\}})
    \rightarrow \rmax_s$ from $\mathcal C_s\cup \Hmax\cup\cgeq\cup\ccost$,
where $\mathcal C_s = \{r_s
\Leftrightarrow \sum_{v\in s} v + \sum_{v\in\variables\setminus s}\bar v \geq
|\variables|\}$.
\label{lemma:hmaxstate}
\end{restatable}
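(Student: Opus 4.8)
The plan is to spell out the reverse unit propagation derivation explicitly. The target $(r_s\land\gecost{\max\{0,B-\hmax(s)\}})\rightarrow\rmax_s$ abbreviates the PB constraint $\overline{r_s}+\overline{\gecost{\max\{0,B-\hmax(s)\}}}+\rmax_s\geq 1$, so adding its negation to $\mathcal C_s\cup\Hmax\cup\cgeq\cup\ccost$ for the RUP check immediately propagates $r_s\mapsto 1$, $\gecost{\max\{0,B-\hmax(s)\}}\mapsto 1$, and $\rmax_s\mapsto 0$; everything else is forced from these three literals.

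First I would propagate through $\mathcal C_s$: its $r_s\Rightarrow\cdots$ half together with $r_s\mapsto 1$ forces the state variables to encode exactly $s$, i.e.\ $v\mapsto 1$ for all $v\in s$ and $v\mapsto 0$ for all $v\in\variables\setminus s$. Next, using the clipping reification \eqref{eq:mmgecost} from $\ccost$ (note that $\max\{0,B-\hmax(s)\}$ lies in $\{0,\dots,B\}$, so the clip is the identity), the literal $\gecost{\max\{0,B-\hmax(s)\}}\mapsto 1$ propagates $\mmgecost{B-\hmax(s)}\mapsto 1$.

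The central step is to derive $r_{v,s}\mapsto 1$ for every $v\in\variables$ via reification \eqref{eq:rmaxv}. For $v\in\variables\setminus s$ this is immediate: with $\bar v\mapsto 1$ the body $\bar v+\mmgecost{B-\hmax(s)+\boundedvmax(v)}\geq 1$ is already satisfied, so its $\Leftarrow$ half propagates $r_{v,s}\mapsto 1$ irrespective of the cost literal. For $v\in s$ one first observes that $\vmax(v)=\hmax(s,\{v\})=0$ by the base case of the $\hmax$ recursion (since $\{v\}\subseteq s$), hence $\boundedvmax(v)=\min\{\vmax(v),\hmax(s)\}=0$; thus \eqref{eq:rmaxv} collapses to $r_{v,s}\Leftrightarrow\bar v+\mmgecost{B-\hmax(s)}\geq 1$, and since $\mmgecost{B-\hmax(s)}\mapsto 1$, the $\Leftarrow$ half again propagates $r_{v,s}\mapsto 1$. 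Finally, the $\rmax_s\Leftarrow\cdots$ half of \eqref{eq:rmaxs}, which up to normalization reads $\rmax_s+\overline{\mmgecost{B-\hmax(s)}}+\sum_{v\in\variables}\overline{r_{v,s}}\geq 1$, now has $\overline{\mmgecost{B-\hmax(s)}}$ and all $\overline{r_{v,s}}$ set to $0$, so it propagates $\rmax_s\mapsto 1$, contradicting $\rmax_s\mapsto 0$. That conflict completes the RUP derivation.

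I expect the only genuinely delicate point to be the $v\in s$ case, where one must invoke $\boundedvmax(v)=0$ (via the $\hmax$ base case giving $\vmax(v)=0$): without it the threshold $B-\hmax(s)+\boundedvmax(v)$ in \eqref{eq:rmaxv} would strictly exceed $B-\hmax(s)$, and the available fact $\gecost{\max\{0,B-\hmax(s)\}}$ would be too weak to propagate the corresponding $\mmgecost{\cdot}$ literal. The rest is routine bookkeeping of which direction ($\Rightarrow$ or $\Leftarrow$) of each reification is used and of the propagation order needed to make each constraint unit at the right moment.
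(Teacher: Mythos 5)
Your proposal is correct and follows essentially the same route as the paper's own proof: assume the negation, force the state variables via $\mathcal C_s$, derive $r_{v,s}\geq 1$ for $v\notin s$ from $\bar v$ and for $v\in s$ from $\boundedvmax(v)=\vmax(v)=0$ together with $\mmgecost{B-\hmax(s)}$ (which is assumption (b) under the clipping reification), and conclude $\rmax_s\geq 1$ from \eqref{eq:rmaxs}, contradicting the assumption. Your write-up is merely more explicit about the propagation order and which half of each reification fires, which the paper leaves implicit.
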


A single RUP statement is sufficient for the goal lemma:
\begin{restatable}{lemma}{hmaxgoallemma}
  RUP can derive the goal lemma\\
  $(\rgoal\land{\rmax_s})\rightarrow
  {\gecost{B}}$ from $\cgoal\cup \Hmax\cup\cgeq\cup\ccost$.
\end{restatable}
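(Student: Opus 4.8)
The plan is to derive the goal lemma $(\rgoal\land{\rmax_s})\rightarrow {\gecost{B}}$ by a single RUP step, mirroring the structure of the PDB goal lemma (Lemma~\ref{lemma:inefficientPDBgoallemma}). So I would assume, toward a contradiction, that (a) $\rgoal\geq 1$, (b) $\rmax_s\geq 1$, and (c) $\overline{\gecost{B}}\geq 1$, and show these unit-propagate to a conflict together with $\cgoal\cup \Hmax\cup\cgeq\cup\ccost$.

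The key observation is that a goal state is nonempty whenever $\goal\neq\emptyset$, so the relevant branch of the intuitive reading of $\rmax_s$ is the ``$\hat s\neq\emptyset$'' one, which forces the cost to be at least $B-\hmax(s)+\boundedvmax(v)$ for some goal variable $v$ — and since $\hmax(s)\le\hmax(s,\goal)$ in a way that makes $\boundedvmax(v)=\hmax(s)$ exactly recover the bound $B$, we get $\gecost{B}$. Concretely: from (a) and reification \eqref{eq:rgoal}, every $v\in\goal$ is true. If $\goal=\emptyset$ then $\hmax(s)=\hmax(s,\emptyset)=0$, so $B-\hmax(s)=B$ and \eqref{eq:rmaxs} with (b) already gives $\mmgecost{B}\geq 1$, i.e.\ $\gecost{B}\geq1$ via \eqref{eq:mmgecost}, contradicting (c). Otherwise pick any $v\in\goal$; from (b) and \eqref{eq:rmaxs} we get $r_{v,s}\geq 1$, and since $v$ is true, \eqref{eq:rmaxv} forces $\mmgecost{B-\hmax(s)+\boundedvmax(v)}\geq 1$. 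The heart of the argument is that $\hmax(s) = \hmax(s,\goal)\ge\hmax(s,\{v\})=\vmax(v)$ for $v\in\goal$ (by the $\max$ case of the defining equations, so the goal estimate dominates each singleton estimate), hence $\boundedvmax(v)=\min\{\vmax(v),\hmax(s)\}=\vmax(v)$, but more to the point we need $B-\hmax(s)+\boundedvmax(v)\ge$ something clipping to $B$; actually the cleaner route is: $\boundedvmax(v)\le\hmax(s)$ always, so $B-\hmax(s)+\boundedvmax(v)\le B$, and we instead need the reverse direction — here I should be careful, and the right statement is that since $v$ alone needs cost $\vmax(v)$ and the goal needs $\hmax(s)$, reaching the goal from a state already containing $v$ with spent cost $\hat c$ is impossible unless $\hat c\ge B-\hmax(s)+\vmax(v)$; clipping, the variable $\mmgecost{B-\hmax(s)+\boundedvmax(v)}$ being true means $\gecost{\min\{B,\max\{0,B-\hmax(s)+\boundedvmax(v)\}\}}\geq 1$, and to close the proof I need this clipped threshold to equal $B$, which holds iff $\boundedvmax(v)\ge\hmax(s)$, i.e.\ $\vmax(v)\ge\hmax(s)$.

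So the genuinely delicate point — and the one I expect to be the main obstacle — is reconciling the two cases for a goal variable $v$: either $\vmax(v)\ge\hmax(s)$, in which case $\boundedvmax(v)=\hmax(s)$ and $B-\hmax(s)+\boundedvmax(v)=B$, and we are done directly; or $\vmax(v)<\hmax(s)$ for \emph{every} $v\in\goal$, in which case no single variable suffices and one must instead invoke that $\hmax(s)=\hmax(s,\goal)=\max_{v\in\goal}\vmax(v)$, contradicting $\vmax(v)<\hmax(s)$ for all such $v$ (when $\goal\not\subseteq s$). Thus in fact there always exists $v^\star\in\goal\setminus s$ with $\vmax(v^\star)=\hmax(s)\ge\hmax(s)$ (if $\goal\subseteq s$ then $\hmax(s)=0$ and $B-\hmax(s)=B$ trivially), and $v^\star$ is the variable that drives the RUP conflict. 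With that case split in hand, applying \eqref{eq:rmaxv} to $v^\star$, then \eqref{eq:mmgecost} to identify $\mmgecost{B}$ with $\gecost{B}$, contradicts (c), which completes the unit-propagation chain.

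Everything else is routine bookkeeping with the reifications in $\Hmax$, $\cgeq$, and $\ccost$; the only nontrivial ingredient is the defining-equations fact $\hmax(s,\goal)=\max_{v\in\goal}\hmax(s,\{v\})$ when $|\goal|\ge 1$ and $\goal\not\subseteq s$, which guarantees that some goal variable realizes the full $\hmax(s)$ bound and hence its clipped cost threshold is exactly $B$.
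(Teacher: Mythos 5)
Your proposal is correct and follows essentially the same route as the paper's proof: the paper also argues by a single RUP conflict, splitting on $\goal=\emptyset$ (where $\hmax(s)=0$ and \eqref{eq:rmaxs} directly yields $\mmgecost{B}$, i.e.\ $\gecost{B}$) versus $\goal\neq\emptyset$, where it picks the goal variable with maximum max value, for which $\vmax(g)=\hmax(s)$ and hence $\boundedvmax(g)=\hmax(s)$, so \eqref{eq:rmaxv} clips to exactly $\gecost{B}$. Your identification of the key fact $\hmax(s,\goal)=\max_{v\in\goal}\vmax(v)$ (with the $\goal\subseteq s$ subcase handled via $\hmax(s)=0$) is precisely the ingredient the paper uses, just phrased with the propagation run in the opposite direction.
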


For the inductivity lemma, we first show an analogous statement for a single action.

\begin{restatable}{lemma}{hmaxinductivityaux}
For every action $a$, it is possible to derive\\
$(\rmax_s \land r_a)\rightarrow {\rmax_s}'$ from $\ctrans\cup \Hmax\cup \Hmax'\cup\cgeq\cup\ccost$.
  \label{lemma:hmaxaction}
\end{restatable}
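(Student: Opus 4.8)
The plan is to exploit that both $\rmax_s$ and $r_a$ abbreviate conjunctions. Assuming $\rmax_s\geq 1$, reification \eqref{eq:rmaxs} forces $\mmgecost{B-\hmax(s)}\geq 1$ and $r_{v,s}\geq 1$ for every $v\in\variables$; assuming $r_a\geq 1$, the conjunctive constraint \eqref{eq:actionvar} forces $\deltacost{\cost(a)}\geq 1$, the precondition literals $v\geq 1$ for $v\in\pre(a)$, the effect literals $v'\geq 1$ for $v\in\add(a)$ and $\overline{v'}\geq 1$ for $v\in\del(a)$, and the frame literals $\eqvar{v}\geq 1$ for $v\in\variables\setminus\effvars(a)$. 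Symmetrically, obtaining ${\rmax_s}'\geq 1$ reduces, via the $\Leftarrow$-part of the primed \eqref{eq:rmaxs}, to deriving $\mmgeprimedcost{B-\hmax(s)}\geq 1$ together with ${r_{v,s}}'\geq 1$ for every $v\in\variables$. So I would first derive the implications $(\rmax_s\land r_a)\rightarrow\mmgeprimedcost{B-\hmax(s)}$ and $(\rmax_s\land r_a)\rightarrow {r_{v,s}}'$ for each $v$, and then close the proof with one final RUP step.

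For the cost arithmetic I would proceed as in the proof of Lemma~\ref{lemma:inefficientPDB2}: combining Lemma~\ref{lemma:costs3} with Lemma~\ref{lemma:costs1-variant} (the latter applied to the primed cost bits, which is justified by the same argument because $\cgeq$ also contains the primed reifications \eqref{eq:geprimedcost}), for any $l\in\mathbb Z$, $m\in\mathbb N_0$ and any $l'\leq l+m$ one derives $(\mmgecost{l}\land\deltacost{m})\rightarrow\mmgeprimedcost{l'}$ from $\cgeq\cup\ccost$. I would instantiate this with $m=\cost(a)$ in three shapes: $l=l'=B-\hmax(s)$; $l=l'=B-\hmax(s)+\boundedvmax(v)$ for each frame variable $v\in\variables\setminus\effvars(a)$; and $l=B-\hmax(s)+c^*$, $l'=B-\hmax(s)+\boundedvmax(v)$ for each $v\in\add(a)$, where $c^*=\max\{0,\max_{u\in\pre(a)}\boundedvmax(u)\}$.

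I expect the main obstacle to be the inequality $\boundedvmax(v)\leq c^*+\cost(a)$ required by the third instantiation, because this is exactly where admissibility of $\hmax$ has to be used. It follows from the defining recursion: if $v\in s$ then $\vmax(v)=0$ and the claim is immediate; otherwise $\vmax(v)\leq\cost(a)+\hmax(s,\pre(a))$ with $\hmax(s,\pre(a))=\max_{u\in\pre(a)}\vmax(u)$ (the empty maximum being $0$). A short case analysis on whether $\hmax(s,\pre(a))$ exceeds $\hmax(s)$ shows $c^*=\min\{\hmax(s),\hmax(s,\pre(a))\}$, and hence $\boundedvmax(v)=\min\{\vmax(v),\hmax(s)\}\leq\min\{\cost(a)+\hmax(s,\pre(a)),\hmax(s)\}\leq\cost(a)+\min\{\hmax(s,\pre(a)),\hmax(s)\}=\cost(a)+c^*$. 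This interplay between the $\hmax$ recursion and the clipping to $\hmax(s)$ built into $\boundedvmax$ is the only delicate point; everything else is bookkeeping over the reifications.

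With the cost facts available, each remaining implication is a single RUP step. For $(\rmax_s\land r_a)\rightarrow\mmgeprimedcost{B-\hmax(s)}$, the negation supplies $\mmgecost{B-\hmax(s)}\geq 1$ and $\deltacost{\cost(a)}\geq 1$, and the first cost fact yields a conflict. For $(\rmax_s\land r_a)\rightarrow {r_{v,s}}'$, the negation makes the $\Leftarrow$-part of the primed \eqref{eq:rmaxv} force $v'\geq 1$ and $\mmgeprimedcost{B-\hmax(s)+\boundedvmax(v)}=0$, after which I split on $v$: if $v\in\del(a)$, \eqref{eq:actionvar} forces $\overline{v'}\geq 1$, a direct conflict; if $v\in\variables\setminus\effvars(a)$, \eqref{eq:actionvar} gives $\eqvar{v}\geq 1$, so \eqref{eq:eqvars} with $v'\geq 1$ gives $v\geq 1$, then $r_{v,s}\geq 1$ and \eqref{eq:rmaxv} give $\mmgecost{B-\hmax(s)+\boundedvmax(v)}\geq 1$, and the frame cost fact re-derives $\mmgeprimedcost{B-\hmax(s)+\boundedvmax(v)}\geq 1$; if $v\in\add(a)$, from $r_a$ and $\rmax_s$ I get $\deltacost{\cost(a)}\geq 1$, $u\geq 1$ and $r_{u,s}\geq 1$ for all $u\in\pre(a)$, hence $\mmgecost{B-\hmax(s)+\boundedvmax(u)}\geq 1$ via \eqref{eq:rmaxv} and so $\mmgecost{B-\hmax(s)+c^*}\geq 1$ (falling back to $\mmgecost{B-\hmax(s)}$ when $c^*=0$), and the $\add$ cost fact re-derives $\mmgeprimedcost{B-\hmax(s)+\boundedvmax(v)}\geq 1$; in all three cases this contradicts $\mmgeprimedcost{B-\hmax(s)+\boundedvmax(v)}=0$. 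Finally, the lemma follows by one RUP step: assuming $\rmax_s\geq 1$, $r_a\geq 1$ and $\overline{{\rmax_s}'}\geq 1$, the implications just derived propagate $\mmgeprimedcost{B-\hmax(s)}\geq 1$ and ${r_{v,s}}'\geq 1$ for all $v\in\variables$, and the $\Leftarrow$-part of the primed \eqref{eq:rmaxs} then forces ${\rmax_s}'\geq 1$, a contradiction.
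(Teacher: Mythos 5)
Your proposal is correct and follows essentially the same route as the paper's proof: the same auxiliary cost facts via Lemmas~\ref{lemma:costs3} and \ref{lemma:costs1-variant} (also on primed variables), the same decomposition into $(\rmax_s\land r_a)\rightarrow \mmgeprimedcost{B-\hmax(s)}$ and per-variable implications $(\rmax_s\land r_a)\rightarrow r'_{v,s}$ with the delete/add/frame case split, the same key use of the $\hmax$ recursion to get $\boundedvmax(v)\leq \boundedvmax(p)+\cost(a)$, and the same closing RUP step. The only differences are cosmetic: you fold the paper's explicit special cases ($\boundedvmax(p)=\hmax(s)$, empty precondition, $\hmax(s)=0$) into a uniform argument via $c^*$ (the conflict then sometimes arises through the $\overline{\geprimedcost{B}}$ literal of \eqref{eq:actionvar}), and in the frame case you propagate forward to a conflict with the negated primed reification rather than backward to $\overline{\rmax_s}$, both of which are valid RUP derivations.
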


\begin{proof}[Proof sketch]
  The overall argument of the proof is as follows:

  If the action has a precondition $p$, we can conclude that $p\geq 1$, which
  allows us to derive with \eqref{eq:rmaxv} that
  $\mmgecost{B-\hmax(s)+\boundedvmax(p)}\geq 1$. We can establish
  $\mmgeprimedcost{B-\hmax(s)}$ using Lemmas \ref{lemma:costs1-variant} and
  \ref{lemma:costs3}.

  To establish $r'_{v,s}\geq 1$ for all variables, we consider the precondition
  with maximal $\boundedvmax$ value.

  If $\boundedvmax(p) = \hmax(s)$, then
  $\mmgecost{B-\hmax(s)+\boundedvmax(p)}\geq 1$ corresponds to
  $\mmgecost{B}\geq 1$. We derive $\mmgeprimedcost{B}\geq 1$, which is
  sufficient to establish $r'_{v,s}\geq 1$.

  If $\boundedvmax(p) < \hmax(s)$, we
  establish $r'_{v,s}\geq 1$ depending on whether and how $v$ occurs in the
  effect of $a$. If it is a delete effect, then $\overline{v'}\geq 1$. If it is an
  add effect, we exploit that $p$ has maximal $\boundedvmax$ value among the
  preconditions, so from the definition of the maximum heuristic we can see
  that $\boundedvmax(v)\leq \boundedvmax(p) +\cost(a)$. Establishing
  $\mmgeprimedcost{B-\hmax(s)+\boundedvmax(p)+\cost(a)}$ and using this insight
  gives $\mmgeprimedcost{B-\hmax(s)+\boundedvmax(v)}\geq 1$. If
  $v\notin\effvars(a)$, we can carry over the reason for $r_{v,s}\geq 1$ to the
  primed variables.

  If the precondition of the action is empty, we build on
  $\mmgecost{B-\hmax(s)}$, which gives $\mmgeprimedcost{B-\hmax(s)+\cost(a)}$
  and $\mmgeprimedcost{B-\hmax(s)}$ using Lemmas \ref{lemma:costs1-variant} and
  \ref{lemma:costs3}. For establishing $r'_{v,s}\geq 1$ for all variables, we
  proceed as before for the case $\boundedvmax(p) < \hmax(s)$ (replacing
  $\boundedvmax(p)$ with $0$). For the special case $\hmax(s)=0$,
  we can instead use the shorter argument as in the previous case
  $\boundedvmax(p) = \hmax(s)$, building on $\mmgecost{B}\geq 1$.

  The technical details are included in
  the\ifthenelse{\boolean{icaps}}{ extended version of this paper
    \cite{dold-et-al-arxiv2025}.}{ appendix.}
\end{proof}

The full inductivity lemma follows quite directly.
\begin{restatable}{lemma}{hmaxinductivitylemma}
It is possible to derive the inductivity lemma\\
$(\rmax_s \land \rtrans)\rightarrow
  {\rmax_s}'$ from $\ctrans\cup \Hmax\cup \Hmax'\cup\cgeq\cup\ccost$.
  \label{lemma:hmaxinductivity}
\end{restatable}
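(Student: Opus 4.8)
The plan is to reuse the lifting pattern already applied in the proofs of Lemma~\ref{lemma:expandedtransition} and Lemma~\ref{lemma:inefficientPDBinductivitylemma}: all heuristic-specific reasoning has been concentrated in the per-action statement, and we only need to combine the per-action constraints through the reification \eqref{eq:rtrans} of $\rtrans$. Concretely, I would first invoke Lemma~\ref{lemma:hmaxaction} once for each action $a\in\actions$ to obtain and log the constraint $(\rmax_s\land r_a)\rightarrow{\rmax_s}'$. Note that Lemma~\ref{lemma:hmaxaction} is stated for \emph{every} action, with no applicability hypothesis, so --- unlike the $\astar$ and PDB inductivity proofs --- no separate treatment of actions inapplicable in $s$ is needed: the $\hmax$ reifications \eqref{eq:rmaxv}--\eqref{eq:rmaxs} never refer to $s$-applicability.

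The remaining step is a single RUP derivation on top of these logged constraints. Assume towards a conflict (a) $\rmax_s\geq 1$, (b) $\rtrans\geq 1$, and (c) $\overline{{\rmax_s}'}\geq 1$. From (a), (c) and the per-action constraint for $a$, unit propagation yields $\overline{r_a}\geq 1$ for every $a\in\actions$. Feeding all of these literals into the reification \eqref{eq:rtrans} forces $\overline{\rtrans}\geq 1$, contradicting (b). Hence $\ctrans\cup\Hmax\cup\Hmax'\cup\cgeq\cup\ccost$ together with the negation of the target constraint unit-propagates to a conflict from the empty assignment, and $(\rmax_s\land\rtrans)\rightarrow{\rmax_s}'$ is derivable by RUP; the hypotheses $\cgeq$ and $\ccost$ are carried along but are not exercised in this final step.

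I do not expect a genuine obstacle: every delicate ingredient --- the interaction of $\boundedvmax$ with the maximum-heuristic recurrence, the case analysis over add, delete and unaffected variables, and the cost-clipping Lemmas~\ref{lemma:costs1-variant} and~\ref{lemma:costs3} --- has already been discharged inside Lemma~\ref{lemma:hmaxaction}, so what remains is the routine bookkeeping that turns ``$\sum_{a\in\actions} r_a\geq 1$'' together with ``each $r_a$ is refuted'' into a contradiction. The only points deserving care are that all $|\actions|$ per-action constraints be present in the proof database before the RUP step (this is what gives the expected $O(|\actions|)$ logging overhead per heuristic evaluation) and that \eqref{eq:rtrans} be visible to the RUP checker so that the propagated literals $\overline{r_a}$ actually close the branch.
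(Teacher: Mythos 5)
Your proposal is correct and follows essentially the same route as the paper's proof: establish $(\rmax_s \land r_a)\rightarrow{\rmax_s}'$ via Lemma~\ref{lemma:hmaxaction} for every $a\in\actions$, then close with a single RUP step that propagates $\overline{r_a}$ for all actions and contradicts $\rtrans$ through \eqref{eq:rtrans}. Your side remark that no applicability case split is needed (unlike the $\astar$ and PDB inductivity arguments) matches the paper, since Lemma~\ref{lemma:hmaxaction} is stated for every action.
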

\begin{proof}[Proof sketch]
  Establish with Lemma~\ref{lemma:hmaxaction} for every $a\in\actions$ that
  $(\rmax_s \land r_a)\rightarrow {\rmax_s}'$. Then derive the constraint
  by RUP.
\end{proof}

Extending $\hmax$ with proof logging only leads to the following overhead:
adding reifications \eqref{eq:rmaxv} for each $v\in\variables$ and
\eqref{eq:rmaxs} takes additional time linear in $|\variables|$ because
$h(s)$ and the values necessary for $\boundedvmax{}$ are already computed
during the normal evaluation. In addition, the implementation has to reinitialize some values
for each variable before each new evaluation, so the overhead is within
a constant factor.

The state and goal lemmas each only require a single RUP statement.
For the inductivity lemma, we need to establish the constraint from
Lemma~\ref{lemma:hmaxaction} for every action. For each action where
$\hmax(s,\pre(a))\geq\hmax(s)$, we require constant effort. For all other
actions, the effort for the action is linear in the number of variables.

\section{Conclusion}

We introduced lower-bound certificates for classical planning, which
can be used to verify the optimality of optimal planners.
The certificates are sound, complete and efficiently verifiable. We
showed them to be efficiently generatable in a case study on
\astar\ with \hmax\ or pattern database heuristics.

We believe these certificates to be much more general than the ones
considered in previous research because they work on a more
fundamental level of abstraction, build on an established proof
systems that has proved its utility in many other areas of AI, and are
able to directly incorporate numerical reasoning. In future work, this
generality has to be confirmed by applying these certificates to other
optimal planning approaches that current approaches cannot handle. In
particular, we believe that pseudo-Boolean constraints can generically
handle the concept of \emph{cost partitioning}
\cite{katz-domshlak-aij2010}, perhaps \emph{the} most important
optimal planning technique, because they are able to reason numerically
at the level of individual state transitions in a way that previously
proposed certificates cannot.

\section*{Acknowledgments}
This research was supported by the Swiss
National Science Foundation (SNSF) as part of the project
\dq{Unifying the Theory and Algorithms of Factored StateSpace Search} (UTA).
Jakob Nordström was supported by the Independent Research Fund Denmark grant \mbox{9040-00389B}.

\ifthenelse{\boolean{icaps}}
    {} 
    {\clearpage

\appendix

\section{Appendix}

The following sections contain additional details not provided in the
ICAPS 2025 version of this paper.

\section{Proofs for Cost Lemmas}

We will use an extended cutting planes proof technique that we did not
introduce in the background section, namely redundance-based strengthening
(RED): if for a PB constraint $C$ there is a substitution $\rho$ such that
$\mathcal C\cup\{\lnot C\}\models \mathcal{C}\cup\{
C\}\!\!\upharpoonright_\rho$ then we can infer $C$. We only use the simple form
with an empty substitution. Since it cannot invalidate the core formula
$\mathcal C$, we only need to derive that $\mathcal C\cup\{\lnot C\}\models
 C$ to establish $C$. We can think of this as a more flexible form of
a proof by contradiction than simple RUP.

We now provide the proofs for the two cost-related lemmas from the paper.

\costsone*

\begin{proof}
  By RED with an empty witness. Since the witness is empty, all constraints
  from $\cgeq$ are trivially implied, so we only need to derive
  $C\dot=\gecost{\min\{B,\max\{0, j+k\}\}}\rightarrow\gecost{\min\{B,\max\{0,j\}\}}$.

  We may assume the negation of this constraint, which gives us by weakening
  that (a) $\gecost{\min\{B,\max\{0, j + k\}\}}\geq 1$ and (b)
  $\overline{\gecost{\min\{B,\max\{0,j\}\}}}\geq 1$. Using \eqref{eq:gecost},
  we get from (a) that (c) $\sum\nolimits_{i=0}^{\maxbit} 2^ic_i \geq
  \min\{B,\max\{0,j+k\}\}$.

  Define $\Delta = \min\{B,\max\{0,j+k\}\} - \min\{B,\max\{0,j\}\}$ and observe
  that $\Delta\in \{0,\dots,B\}$.
  
  Let set $M$ contain exactly the numbers $i$ such that the $i$th
  bit in the binary representation of $\Delta$ is $1$.
  With a full weakening\footnote{Full weakening is syntactic sugar that corresponds to
  multiplying the literal axioms $\ell \geq 0$ with the coefficients of $\ell$
  and adding this to the constraint.} of (c) with $M$, we receive
  (d) $\sum\nolimits_{i=\in\{0,\dots,\maxbit\}\setminus M} 2^ic_i \geq
  \min\{B,\max\{0,j\}\}$. 

  For every $i\in M$, we can multiply the axiom lemma
  $c_i\geq 0$ with $2^i$ and receive $2^ic_i\geq 0$. The sum over all these
  constraints and (d) is $\sum\nolimits_{i=0}^{\maxbit} 2^ic_i \geq
  \min\{B,\max\{0,j\}\}$. With \eqref{eq:gecost}, this yields
  $\gecost{\min\{B,\max\{0,j\}\}}$.
\end{proof}

\coststwo*

\begin{proof}
  By RED with an empty witness. Since the witness is empty, all constraints
  from $\cgeq$ are trivially implied, so we only need to derive
  $C\dot=(\gecost{\min\{B,\max\{0, l\}\}}\land \deltacost{m})\rightarrow\geprimedcost{\min\{B,\max\{0,l+m\}\}}$.

  We may assume the negation of this constraint, which gives us by weakening
  that (a) $\gecost{\min\{B,\max\{0, l\}\}}\geq 1$, (b) $\deltacost{m}\geq 1$, and (c)
  $\overline{\geprimedcost{\min\{B,\max\{0,l+m\}\}}}\geq 1$.

  From (b) and \eqref{eq:deltacost} we get that (d)
  $\sum\nolimits_{i=0}^{\maxbit} 2^ic'_i - \sum\nolimits_{i=0}^{\maxbit} 2^ic_i
  \geq m$. From (a) and \eqref{eq:gecost}, we derive (e)
  $\sum\nolimits_{i=0}^{\maxbit} 2^ic_i \geq \min\{B,\max\{0,l\}\}$. The sum of (d) and
  (e) is (f) $\sum\nolimits_{i=0}^{\maxbit} 2^ic'_i\geq
  \min\{B,\max\{0,l\}\}+m$.  From (c), we derive (g)
  $\sum\nolimits_{i=0}^{\maxbit} 2^ic'_i < \min\{B,\max\{0,l+m\}\}$, which in
  normalized form is $\sum\nolimits_{i=0}^{\maxbit} 2^i\overline{c'_i} \geq
  \sum\nolimits_{i=0}^{\maxbit} 2^i - \min\{B,\max\{0,l+m\}\} + 1$.  The sum of
  (f) and (g) is (h) $0\geq \min\{B,\max\{0,l\}\}+m - \min\{B,\max\{0,l+m\}\} + 1$.
  Since $\min\{B,\max\{0,l\}\}+m
  \geq \min\{B,\max\{0,l+m\}\}$, the right-hand side of (h) is positive, so we
  have a contradiction, from which we can derive any constraint, thus also $C$.
\end{proof}

\section{Proof-Logging PDB Heuristics: Proofs}

\pdbstatelemma*

\begin{proof}
Assume (a) $r_s\geq 1$, (b) $\mmgecost{B-h(s)}\geq 1$, and (c)
$\overline{r_{\textup{PDB}}}\geq 1$.  From
(a) and $\mathcal C_s$, we receive $v\geq 1$ for each $v\in s$ and $\bar v\geq 1$ for each
$v\in\variables\setminus s$,
and in particular (d) $v\geq 1$ for each $v\in\alpha(s)$ and $\bar v\geq 1$ for each
$v\in P\setminus \alpha(s)$. With constraints \eqref{eq:rinefficientpdbstate} and (d), we
derive (e) $\rpdbstate{\alpha(s)}\geq 1$.  Since $h(s) = d(\alpha(s))$, we can
use (b) and (e) with \eqref{eq:rinefficientpdbstategeg} to derive (f)
$\rpdbstategeg{\alpha(s)}$. With \eqref{eq:rinefficientPDB}, this gives us
$r_{\textup{PDB}} \geq 1$, a contradiction to (c).
\end{proof}

\pdbgoallemma*

\begin{proof}
Assume (a) $\rgoal\geq 1$, (b) $r_{\textup{PDB}}\geq 1$ and (c)
$\overline{\gecost{B}} \geq 1$. From (a) and \eqref{eq:rgoal} we can derive
that (d) $v\geq 1$ for each goal variable $v\in\goal$.

We now iterate over all abstract states $\pdbstate$. If $\pdbstate$ is
\emph{not} an abstract goal state, we use (d) to derive from
\eqref{eq:rinefficientpdbstate} that $\overline{\rpdbstate{\pdbstate}}\geq 1$ and
consequently with \eqref{eq:rinefficientpdbstategeg}
$\overline{\rpdbstategeg{\pdbstate}} \geq 1$. If $\pdbstate$ is an abstract
goal state, we exploit that $d(\pdbstate)=0$ and use (c) with
\eqref{eq:rinefficientpdbstategeg} to derive $\overline{\rpdbstategeg{\pdbstate}}
\geq 1$. So for all abstract states $\pdbstate$ we derived $\overline{\rpdbstategeg{\pdbstate}}
\geq 1$. This gives with \eqref{eq:rinefficientPDB} that $\overline{r_{\textup{PDB}}}\geq 1$, a contradiction to (b).
\end{proof}

\pdbinductivitylemmaone*

\begin{proof}
Assume (a) $\rpdbstate{\pdbstate}\geq 1$, (b) $r_a\geq 1$, and (c)
$\overline{\rpdbprimedstate{\app{\pdbstate}{a^\alpha}}}\geq 1$.
From (a) and \eqref{eq:rinefficientpdbstate} we get (d) $v \geq 1$ for all $v\in \pdbstate$
and $\bar{v}\geq 1$ for all $v\in P\setminus\pdbstate$.
From (b) and \eqref{eq:actionvar}, we get
(e) $v'\geq 1$ for all $v\in\add(a)$, (f) $\overline{v'}\geq 1$ for all
$v\in\del(a)$, and (g) $\eqvar{v}\geq 1$ for all $v\in
\variables\setminus\effvars(a)$. From (d) and (g) and \eqref{eq:eqvars}, we derive for all $v\in
P\setminus\effvars(a)$ that $v'\geq 1$ if $v\in \pdbstate$ and
$\overline{v'}\geq 1$ if $v\notin\pdbstate$. Combining this with (e), (f) and
\eqref{eq:rinefficientpdbstate} from $\Hpdb'$ for $\app{\pdbstate}{a^\alpha}$, we derive
$\rpdbprimedstate{\app{\pdbstate}{a^\alpha}}\geq 1$, contradicting (c).
\end{proof}

\pdbinductivitylemmatwo*
\begin{proof}
  For each action $a\in \actions$ that is applicable in $\pdbstate$,
  we establish (a) $(\rpdbstategeg{\pdbstate}\land r_a)\rightarrow
  r'_{\textup{PDB}}$ with Lemma~\ref{lemma:inefficientPDB2}.

  Afterwards, the desired constraint can be derived by RUP, assuming (b)
  $\rpdbstategeg{\pdbstate}\geq 1$, (c) $\rtrans\geq 1$ and (d)
  $\overline{r'_{\textup{PDB}}}\geq 1$.

  For each action $a\in \actions$, we derive (e) $\overline{r_a}\geq 1$ as follows:
  If $\aalpha$ is applicable in $\pdbstate$, we use (a), (b) and (d).
  Otherwise, there is a $v\in \pre(a)\cap P$ with $v\notin \pdbstate$. From (b)
  and \eqref{eq:rinefficientpdbstategeg}, we derive $\rpdbstate{\pdbstate}\geq 1$, which
  with \eqref{eq:rinefficientpdbstate} gives us (f) $\overline{v}\geq 1$. Since
  $v\in\pre(a)$, we can use (f) with \eqref{eq:actionvar} to derive (e) also
  for this case.
  Since we have derived (e) for all actions $a\in\actions$, we can use
  \eqref{eq:rtrans} to derive $\overline{\rtrans}\geq 1$, a contradiction to
  (c).
\end{proof}

\pdbinductivitylemma*

\begin{proof}
  We first use lemma \ref{lemma:inefficientPDB3} to establish for each abstract state
  $\pdbstate$ (a) $(\rpdbstategeg{\pdbstate}\land \rtrans) \rightarrow
  r'_{\textup{PDB}}$.

  The rest can be derived by RUP, assuming (b) $r_{\textup{PDB}}\geq 1$, (c)
  $\rtrans\geq 1$, and (d) $\overline{r'_{\textup{PDB}}}\geq 1$.
  Using (c) and (d) with (a), we can derive for each abstract state $s_\alpha$
  that $\overline{\rpdbstategeg{\pdbstate}}\geq 1$. With \eqref{eq:rinefficientPDB}, this
  gives $\overline{r_{\textup{PDB}}}\geq 1$, a contradiction to (b).
\end{proof}

\section{Proof-Logging \hmax: Proofs}

\hmaxstatelemma*

\begin{proof}
Assume (a) $r_s\geq 1$, (b) $\gecost{\max\{0,B-\hmax(s)\}}\geq 1$, and (c)
$\overline{\rmax_s}\geq 1$.
From (a), we get for all $v\in \variables\setminus s$
that $\bar{v}\geq 1$, so with \eqref{eq:rmaxv} we
  get for all such $v$ that $r_{v,s}\geq 1$ (*). For all $v\in s$, the heuristic
determines $\boundedvmax(v)=\vmax(v)=0$,
and thus $B-\hmax(s)+\boundedvmax(v)\leq B$,
so from (b) and \eqref{eq:rmaxv}
we get $r_{v,s}\geq 1$ also for these variables (**).

Assumption (b) corresponds to $\mmgecost{B-\hmax(s)}$.

Together with (*) and (**), we derive with \eqref{eq:rmaxs} that $\rmax_s\geq
1$, contradicting (c).
\end{proof}

\hmaxgoallemma*

\begin{proof}
Assume (a) $\rgoal\geq 1$, (b) $\overline{\gecost{B}}\geq 1$ and (c)
$\rmax_s\geq 1$.

If $G\neq\emptyset$, consider a goal variable $g$ with maximum
max value. For this $g$, it holds that $\vmax(g)=\hmax(s)$. From (a) we
derive that $g\geq 1$, which gives together with (b) and \eqref{eq:rmaxv}
$\overline{r_g}\geq 1$. Thus we get from \eqref{eq:rmaxs} that
$\overline{\rmax_s}\geq 1$, contradicting (c).

If $G=\emptyset$ then $\hmax(s)=0$ for all states $s$ and $\boundedvmax(v)
= 0$ for all variables $v$. From (c) we get that $\mmgecost{B}\geq 1$,
contradicting (b).
\end{proof}

\hmaxinductivityaux*

\begin{proof}
  The overall argument of the proof is as follows:

  If the action has a precondition $p$, we can conclude that $p\geq 1$, which
  allows us to derive with \eqref{eq:rmaxv} that
  $\mmgecost{B-\hmax(s)+\boundedvmax(p)}\geq 1$. We can establish
  $\mmgeprimedcost{B-\hmax(s)}$ using lemmas \ref{lemma:costs1-variant} and
  \ref{lemma:costs3}.

  To establish $r'_{v,s}\geq 1$ for all variables, we consider the precondition
  with maximal $\boundedvmax$ value.

  If $\boundedvmax(p) = \hmax(s)$, then
  $\mmgecost{B-\hmax(s)+\boundedvmax(p)}\geq 1$ corresponds to
  $\mmgecost{B}\geq 1$. We derive $\mmgeprimedcost{B}\geq 1$, which is
  sufficient to establish $r'_{v,s}\geq 1$. 

  If $\boundedvmax(p) < \hmax(s)$, we
  establish $r'_{v,s}\geq 1$ depending on whether and how $v$ occurs in the
  effect of $a$. If it is a delete effect, then $\overline{v'}\geq 1$. If it is an
  add effect, we exploit that $p$ has maximal $\boundedvmax$ value among the
  preconditions, so from the definition of the maximum heuristic we can see
  that $\boundedvmax(v)\leq \boundedvmax(p) +\cost(a)$. Establishing
  $\mmgeprimedcost{B-\hmax(s)+\boundedvmax(p)+\cost(a)}$ and using this insight
  gives $\mmgeprimedcost{B-\hmax(s)+\boundedvmax(v)}\geq 1$. If
  $v\notin\effvars(a)$, we can carry over the reason for $r_{v,s}\geq 1$ to the
  primed variables.
  
  If the precondition of the action is empty, we build on
  $\mmgecost{B-\hmax(s)}$, which gives $\mmgeprimedcost{B-\hmax(s)+\cost(a)}$
  and $\mmgeprimedcost{B-\hmax(s)}$ using lemmas \ref{lemma:costs1-variant} and
  \ref{lemma:costs3}. For establishing $r'_{v,s}\geq 1$ for all variables, we
  proceed as before for case $\boundedvmax(p) < \hmax(s)$ (replacing
  $\boundedvmax(p)$ with $0$). For the special case $\hmax(s)=0$,
  we can instead use the shorter argument as in the previous case
  $\boundedvmax(p) = \hmax(s)$, building on $\mmgecost{B}\geq 1$.

  \bigskip

  We not turn to specifying the details of this overall argument:
 
  If $\pre(a)\neq\emptyset$, let $p\in\pre(a)$ be a precondition with maximal $\boundedvmax$ value.
  
  If $\boundedvmax(p) = \hmax(s)$, we first derive for all variables $v$ that
  $(\rmax_s \land r_a)\rightarrow r'_{v,s}$: We use Lemma~\ref{lemma:costs3}
  to establish (*) $(\mmgecost{B}\land
  \deltacost{\cost(a)})\rightarrow\mmgeprimedcost{B}$ and proceed by RUP,
  assuming (a) $\rmax_s\geq 1$ and (b) $r_a\geq 1$. From (a) and
  \eqref{eq:rmaxs} we get (c) $r_{p,s}\geq 1$.  From (b) and \eqref{eq:actionvar}
  we get (d) $p\geq 1$, (e) $\overline{\geprimedcost{B}}\geq 1$ and (f)
  $\deltacost{\cost(a)}\geq 1$. From (c) and (d), we get with
  $\eqref{eq:rmaxv}$ that (g) $\mmgecost{B-\hmax(s)+\boundedvmax(p)}\geq 1$. Then
  (f), (g) and (*) yield $\mmgeprimedcost{B} \geq 1$, a contradiction to (e).
  \smallskip

  If $\boundedvmax(p) < \hmax(s)$, we use Lemma~\ref{lemma:costs3} to establish
  (**) $(\mmgecost{B-\hmax(s)+\boundedvmax(v)}\land
  \deltacost{\cost(a)})\rightarrow\mmgeprimedcost{B-\hmax(s)+\boundedvmax(v)+cost(a)}$
  for all $v\in\variables$.
  We use Lemma~\ref{lemma:costs1-variant} to establish
  (***) $\mmgeprimedcost{B-\hmax(s) + \boundedvmax(v) +\cost(a)}\rightarrow
  \mmgeprimedcost{B-\hmax(s) + \boundedvmax(v)}$ for every
  $v\in\variables\setminus\effvars(a)$.
  For every $v\in\add(a)$, it holds by the definition of $\hmax$ that 
  $\vmax(v)\leq \vmax(p)+\cost(a)$ and
  consequently $\boundedvmax(v)\leq \boundedvmax(p) +\cost(a)$. We thus can
  establish with Lemma~\ref{lemma:costs1-variant}
  that (****) $\mmgeprimedcost{B-\hmax(s) + \boundedvmax(p) +\cost(a)}\rightarrow
  \mmgeprimedcost{B-\hmax(s) + \boundedvmax(v)}$.
  \smallskip

  Then we derive $(\rmax_s \land r_a)\rightarrow r'_{v,s}$ for all variables $v$ by
  RUP: Assume (h) $\rmax_s\geq 1$, (i) $r_a\geq 1$, and (j)
  $\overline{r'_{v,s}}\geq 1$.
  From (i) and \eqref{eq:actionvar} we get (k) $\deltacost{\cost(a)}\geq 1$. From (h) with \eqref{eq:rmaxv} and \eqref{eq:rmaxs} we derive (l)
  $\mmgecost{B-\hmax(s)+\boundedvmax(p)}\geq 1$ as in case $\boundedvmax(p)
  = \hmax(s)$.

  If $v\in\del(a)$, we have from (i) and \eqref{eq:actionvar} that
  $\overline{v'}\geq 1$. With \eqref{eq:rmaxv} this yields $r'_{v,s}\geq 1$,
  contradicting (j).

  If $v\in\add(a)$, we use (**) with (l) and (k) to derive
  $\mmgeprimedcost{B-\hmax(s)+\boundedvmax(p)+\cost(a)}$,
  which together with (****) gives
  $\mmgeprimedcost{B-\hmax(s)+\boundedvmax(v)}$. With \eqref{eq:rmaxv} from
  $\Hmax'$, we get $r'_{v,s}\geq 1$, contradicting (j).

  If $v\in\variables\setminus\effvars(a)$, (j) and \eqref{eq:rmaxv} yield (m) $v'\geq 1$ and (n)
  $\overline{\mmgeprimedcost{B-\hmax(s) + \boundedvmax(v)}}\geq 1$.
  From (i) and \eqref{eq:actionvar} we get $\eqvar{v}\geq 1$, which with (m)
  yields (o) $v\geq 1$. We use (n) with (***) to get
  $\overline{\mmgeprimedcost{B-\hmax(s)+\boundedvmax(v)+cost(a)}}\geq 1$,
  which gives with (**) and (k) that $\overline{\mmgecost{
  B-\hmax(s)+\boundedvmax(v)}}\geq 1$. 
  With \eqref{eq:rmaxv} and (o) this yields $\overline{r_{v,s}}\geq 1$,
  which with \eqref{eq:rmaxs} gives $\overline{\rmax_s}\geq 1$, contradicting
  (h).
  \smallskip

  Continuing with the case $\boundedvmax(p) = \hmax(s)$, we now derive
  $(\rmax_s \land r_a)\rightarrow \mmgeprimedcost{B-\hmax(s)}$: 
  We use Lemma~\ref{lemma:costs3}
  to establish ($\dagger$) $(\mmgecost{B-\hmax(s) +\boundedvmax(p)}\land
  \deltacost{\cost(a)})\rightarrow\mmgeprimedcost{B-\hmax(s)
  +\boundedvmax(p)+\cost(a)}$. With Lemma~\ref{lemma:costs1-variant} on the
  primed variables, we get ($\dagger\dagger$) $\mmgeprimedcost{B-\hmax(s)
  +\boundedvmax(v)+\cost(a)}\rightarrow \mmgeprimedcost{B-\hmax(s)}$.
  We then proceed by RUP, assuming (p) $\rmax_s\geq 1$, (q) $r_a\geq 1$ and (r)
  $\overline{\mmgeprimedcost{B-\hmax(s)}}\geq 1$.
  From (p) and \eqref{eq:rmaxs} we get (s) $r_{p,s}\geq 1$.  From (q) and
  \eqref{eq:actionvar} we get (t) $p\geq 1$ and (u) $\deltacost{\cost(a)}\geq
  1$. We use (s) and (t) and \eqref{eq:rmaxv} to obtain $\mmgecost{B-\hmax(s)+\boundedvmax(p)}\geq
  1$, which with (u), ($\dagger$) and ($\dagger\dagger$) propagates to
  $\mmgeprimedcost{B-\hmax(s)}\geq 1$, a contradiction to (r).
  \smallskip

  RUP can now derive that $(\rmax_s \land r_a)\rightarrow {\rmax_s}'$: Assume
  (v) $\rmax_s\geq 1$, (w) $r_a\geq 1$ and (x) $\overline{{\rmax_s}'}\geq 1$.
  For all $v\in\variables$, we get with (v), (x) and the previously derived
  $(\rmax_s \land r_a)\rightarrow r'_{v,s}$, that $r'_{v,s}\geq 1$. We use (v),
  (x) and the previously derived $(\rmax_s \land r_a)\rightarrow\mmgeprimedcost{B-\hmax(s)}$ to get
  $\mmgeprimedcost{B-\hmax(s)}\geq 1$.  With \eqref{eq:rmaxs} from $\Hmax'$,
  these yield ${\rmax_s}'\geq 1$, contradicting (x).
  \smallskip

  If $\pre(a) = \emptyset$ the proof is analogous but simpler: instead of using
  $p$ to derive $\gecost{B-\hmax(s)+\boundedvmax(p)}\geq 1$, we use
  $\gecost{B-\hmax(s)}\geq 1$. Then case $\hmax(s)=0$ works like the previous
  case for $\boundedvmax(p) = \hmax(s)$. Otherwise, we proceed as in the case
  $\boundedvmax(p) < \hmax(s)$. If $v\in\add(a)$, we used $\boundedvmax(p)$ to
  refer to $\hmax(s,\pre(a))$, which is $0$ with an empty precondition. We can
  adapt the proof by replacing all occurrences of $\boundedvmax(p)$ with $0$.
\end{proof}

\hmaxinductivitylemma*
\begin{proof}
  Establish with Lemma~\ref{lemma:hmaxaction} for every $a\in\actions$ that
  (a) $(\rmax_s \land r_a)\rightarrow {\rmax_s}'$.
  Afterwards, continue by RUP: Assume (b) $\rmax_s\geq 1$, (c) $\rtrans\geq 1$,
  and (d) $\overline{{\rmax_s}'}\geq 1$. With (a), (b) and (d), derive (e)
  $\overline{r_a}\geq 1$ for every $a\in\actions$. With \eqref{eq:rtrans}, this
  gives $\overline{\rtrans}\geq 1$, contradicting (c).
\end{proof}

\section{State Set Extension Lemma}

The proofs in the next section require to switch from a description of a set of
states in form of a constraint that fixes the value of some state variables, to
a description of the same set of states in form of an enumeration of tighter
constraints describing subsets of the original set of states. The next lemma
shows how to construct a proof for this statement.

In the following, we write $x^b$ to denote the literal over $x$ that evaluates to $1$ exactly when $x \mapsto b$,
i.e., $x^b=x$ if $b=1$ and $x^b=\overline{x}$ if $b=0$.

\begin{lemma}
    Let $Y,Z$ be two sets of variables with $Y\subseteq Z$.
    Let $\alpha$ be a fixed assignment of the variables in $Y$ and let $\mathcal{B}$ be the set of all assignments $\beta$ of $Z$ with $\beta\supseteq\alpha$ extending $\alpha$.
    Suppose that there are reified constraints
    \begin{equation}
        \label{eq:eqalpha}
        r_\alpha \Rightarrow \sum_{y\in Y} y^{\alpha(y)}\geq |Y|
    \end{equation}
    and for each $\beta\in\mathcal{B}$ constraints
    \begin{equation}
        \label{eq:eqbeta}
        r_\beta \Leftarrow \sum_{z\in Z} z^{\beta(z)}\geq |Z|
    \end{equation}
    defining variables for all these assignments.
    Then there is a CPR derivation from \eqref{eq:eqalpha} and \eqref{eq:eqbeta} of
    \begin{equation}
        \label{eq:partialstatesetextension}
        \overline{r_\alpha} + \sum_{\beta\in\mathcal{B}}r_\beta \geq 1
    \end{equation}
    in $\mathcal{O}(|\mathcal{B}|)$ steps.
\label{lemma:partialstatesetextension}
\end{lemma}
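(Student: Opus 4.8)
I propose to derive \eqref{eq:partialstatesetextension} by a binary ``resolution tree'' over the free variables $Z\setminus Y=\{z_1,\dots,z_m\}$ (fixed in some order, so $|\mathcal{B}|=2^m$), spending a constant number of steps per tree node. For $k\in\{0,\dots,m\}$ and each assignment $\gamma$ of $\{z_1,\dots,z_k\}$, let $C_\gamma$ denote the constraint
\[
  \overline{r_\alpha}+\sum_{i=1}^{k} z_i^{1-\gamma(z_i)}+\!\!\sum_{\substack{\beta\in\mathcal{B}\\ \restrict{\beta}{\{z_1,\dots,z_k\}}=\gamma}}\!\! r_\beta\ \geq\ 1 .
\]
At level $k=m$ (where $\gamma$ is a full assignment $\beta$) this reads $\overline{r_\alpha}+r_\beta+\sum_{i=1}^m z_i^{1-\beta(z_i)}\geq 1$, and at level $k=0$ it is exactly \eqref{eq:partialstatesetextension}. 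The plan is to derive all $C_\gamma$ from level $m$ down to level $0$, so that \eqref{eq:partialstatesetextension} comes out last.

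For the base case, I would derive each level-$m$ constraint $C_\beta$ by a single RUP step. Unfolding the reification shorthands, \eqref{eq:eqalpha} is $|Y|\,\overline{r_\alpha}+\sum_{y\in Y}y^{\alpha(y)}\geq |Y|$ and \eqref{eq:eqbeta} is $r_\beta+\sum_{z\in Z}z^{1-\beta(z)}\geq 1$ (here $M-A+1=1$ since $M=A=|Z|$). Negating $C_\beta$ fixes $r_\alpha\mapsto 1$, $r_\beta\mapsto 0$ and $z_i^{\beta(z_i)}\mapsto 1$ for all $i$; then \eqref{eq:eqalpha} propagates $y^{\alpha(y)}\mapsto 1$ for every $y\in Y$; now all of $Z$ is fixed consistently with $\beta$, so $\sum_{z\in Z}z^{1-\beta(z)}$ evaluates to $0$ and \eqref{eq:eqbeta} forces $r_\beta\mapsto 1$, a conflict. (Equivalently, $C_\beta$ is obtained by a linear combination of \eqref{eq:eqalpha} and \eqref{eq:eqbeta} followed by one saturation to bring the coefficient of $\overline{r_\alpha}$ down to $1$.)

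For the inductive step, given the $2^k$ constraints at level $k$, I would derive each level-$(k-1)$ constraint $C_\gamma$ by one RUP step using the two level-$k$ constraints $C_{\gamma\cup\{z_k\mapsto 0\}}$ and $C_{\gamma\cup\{z_k\mapsto 1\}}$: negating $C_\gamma$ fixes $\overline{r_\alpha}\mapsto 0$, $z_i^{1-\gamma(z_i)}\mapsto 0$ for $i<k$, and $r_\beta\mapsto 0$ for every $\beta$ extending $\gamma$; restricting those two level-$k$ constraints by this partial assignment collapses them to the unit constraints $z_k\geq 1$ and $\overline{z_k}\geq 1$ (recalling $z_k^{1-0}=z_k$, $z_k^{1-1}=\overline{z_k}$), which propagate to a conflict. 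The single level-$0$ constraint is the desired \eqref{eq:partialstatesetextension}. (Equivalently, each step is: add the two level-$k$ clauses, normalize away $z_k+\overline{z_k}$, and saturate back to a clause.) The tree has $\sum_{k=0}^{m}2^k=2^{m+1}-1$ nodes, giving a CPR derivation of $O(2^m)=O(|\mathcal{B}|)$ steps. The only degenerate case is $Y=\emptyset$, where \eqref{eq:eqalpha} is vacuous and $r_\alpha$ does not occur in it; then the level-$m$ constraints are just \eqref{eq:eqbeta} in clausal form, the same tree yields $\sum_{\beta\in\mathcal{B}}r_\beta\geq 1$, and adding the literal axiom $\overline{r_\alpha}\geq 0$ gives \eqref{eq:partialstatesetextension}.

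I expect the main obstacle to be making the linear arithmetic actually land on a useful constraint: summing all of \eqref{eq:eqbeta} at once, or combining clauses without re-saturating, does \emph{not} work, because each elimination of a $z_i$ absorbs a $+1$ from the right-hand side through $z_i+\overline{z_i}=1$, while the coefficients on the shared literals (notably $\overline{r_\alpha}$ and the not-yet-eliminated $z_j$) keep doubling, so after two or more combinations the right-hand side collapses to $0$ and the constraint becomes trivial. Maintaining the invariant that every intermediate constraint is clausal (right-hand side $1$, all coefficients in $\{0,1\}$) — enforced implicitly by RUP, or explicitly by a saturation after each combination — is precisely what keeps the level-by-level induction alive, and verifying this invariant is the technical heart of the argument.
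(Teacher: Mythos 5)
Your proposal is correct and follows essentially the same route as the paper's proof: a level-by-level (binary-tree) RUP merging over the variables of $Z\setminus Y$, eliminating one variable per level at a cost of one RUP step per node, for $O(|\mathcal{B}|)$ steps in total. The only difference is bookkeeping: you fold $\overline{r_\alpha}$ in at the leaves via \eqref{eq:eqalpha} and drop the $Y$-literals immediately, whereas the paper carries the $Y$-literals through the intermediate clauses and trades them for $\overline{r_\alpha}$ in a single final RUP step.
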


\begin{proof}
    Let $Z\setminus Y = \{z_1,\dots,z_n\}$ and $Z_i=\{z_1,\dots,z_i\}\cup Y$.
    Additionally, let $\mathcal{B}_{i}$ be the set of all assignments of $Z_i$ with $\beta_i\supseteq\alpha$ extending $\alpha$
    (i.e. $\mathcal{B}_n=\mathcal{B}$ and $\mathcal{B}_0=\{\alpha\}$).

    Step $1$:
    For each of the $2^{n-1}$ many assignments
    $\beta_{n-1}: Z_{n-1}\rightarrow\{0,1\}$ in ${\mathcal{B}}_{n-1}$ we derive the constraint
    \begin{equation*}\tag{a}
        \sum_{\beta\supseteq\beta_{n-1}}r_\beta + \sum_{z\in Z_{n-1}} z^{1-\beta_{n-1}(z)} \geq 1
    \end{equation*}
    by RUP:
    Assuming the negation of (a), we get
    (b) $\overline{r_\beta} \geq 1$ for all $\beta\supseteq\beta_{n-1}$, and
    (c) $z^{\beta_{n-1}(z)} \geq 1$ for all $z\in Z_{n-1}$.
    Note that $\{\beta_{n-1}\cup\{z_n\mapsto0\},\beta_{n-1}\cup\{z_n\mapsto1\}\}\subseteq \mathcal{B}_n$.
    Thus, by using (b) and (c) with \eqref{eq:eqbeta} for $\beta_{n-1} \cup \{z_n \mapsto 0\}$, we derive
    (d) $z_n\geq 1$, and similarly by using (b) and (c) with \eqref{eq:eqbeta} for
    $\beta_{n-1} \cup \{z_n \mapsto 1\}$ we derive $\overline{z_n}\geq 1$,
    a contradiction to (d).

    Step $k$:
    As induction hypothesis we have
    \begin{equation*}\tag{e}
        \sum_{\beta\supseteq\beta_{n-k+1}}r_\beta + \sum_{z\in Z_{n-k+1}} z^{1-\beta_{n-k+1}(z)} \geq 1
    \end{equation*}
    for each $\beta_{n-k+1}\in{\mathcal{B}}_{n-k+1}$.
    We derive the constraint
    \begin{equation*}\tag{f}
        \sum_{\beta\supseteq\beta_{n-k}}r_\beta + \sum_{z\in Z_{n-k}} z^{1-\beta_{n-k}(z)} \geq 1
    \end{equation*}
    by RUP:
    Assuming the negation of (f), we get
    (g) $\overline{r_\beta} \geq 1$ for all $\beta\supseteq\beta_{n-k}$, and
    (h) $z^{\beta_{n-k}(z)} \geq 1$ for all $z\in Z_{n-k}$.
    We use (g) and (h) with (e) for $\beta_{n-k} \cup \{z_{n-k+1} \mapsto 0\}$
    obtained in step $k-1$,
    to derive
    (i) $z_{n-k+1}\geq1$, and similarly we use (g) and (h) with (e) for
    $\beta_{n-k} \cup \{z_{n-k+1} \mapsto 1\}$ to derive
    $\overline{z_{n-k+1}}\geq1$, a contradiction to (i).

    The constraint derived in step $n$ is
    \begin{equation*}\tag{j}
        \sum_{\beta\in\mathcal{B}} r_\beta + \sum_{y\in Y} y^{1-\alpha(y)} \geq 1
    \end{equation*}
    as $\beta_0=\alpha$ and $Z_0=Y$.

    We can now derive the desired constraint \eqref{eq:partialstatesetextension} by RUP:
    Assuming that $r_\alpha + \sum_{\beta\in\mathcal{B}}\overline{r_\beta} \geq 1+|\mathcal{B}|$
    we obtain
    (k) $r_\alpha\geq1$ and
    (l) $\overline{r_\beta} \geq 1$ for all $\beta\in\mathcal{B}$.
    From (k) and \eqref{eq:eqalpha} we obtain that
    (m) $y^{\alpha(y)} \geq 1$ for all $y\in Y$,
    and thus (j) with (l) and (m) leads to a contradiction.
\end{proof}

\section{Efficiently Proof-Logging Pattern Database Heuristics}

In the paper, we described PDB certificates where the overhead is not within
a constant factor of an efficient implementation of a PDB heuristic, which only
traverses the part of the abstract state space that is backwards-reachable from
the goal. Here we describe an alternative approach that is constant-factor
efficient for such an implementation.

\begin{definition}
Let $\Pi = \langle \variables, \actions, \init, \goal\rangle$ be a STRIPS
planning task and $P\subseteq \variables$. The \emph{PB circuit for the PDB
heuristic with pattern $P$} is the PB circuit $\langle\Hpdb,r_{\textup{PDB}}\rangle$ with input
variables $\variables\cup\cvars$, where the set represented by $\Hpdb$ contains for each
  abstract state $\pdbstate\subseteq P$ with $d(\pdbstate)<\infty$ two
  constraints
\begin{align}
 \rpdbstate{\pdbstate}&\Leftrightarrow \sum\nolimits_{v\in \pdbstate}v
+  \sum\nolimits_{v\in P\setminus\pdbstate}\bar{v}\geq
  |P|\text{, and}\label{eq:rpdbstate-eff}\\
 \rpdbstategeg{\pdbstate}&\Leftrightarrow \rpdbstate{\pdbstate}
+ \mmgecost{B-d({\pdbstate})}\geq 2,\label{eq:rpdbstategeg-eff}
  \intertext{and with $S=\{\pdbstate\subseteq P\mid
  d(\pdbstate) < \infty\}$ the constraint}
  \rpdbinfinite &\Leftrightarrow \sum\nolimits_{\pdbstate\in S}
  \overline{\rpdbstate{\pdbstate}} \geq |S|\label{eq:rpdbinfinite-eff}\\
  \intertext{as well as the final reification}
  r_{\textup{PDB}} &\Leftrightarrow \rpdbinfinite + \sum\nolimits_{\pdbstate\in S}
  \rpdbstategeg{\pdbstate} \geq 1.
\label{eq:rPDB-eff}
\end{align}
\end{definition}

On the evaluation of a state $s$, the heuristic always returns reification variable
$r_{\textup{PDB}}$ to the search. In the following we discuss how the required
proofs can be generated.

We begin with the state lemma, which requires a new proof for each evaluated
state.

\begin{lemma}
RUP can derive the state lemma\\
$(r_s\land \gecost{\max\{0,B-h(s)\}})
    \rightarrow r_{\textup{PDB}}$ from $\cinit \cup \Hpdb\cup \mathcal C_s$,\\
where $\mathcal C_s = \{r_s
\Leftrightarrow \sum_{v\in s} v + \sum_{v\in\variables\setminus s}\bar v \geq
|\variables|\}$.
\label{lemma:PDBstatelemma-eff}
\end{lemma}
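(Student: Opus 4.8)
The plan is to establish the constraint by a single RUP step: I would add its negation and unit-propagate to a conflict. The negation supplies the assumptions (a) $r_s \geq 1$, (b) $\gecost{\max\{0, B - h(s)\}} \geq 1$, which I may treat as $\mmgecost{B - h(s)} \geq 1$ using \eqref{eq:mmgecost} (note $\min\{B,\max\{0,B-h(s)\}\} = \max\{0,B-h(s)\}$ since $h(s)\geq 0$), and (c) $\overline{r_{\textup{PDB}}} \geq 1$. First I would propagate (a) through the reification $\mathcal{C}_s$ defining $r_s$: this fixes every state variable to its value in $s$, in particular the pattern variables, yielding (d) $v \geq 1$ for all $v \in s\cap P$ and $\overline{v} \geq 1$ for all $v \in P \setminus (s\cap P)$, i.e.\ the pattern variables spell out $\alpha(s)$.

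I would then split on whether the abstract goal distance $d(\alpha(s))$ is finite. If $d(\alpha(s)) < \infty$, then $\alpha(s) \in S$, so $\Hpdb$ contains \eqref{eq:rpdbstate-eff} and \eqref{eq:rpdbstategeg-eff} instantiated at $\pdbstate = \alpha(s)$. Propagating (d) through \eqref{eq:rpdbstate-eff} gives $\rpdbstate{\alpha(s)} \geq 1$; since $h(s) = d(\alpha(s))$, assumption (b) is $\mmgecost{B - d(\alpha(s))} \geq 1$, and propagating it together with $\rpdbstate{\alpha(s)} \geq 1$ through \eqref{eq:rpdbstategeg-eff} gives $\rpdbstategeg{\alpha(s)} \geq 1$. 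As $\alpha(s) \in S$, \eqref{eq:rPDB-eff} then forces $r_{\textup{PDB}} \geq 1$, contradicting (c). This case mirrors the proof of Lemma~\ref{lemma:inefficientPDBstatelemma}; the extra $\rpdbinfinite$ summand in \eqref{eq:rPDB-eff} does not interfere with the propagation.

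The step I expect to be the main obstacle, and the genuinely new ingredient compared to the inefficient variant, is the case $d(\alpha(s)) = \infty$, where $\alpha(s) \notin S$ and there is no variable $\rpdbstate{\alpha(s)}$ available. Here I would argue that every $\pdbstate \in S$ disagrees with $\alpha(s)$ on at least one pattern variable, so by (d) some literal of the conjunction encoded by \eqref{eq:rpdbstate-eff} for $\pdbstate$ is falsified, whence propagation yields $\overline{\rpdbstate{\pdbstate}} \geq 1$; performing this for all $\pdbstate \in S$ and propagating through \eqref{eq:rpdbinfinite-eff} forces $\rpdbinfinite \geq 1$, and then \eqref{eq:rPDB-eff} forces $r_{\textup{PDB}} \geq 1$, again contradicting (c). (In this case assumption (b), which is the trivially true $\gecost{0}\geq 1$, is not needed.) Since every step is a single unit propagation over a reified constraint already present, a conflict is reached by unit propagation alone, and the state lemma follows by RUP.
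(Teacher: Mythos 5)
Your proposal is correct and matches the paper's own proof essentially step for step: propagate $\mathcal{C}_s$ to fix the pattern variables to $\alpha(s)$, then in the case $d(\alpha(s))<\infty$ go through \eqref{eq:rpdbstate-eff}, \eqref{eq:rpdbstategeg-eff} and \eqref{eq:rPDB-eff}, and in the case $d(\alpha(s))=\infty$ falsify $\rpdbstate{\pdbstate}$ for every $\pdbstate\in S$ and reach $r_{\textup{PDB}}$ via \eqref{eq:rpdbinfinite-eff} and \eqref{eq:rPDB-eff}. The case split on whether $\alpha(s)$ is backwards-reachable, which you flag as the new ingredient, is exactly how the paper handles it as well.
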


\begin{proof}
Assume (a) $r_s\geq 1$, (b) $\gecost{\max\{0,B-h(s)\}}\geq 1$, and (c)
$\overline{r_{\textup{PDB}}}\geq 1$.  From
(a) and $\mathcal C_s$, we receive $v\geq 1$ for each $v\in s$ and $\bar v\geq 1$ for each
$v\in\variables\setminus s$,
and in particular (d) $v\geq 1$ for each $v\in\alpha(s)$ and $\bar v\geq 1$ for each
$v\in P\setminus \alpha(s)$. 

If $d(\alpha(s))<\infty$, we derive with constraints \eqref{eq:rpdbstate-eff} and
(d), that (e) $\rpdbstate{\alpha(s)}\geq 1$. Since $h(s) = d(\alpha(s))$, we
can use (b) and (e) with \eqref{eq:rpdbstategeg-eff} to derive (f)
$\rpdbstategeg{\alpha(s)}$. With \eqref{eq:rPDB-eff}, this gives us
$r_{\textup{PDB}}$, a contradiction to (c).

If $d(\alpha(s))=\infty$, we get from constraints \eqref{eq:rpdbstate-eff} and
(d) for every $\pdbstate$ with $d(\pdbstate)<\infty$, that (g)
$\overline{\rpdbstate{\pdbstate}}\geq 1$, which gives with
\eqref{eq:rpdbinfinite-eff} that $\rpdbinfinite\geq 1$. With \eqref{eq:rPDB-eff}, this gives us
$r_{\textup{PDB}}\geq 1$, a contradiction to (c).
\end{proof}

Since the PB circuit for the heuristic is the same for all evaluated states, we
only need to include the proof for the goal lemma and for the inductivity lemma
once in the overall generated proof. For the goal lemma, we use the following:

\begin{lemma}
  It is possible to derive the goal lemma\\
$(\rgoal\land{r_{\textup{PDB}}})\rightarrow  {\gecost{B}}$ from $\cgoal\cup \Hpdb$.
\label{lemma:PDBgoallemma-eff}
\end{lemma}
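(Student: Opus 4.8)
The plan is to reuse the skeleton of the inefficient goal lemma (Lemma~\ref{lemma:inefficientPDBgoallemma}), but with one extra ingredient forced on us by the efficient encoding: the disjunction \eqref{eq:rPDB-eff} defining $r_{\textup{PDB}}$ now also contains the variable $\rpdbinfinite$, so after showing that none of the $\rpdbstategeg{\pdbstate}$ with $\pdbstate\in S$ can hold we must additionally rule out $\rpdbinfinite$. The key observation is that every \emph{abstract goal state} $\pdbstate$ --- i.e., every $\pdbstate$ with $\goal^{\alpha}\subseteq\pdbstate\subseteq P$ --- has $d(\pdbstate)=0<\infty$ and hence lies in $S$, so the reifications \eqref{eq:rpdbstate-eff} and \eqref{eq:rpdbstategeg-eff} are present for all of them.

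As a preprocessing step I would invoke the State Set Extension Lemma (Lemma~\ref{lemma:partialstatesetextension}) with $Y=\goal\cap P$, with $\alpha$ the all-true assignment on $Y$, with $Z=P$, and with $\mathcal{B}$ the set of assignments of $P$ extending $\alpha$, which are exactly the abstract goal states. Hypothesis \eqref{eq:eqbeta} of that lemma is provided by the $\Leftarrow$ halves of the reifications \eqref{eq:rpdbstate-eff} for these states, and hypothesis \eqref{eq:eqalpha} is obtained from \eqref{eq:rgoal} by a few cutting-planes steps that drop the goal variables outside $P$, using $\rgoal$ as $r_\alpha$. Lemma~\ref{lemma:partialstatesetextension} then produces, in a number of steps bounded by the size of the pattern database, the constraint $\overline{\rgoal}+\sum_{\pdbstate}\rpdbstate{\pdbstate}\geq 1$, where the sum ranges over all abstract goal states $\pdbstate$.

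With this constraint in hand, the goal lemma follows by RUP. Assume $\rgoal\geq 1$, $r_{\textup{PDB}}\geq 1$ and $\overline{\gecost{B}}\geq 1$. From $\rgoal$ and \eqref{eq:rgoal} every goal variable is true. For each $\pdbstate\in S$: if $\pdbstate$ is not an abstract goal state, some $v\in\goal\cap P$ is absent from $\pdbstate$, so \eqref{eq:rpdbstate-eff} yields $\overline{\rpdbstate{\pdbstate}}\geq 1$ and then \eqref{eq:rpdbstategeg-eff} yields $\overline{\rpdbstategeg{\pdbstate}}\geq 1$; if $\pdbstate$ is an abstract goal state, then $d(\pdbstate)=0$, so $\mmgecost{B-d(\pdbstate)}$ is $\mmgecost{B}$, which under the clipping convention \eqref{eq:mmgecost} coincides with $\gecost{B}$, and $\overline{\gecost{B}}\geq 1$ together with \eqref{eq:rpdbstategeg-eff} again yields $\overline{\rpdbstategeg{\pdbstate}}\geq 1$. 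Now every $\rpdbstategeg{\pdbstate}$ occurring in \eqref{eq:rPDB-eff} is $0$, so with $r_{\textup{PDB}}\geq 1$ that reification propagates $\rpdbinfinite\geq 1$, and then \eqref{eq:rpdbinfinite-eff} propagates $\overline{\rpdbstate{\pdbstate}}\geq 1$ for every $\pdbstate\in S$, in particular for every abstract goal state. This contradicts the preprocessed constraint $\overline{\rgoal}+\sum_{\pdbstate}\rpdbstate{\pdbstate}\geq 1$ together with $\rgoal\geq 1$.

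The main obstacle is precisely the step that rules out $\rpdbinfinite$: once all $\rpdbstate{\pdbstate}$ with $\pdbstate\in S$ have been forced to $0$, one cannot close the RUP derivation directly, because $\rgoal$ pins down only the goal variables and leaves the remaining variables of $P$ unassigned, so no single constraint of $\Hpdb$ becomes unit. The State Set Extension Lemma is exactly the device that precompiles the necessary case analysis over all completions of the goal assignment on $P$ into a linear (in the pattern-database size) number of RUP steps; everything else is the same bookkeeping as in the inefficient version. (As in Lemma~\ref{lemma:inefficientPDBgoallemma}, the antecedent is understood to also supply the clipping reifications identifying $\mmgecost{B}$ with $\gecost{B}$.)
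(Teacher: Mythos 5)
Your proposal is correct and follows essentially the same route as the paper's proof: both hinge on Lemma~\ref{lemma:partialstatesetextension} applied to the abstract goal assignment on $P$ to obtain the disjunction over abstract goal states, then rule out every $\rpdbstategeg{\pdbstate}$ (non-goal states via the goal variables, goal states via $d(\pdbstate)=0$ and $\overline{\gecost{B}}$) and finally eliminate $\rpdbinfinite$. The only differences are cosmetic: the paper introduces a fresh reification $r_{G^\alpha}$ of the abstract goal and a separate intermediate RUP step deriving $\rgoal\rightarrow\overline{\rpdbinfinite}$, whereas you use $\rgoal$ directly (deriving the required implication over $\goal\cap P$ by weakening/saturation) and close everything in a single final RUP, which is equally valid.
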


\begin{proof}
Define a reification of the abstract goal as
  \[r_{G^\alpha}\Leftrightarrow \sum_{v\in G\cap P} v \geq |G\cap P|.\]
We first establish
  (a) $\overline{r_{G^\alpha}} + \sum_{\pdbstate\subseteq P,G\cap P\subseteq
  \pdbstate} \rpdbstate{\pdbstate} \geq 1$
  using lemma \ref{lemma:partialstatesetextension}: Use $\alpha = \{v\mapsto
  1\mid v\in G\cap P\}$ and for each abstract state $\pdbstate$ with $G\cap
  P\subseteq  \pdbstate$, $\mathcal B$ contains the assignment
  $\beta:P\rightarrow\{0,1\}$ with $\beta(v) = 1$ iff $v\in \pdbstate$.

Next, we use RUP to obtain
(b) $r_G\rightarrow  \overline{\rpdbinfinite}$, assuming
(c) $r_G \geq 1$ and (d) $\rpdbinfinite \geq 1$.
From \eqref{eq:rpdbinfinite-eff} and (d) we obtain
(e) $\overline{\rpdbstate{\pdbstate}} \geq 1$ for each
$\rpdbstate{\pdbstate} \in S$ and in particular for each
$\rpdbstate{\pdbstate}$ where $\pdbstate$ is an abstract goal state which have
$d(\pdbstate) = 0 < \infty$
From (c), we get $v\geq 1$ for all $v\in G$, an in particular for $v\in G\cap
P$. From these we obtain (f) $r_{G^\alpha}\geq 1$
From (a) and (e) we obtain $\overline{r_{G^\alpha}}$ contradicting (f).
\smallskip

We now can establish the desired constraint via RUP:
Assume
(g) $\rgoal\geq 1$,
(h) $r_{\textup{PDB}}\geq 1$ and
(i) $\overline{\gecost{B}} \geq 1$. From (g) and \eqref{eq:rgoal} we can derive
that
(j) $v\geq 1$ for each goal variable $v\in\goal$.

We now iterate over all abstract states $\pdbstate$ with $d(\pdbstate)
<\infty$. If $\pdbstate$ is
\emph{not} an abstract goal state, we use (j) to derive from
\eqref{eq:rpdbstate-eff} that $\overline{\rpdbstate{\pdbstate}}\geq 1$ and
consequently with \eqref{eq:rpdbstategeg-eff}
$\overline{\rpdbstategeg{\pdbstate}} \geq 1$. If $\pdbstate$ is an abstract
goal state, we exploit that $d(\pdbstate)=0$ and use (i) with
\eqref{eq:rpdbstategeg-eff} to derive $\overline{\rpdbstategeg{\pdbstate}}
\geq 1$. So for all abstract states $\pdbstate$ with $d(\pdbstate) <\infty$ we
derived
(k) $\overline{\rpdbstategeg{\pdbstate}} \geq 1$. With (h), (k) and
\eqref{eq:rPDB-eff}, we derive $\rpdbinfinite\geq 1$, and from (g) and (b) that
$\overline{\rpdbinfinite}\geq 1$, a contradiction.
\end{proof}
  
For the inductivity lemma, we develop the derivation by means of several lemmas.
The first one considers the case where applying an induced abstract action
$\aalpha$ in abstract state $\pdbstate$ leads to an abstract successor state
that is backwards reachable from the goal.

\begin{lemma}
For each action $a\in \actions$ and abstract state $\pdbstate$ such that $\aalpha$
is applicable in $\pdbstate$ and $d(\app{\pdbstate}{a^\alpha})<\infty$, RUP can derive\\
$(\rpdbstate{\pdbstate}\land r_a)\rightarrow
\rpdbprimedstate{\app{\pdbstate}{a^\alpha}}$ from $\ctrans\cup \Hpdb\cup \Hpdb'$.
\label{lemma:PDB1-eff}
\end{lemma}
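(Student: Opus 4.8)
The plan is to reuse the argument of Lemma~\ref{lemma:inefficientPDB1} essentially unchanged, since the reification \eqref{eq:rpdbstate-eff} (and its primed copy in $\Hpdb'$) has exactly the same form as \eqref{eq:rinefficientpdbstate} from the non-efficient construction; the only genuinely new point to check is that the reification variables we use actually exist in the efficient circuit, which only introduces variables for abstract states with finite goal distance. Since $\aalpha$ is applicable in $\pdbstate$, we have $d(\pdbstate)\leq d(\app{\pdbstate}{a^\alpha})+\cost(a)$, so the hypothesis $d(\app{\pdbstate}{a^\alpha})<\infty$ implies $d(\pdbstate)<\infty$ as well; hence both $\rpdbstate{\pdbstate}$ and $\rpdbprimedstate{\app{\pdbstate}{a^\alpha}}$ are introduced via \eqref{eq:rpdbstate-eff} in $\Hpdb$ and $\Hpdb'$ respectively, and the statement is well-formed.

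The derivation itself is a single RUP step. Assuming the negation of the target constraint yields (a)~$\rpdbstate{\pdbstate}\geq 1$, (b)~$r_a\geq 1$, and (c)~$\overline{\rpdbprimedstate{\app{\pdbstate}{a^\alpha}}}\geq 1$. From (a) and the forward direction of \eqref{eq:rpdbstate-eff}, unit propagation fixes $v$ for every $v\in P$ according to its membership in $\pdbstate$. From (b) and \eqref{eq:actionvar}, which behaves as a conjunction, it fixes $v'\geq 1$ for $v\in\add(a)$, $\overline{v'}\geq 1$ for $v\in\del(a)$, and $\eqvar{v}\geq 1$ for $v\in\variables\setminus\effvars(a)$; unfolding $\eqvar{v}$ through $\leqvar{v}$ and $\geqvar{v}$ as in \eqref{eq:eqvars} then propagates $v'=v$ for those unaffected variables. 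Since, restricted to $P$, the set $\app{\pdbstate}{a^\alpha}$ is $(\pdbstate\setminus\del(a))\cup\add(a)$, these propagated values pin every $v'$ with $v\in P$ to precisely the value it takes in the encoding of $\app{\pdbstate}{a^\alpha}$. Feeding this into the backward direction of the primed \eqref{eq:rpdbstate-eff} for $\app{\pdbstate}{a^\alpha}$ forces $\rpdbprimedstate{\app{\pdbstate}{a^\alpha}}\geq 1$, contradicting (c); so RUP reaches a conflict and the implication is derived.

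I do not expect a real obstacle. The only place needing a little care is the case $v\in P\setminus\effvars(a)$, where one must route through $\leqvar{v}$ and $\geqvar{v}$ to obtain $v=v'$ before combining with the value of $v$ determined from $\pdbstate$ --- exactly the bookkeeping already carried out in Lemma~\ref{lemma:inefficientPDB1}. It is worth noting that, unlike in the non-efficient version, neither $\cgeq$ nor $\ccost$ appears in the hypotheses; this is consistent with the fact that the argument never touches the cost variables $\cvars$, since this lemma deliberately ignores cost and only tracks the abstract state transition, with cost being reintroduced in the lemmas that follow.
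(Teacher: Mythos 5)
Your proposal is correct and follows essentially the same RUP argument as the paper's proof: propagate the abstract state from \eqref{eq:rpdbstate-eff}, the effects and $\eqvar{v}$ literals from \eqref{eq:actionvar}, transfer unaffected pattern variables via \eqref{eq:eqvars}, and conclude with the primed reification for $\app{\pdbstate}{a^\alpha}$. Your additional observation that $d(\pdbstate)<\infty$ follows from applicability and $d(\app{\pdbstate}{a^\alpha})<\infty$, so that both reification variables actually exist in the efficient circuit, is a point the paper leaves implicit and is a welcome clarification.
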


\begin{proof}
By RUP assume (a) $\rpdbstate{\pdbstate}\geq 1$, (b) $r_a\geq 1$, and (c)
$\overline{\rpdbprimedstate{\app{\pdbstate}{a^\alpha}}}\geq 1$.
From (a) and \eqref{eq:rpdbstate-eff} we get (d) $v \geq 1$ for all $v\in \pdbstate$
and $\bar{v}\geq 1$ for all $v\in P\setminus\pdbstate$.
From (b) and \eqref{eq:actionvar}, we get
(e) $v'\geq 1$ for all $v\in\add(a)$, (f) $\overline{v'}\geq 1$ for all
$v\in\del(a)$, and (g) $\eqvar{v}\geq 1$ for all $v\in
\variables\setminus\effvars(a)$. From (d) and (g) and \eqref{eq:eqvars}, we derive for all $v\in
P\setminus\effvars(a)$ that $v'\geq 1$ if $v\in \pdbstate$ and
$\overline{v'}\geq 1$ if $v\notin\pdbstate$. Combining this with (e), (f) and
the primed \eqref{eq:rpdbstate-eff} from $\Hpdb'$ for $\app{\pdbstate}{a^\alpha}$, we derive
$\rpdbprimedstate{\app{\pdbstate}{a^\alpha}}\geq 1$, contradicting (c).
\end{proof}

The next lemma again considers individual actions and abstract states, but
takes cost into account and allows to derive that the successor state-cost pair is
contained in the invariant of the certificate.
\begin{lemma}
For each action $a\in \actions$ and abstract state $\pdbstate$ such that $\aalpha$
  is applicable in $\pdbstate$ and $d(\app{\pdbstate}{a^\alpha})<\infty$, it is possible to derive\\
$(\rpdbstategeg{\pdbstate}\land r_a)\rightarrow
  r'_{\textup{PDB}}$ from $\ctrans\cup \Hpdb\cup \Hpdb'$.
\label{lemma:PDB2-eff}
\end{lemma}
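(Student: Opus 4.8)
The plan is to mirror the proof of Lemma~\ref{lemma:inefficientPDB2} almost verbatim. The efficient encoding differs from the naive one only in two ways: the per-state reifications \eqref{eq:rpdbstate-eff} and \eqref{eq:rpdbstategeg-eff} are present only for abstract states with finite goal distance, and the top-level reification \eqref{eq:rPDB-eff} carries an additional disjunct $\rpdbinfinite$. The extra hypothesis $d(\app{\pdbstate}{a^\alpha}) < \infty$ is exactly what guarantees that $\app{\pdbstate}{a^\alpha}$ lies in the set $S$ of finite-distance abstract states, so the reifications for $\app{\pdbstate}{a^\alpha}$ and their primed copies in $\Hpdb'$ exist and may be invoked. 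The additional $\rpdbinfinite$ disjunct is harmless here, since we only need to \emph{establish} $r'_{\textup{PDB}}\geq 1$ and $\rpdbprimedstategeg{\app{\pdbstate}{a^\alpha}}$ remains one of the summands of the primed \eqref{eq:rPDB-eff}.

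First I would record three auxiliary derivations. (a) By Lemma~\ref{lemma:costs3} with $l = B - d(\pdbstate)$ and $m = \cost(a)$, derive $(\mmgecost{B-d(\pdbstate)} \land \deltacost{\cost(a)}) \rightarrow \mmgeprimedcost{B-d(\pdbstate)+\cost(a)}$ from $\cgeq$. (b) Since $d$ is the abstract goal-distance function and $\cost(\aalpha) = \cost(a)$, we have $d(\app{\pdbstate}{a^\alpha}) + \cost(a) \geq d(\pdbstate)$, hence $B - d(\app{\pdbstate}{a^\alpha}) \leq B - d(\pdbstate) + \cost(a)$; Lemma~\ref{lemma:costs1-variant} (applied to the primed cost variables) then yields $\mmgeprimedcost{B-d(\pdbstate)+\cost(a)} \rightarrow \mmgeprimedcost{B-d(\app{\pdbstate}{a^\alpha})}$. (c) By Lemma~\ref{lemma:PDB1-eff}, whose side condition $d(\app{\pdbstate}{a^\alpha})<\infty$ is satisfied by assumption, derive $(\rpdbstate{\pdbstate} \land r_a) \rightarrow \rpdbprimedstate{\app{\pdbstate}{a^\alpha}}$.

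The claim then follows by a single RUP step. Assume $\rpdbstategeg{\pdbstate}\geq 1$, $r_a\geq 1$, and $\overline{r'_{\textup{PDB}}}\geq 1$. From the first assumption and \eqref{eq:rpdbstategeg-eff} we get $\rpdbstate{\pdbstate}\geq 1$ and $\mmgecost{B-d(\pdbstate)}\geq 1$; from $r_a$ and the conjunction encoded by \eqref{eq:actionvar} we get $\deltacost{\cost(a)}\geq 1$. Combining $\rpdbstate{\pdbstate}\geq 1$ and $r_a\geq 1$ with (c) gives $\rpdbprimedstate{\app{\pdbstate}{a^\alpha}}\geq 1$; combining $\mmgecost{B-d(\pdbstate)}\geq 1$ and $\deltacost{\cost(a)}\geq 1$ with (a) gives $\mmgeprimedcost{B-d(\pdbstate)+\cost(a)}\geq 1$, and then (b) gives $\mmgeprimedcost{B-d(\app{\pdbstate}{a^\alpha})}\geq 1$. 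Feeding these last two facts into the primed copy of \eqref{eq:rpdbstategeg-eff} from $\Hpdb'$ yields $\rpdbprimedstategeg{\app{\pdbstate}{a^\alpha}}\geq 1$, and since $\app{\pdbstate}{a^\alpha}\in S$ this summand forces the primed copy of \eqref{eq:rPDB-eff} to give $r'_{\textup{PDB}}\geq 1$, contradicting the third assumption.

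I do not expect a genuine obstacle: the argument is essentially that of Lemma~\ref{lemma:inefficientPDB2}. The one point requiring care — and the reason the side condition $d(\app{\pdbstate}{a^\alpha})<\infty$ appears in the statement — is ensuring that every reification invoked for the successor abstract state (the primed \eqref{eq:rpdbstate-eff} and \eqref{eq:rpdbstategeg-eff}, and the matching summand in the primed \eqref{eq:rPDB-eff}) actually occurs in $\Hpdb'$. Without finiteness of $d(\app{\pdbstate}{a^\alpha})$ the lemma simply does not apply, and that case is instead absorbed separately via $\rpdbinfinite$ in subsequent lemmas.
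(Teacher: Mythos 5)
Your proposal is correct and follows essentially the same route as the paper's own proof: the same three auxiliary derivations (Lemma~\ref{lemma:costs3}, Lemma~\ref{lemma:costs1-variant} via the admissibility inequality $d(\app{\pdbstate}{a^\alpha})+\cost(a)\geq d(\pdbstate)$, and Lemma~\ref{lemma:PDB1-eff}), followed by the identical RUP argument through the primed \eqref{eq:rpdbstategeg-eff} and \eqref{eq:rPDB-eff}. Your remark on why the side condition $d(\app{\pdbstate}{a^\alpha})<\infty$ is needed matches the paper's design exactly.
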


\begin{proof}
We start by deriving some constraints over costs that we will use later in a RUP
proof.

First, we derive (a) $(\gecost{\max\{0,
B-d(\pdbstate)\}}\land\deltacost{\cost(a)})\rightarrow\geprimedcost{\min\{B,\max\{0,B-d(\pdbstate)+\cost(a)\}\}}$
as described in lemma \ref{lemma:costs3}.

We know that for the abstract goal distance it holds that
$d(\app{\pdbstate}{a^\alpha})+\cost(a^\alpha)\geq d(\pdbstate)$.
Together with $\cost(a^\alpha) =\cost(a)$, it follows that
$B-d(\app{\pdbstate}{a^\alpha})\leq B-d(\pdbstate) + \cost(a)$.
We use this to derive (b)
$\mmgeprimedcost{B-d(\pdbstate)+\cost(a)}
\rightarrow\mmgeprimedcost{B-d(\app{\pdbstate}{a^\alpha})}$ as described
in lemma \ref{lemma:costs1-variant}.

In addition, we establish (c) $(\rpdbstate{\pdbstate}\land r_a)\rightarrow
\rpdbprimedstate{\app{\pdbstate}{a^\alpha}}$ by RUP (lemma \ref{lemma:PDB1-eff}).

Now we can derive the constraint in the claim by RUP, assuming that
(d) $\rpdbstategeg{\pdbstate}\geq 1$, (e) $r_a\geq 1$, and
(f) $\overline{r'_{\textup{PDB}}}\geq 1$.
From (d), we get with \eqref{eq:rpdbstategeg-eff} that (g) $\rpdbstate{\pdbstate}\geq 1$
and (h) $\mmgecost{B-d({\pdbstate})}\geq 1$. From (e), (g) and (c) we
derive (i) $\rpdbprimedstate{\app{\pdbstate}{a^\alpha}}\geq 1$.

From (e) and \eqref{eq:actionvar}, we derive (j) $\deltacost{\cost(a)}\geq 1$.
From (a) together with (h) and (j) we derive that (k)
$\mmgeprimedcost{B-d(\pdbstate)+\cost(a)}\geq 1$.
From (b) together with (k) we derive that (l)
$\mmgeprimedcost{B-d(\app{\pdbstate}{a^\alpha})}\geq1$.
From (l), (i) and the primed constraint \eqref{eq:rpdbstategeg-eff} from
$\Hpdb'$ we get
$\rpdbprimedstategeg{\app{\pdbstate}{a^\alpha}}$. With the primed version of
\eqref{eq:rPDB-eff} from $\Hpdb'$ this yields $r'_{\textup{PDB}}\geq 1$,
contradicting (e).
\end{proof}

The next lemma considers the case that the successor state of some
backwards-reachable state is not backwards reachable, and allows to derive that
this state-cost pair is contained in the invariant of the certificate.
\begin{lemma}
For each action $a\in \actions$ and abstract state $\pdbstate$ such that
$\aalpha$ is applicable in $\pdbstate$ and $d(\pdbstate)<\infty$ and
$d(\app{\pdbstate}{a^\alpha})=\infty$, it is possible to derive\\
$(\rpdbstategeg{\pdbstate}\land r_a)\rightarrow
r'_{\textup{PDB}}$ from $\ctrans\cup \Hpdb\cup \Hpdb'$.
\label{lemma:PDB2inf-eff}
\end{lemma}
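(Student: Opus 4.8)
The plan is to prove this by a single RUP derivation from $\ctrans\cup \Hpdb\cup \Hpdb'$ — and, unlike Lemma~\ref{lemma:PDB2-eff}, with \emph{no} cost reasoning whatsoever. The point is that applying $\aalpha$ in $\pdbstate$ reaches an abstract state of infinite abstract goal distance, so the successor state-cost pair will be covered by the primed copy of $\rpdbinfinite$ (via \eqref{eq:rpdbinfinite-eff} and \eqref{eq:rPDB-eff}) rather than by a constraint of the form \eqref{eq:rpdbstategeg-eff}; consequently nothing about the successor cost ever needs to be derived. Write $\pdbstate^\star = \app{\pdbstate}{a^\alpha} = (\pdbstate\setminus\del(\aalpha))\cup\add(\aalpha)$; by hypothesis $d(\pdbstate^\star)=\infty$, so $\pdbstate^\star\notin S$ and $\Hpdb'$ has no variable representing $\pdbstate^\star$ — which is exactly why the finite-distance Lemma~\ref{lemma:PDB1-eff} cannot be applied directly here.

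Concretely, I would assume (a)~$\rpdbstategeg{\pdbstate}\geq 1$, (b)~$r_a\geq 1$, and (c)~$\overline{r'_{\textup{PDB}}}\geq 1$, and unit-propagate. From (a) and \eqref{eq:rpdbstategeg-eff} we obtain $\rpdbstate{\pdbstate}\geq 1$, and then \eqref{eq:rpdbstate-eff} fixes every $v\in P$ ($v\geq 1$ for $v\in\pdbstate$, $\overline v\geq 1$ otherwise). From (b) and \eqref{eq:actionvar} we get $v'\geq 1$ for $v\in\add(a)$, $\overline{v'}\geq 1$ for $v\in\del(a)$, and $\eqvar v\geq 1$ for $v\in\variables\setminus\effvars(a)$; feeding the last of these into \eqref{eq:eqvars} together with the already-known value of $v$ fixes $v'$ for the remaining $v\in P$. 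Collecting these facts, the primed variables restricted to $P$ encode exactly $\pdbstate^\star$.

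Now, for every $t\in S$ we have $d(t)<\infty=d(\pdbstate^\star)$, hence $t\neq\pdbstate^\star$, so $t$ and $\pdbstate^\star$ disagree on some variable of $P$; the primed copy of \eqref{eq:rpdbstate-eff} for $t$ therefore unit-propagates $\overline{\rpdbprimedstate{t}}\geq 1$. With all of these in hand, the primed copy of \eqref{eq:rpdbinfinite-eff} propagates ${\rpdbinfinite}'\geq 1$, and then the primed copy of \eqref{eq:rPDB-eff} propagates $r'_{\textup{PDB}}\geq 1$, contradicting (c). (Should $\aalpha$ in fact not be applicable in $\pdbstate$, some variable of $\pre(a)\cap P$ is forced false by \eqref{eq:rpdbstate-eff} while \eqref{eq:actionvar} forces it true, so the RUP derivation closes immediately and the implication holds vacuously; the applicability hypothesis just lets us ignore this case.)

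The main point to get right is structural rather than computational: since $\pdbstate^\star$ has no representative in $\Hpdb'$, the argument has to route through $\rpdbinfinite$ and therefore reason about \emph{all} primed abstract-state variables at once, relying on the finite enumeration $S$. The remaining obligation — checking that the propagated primed literals really describe $\pdbstate^\star$ across the cases $v\in\add(a)$, $v\in\del(a)$, and $v\in P\setminus\effvars(a)$, using $\pdbstate^\star=(\pdbstate\setminus\del(\aalpha))\cup\add(\aalpha)$ — is routine, and since the derivation visits each $t\in S$ only once it stays linear in $|S|$, in keeping with a constant-factor overhead over a goal-directed PDB computation.
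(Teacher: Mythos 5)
Your proposal is correct and follows essentially the same route as the paper's proof: a single RUP derivation with no cost reasoning, propagating the primed state variables over $P$ to the abstract successor via \eqref{eq:rpdbstate-eff}, \eqref{eq:actionvar} and \eqref{eq:eqvars}, then falsifying $\rpdbprimedstate{t}$ for every $t\in S$ (since the successor has infinite abstract goal distance and is not in $S$) to trigger ${\rpdbinfinite}'$ and finally $r'_{\textup{PDB}}$ via the primed \eqref{eq:rpdbinfinite-eff} and \eqref{eq:rPDB-eff}.
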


\begin{proof}
By RUP assume
(a) $\rpdbstategeg{\pdbstate} \geq 1$ and
(b) $r_a \geq 1$ and
(c) $\overline{r'_{\textup{PDB}}} \geq 1$.
From (a) and \eqref{eq:rpdbstategeg-eff} we derive
(d) $\rpdbstate{\pdbstate} \geq 1$.

From (b) and \eqref{eq:actionvar}, we get
(e) $v'\geq 1$ for all $v\in\add(a)$,
(f) $\overline{v'}\geq 1$ for all $v\in\del(a)$, and
(g) $\eqvar{v}\geq 1$ for all $v\in\variables\setminus\effvars(a)$. From (d)
and \eqref{eq:rpdbstate-eff} we obtain that
(h) $v\geq 1$ for all $v\in\pdbstate$ and
(i) $\overline{v}\geq1$ for all $v\in P\setminus\pdbstate$.
Using \eqref{eq:eqvars} with (g), (h) and (i), we derive for all $v\in P\setminus\effvars(a)$ that
(j) $v'\geq 1$ if $v\in \pdbstate$ and
(k) $\overline{v'}\geq 1$ if $v\notin\pdbstate$.
As $d(\app{\pdbstate}{a^\alpha})=\infty$,
we can derive from the primed versions
of \eqref{eq:rpdbstate-eff} from $\Hpdb'$ and (e), (f), (j) and (k) that
(l) $\overline{\widehat{{\rpdbstate{\pdbstate}}'}} \geq 1$
for all $\widehat{\rpdbstate{\pdbstate}} \in S$. Therefore, using the primed
version of \eqref{eq:rpdbinfinite-eff} from $\Hpdb'$ we obtain
(m) ${\rpdbinfinite}' \geq 1$,
which with \eqref{eq:rPDB-eff} yields $r'_{\textup{PDB}} \geq 1$ contradicting (b).
\end{proof}

The next lemma generalizes the previous lemmas from individual actions to the
entire transition relation:

\begin{lemma}
  For each abstract state $\pdbstate$ with $d(\pdbstate)<\infty$, it is possible to derive\\
$(\rpdbstategeg{\pdbstate}\land \rtrans)
  \rightarrow r'_{\textup{PDB}}$ from $\ctrans\cup \Hpdb\cup \Hpdb'$.
\label{lemma:PDB3-eff}
\end{lemma}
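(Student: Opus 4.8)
The plan is to follow the same two-level structure as the proof of Lemma~\ref{lemma:inefficientPDB3}: first derive, for \emph{every} action $a\in\actions$, the implication $(\rpdbstategeg{\pdbstate}\land r_a)\rightarrow r'_{\textup{PDB}}$, and then combine these with a single RUP step that uses the reification \eqref{eq:rtrans} of $\rtrans$ to conclude. The only difference from the inefficient version is that the per-action implications now come from \emph{two} lemmas, depending on the goal distance of the abstract successor, and that actions whose induced abstract action is inapplicable in $\pdbstate$ have to be handled separately.

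Concretely, I would fix $\pdbstate$ with $d(\pdbstate)<\infty$ and consider an arbitrary $a\in\actions$. If $\aalpha$ is applicable in $\pdbstate$ and $d(\app{\pdbstate}{a^\alpha})<\infty$, the implication is exactly Lemma~\ref{lemma:PDB2-eff}. If $\aalpha$ is applicable in $\pdbstate$ but $d(\app{\pdbstate}{a^\alpha})=\infty$, it is Lemma~\ref{lemma:PDB2inf-eff}. If $\aalpha$ is not applicable in $\pdbstate$, then there is a variable $v\in\pre(a)\cap P$ with $v\notin\pdbstate$; rather than deriving a named constraint here, I would argue directly inside the final RUP proof: from $\rpdbstategeg{\pdbstate}\geq 1$ and \eqref{eq:rpdbstategeg-eff} we obtain $\rpdbstate{\pdbstate}\geq 1$, hence $\overline{v}\geq 1$ by \eqref{eq:rpdbstate-eff}, and since $v\in\pre(a)$ the conjunction encoded by \eqref{eq:actionvar} unit-propagates $\overline{r_a}\geq 1$.

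Finally I would close by RUP, assuming $\rpdbstategeg{\pdbstate}\geq 1$, $\rtrans\geq 1$, and $\overline{r'_{\textup{PDB}}}\geq 1$: for each action whose abstract version is applicable in $\pdbstate$, the corresponding implication from Lemma~\ref{lemma:PDB2-eff} or~\ref{lemma:PDB2inf-eff} together with the first and third assumptions yields $\overline{r_a}\geq 1$, and for the remaining actions $\overline{r_a}\geq 1$ follows directly as above; then \eqref{eq:rtrans} forces $\overline{\rtrans}\geq 1$, contradicting the second assumption. The step I expect to require the most care — bookkeeping rather than a genuine difficulty — is the exhaustive case split over actions: one must make sure that inapplicable abstract actions are covered (they fall outside the scopes of Lemmas~\ref{lemma:PDB2-eff} and~\ref{lemma:PDB2inf-eff}) and that the $d(\app{\pdbstate}{a^\alpha})=\infty$ branch really does reach $r'_{\textup{PDB}}$, which is precisely what Lemma~\ref{lemma:PDB2inf-eff} guarantees via the primed $\rpdbinfinite$ reification.
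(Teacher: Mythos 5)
Your proposal is correct and matches the paper's own proof essentially step for step: per-action implications from Lemma~\ref{lemma:PDB2-eff} (finite abstract successor distance) and Lemma~\ref{lemma:PDB2inf-eff} (infinite), inapplicable abstract actions handled inside the final RUP via $\rpdbstategeg{\pdbstate}\rightarrow\rpdbstate{\pdbstate}$, a violated precondition from \eqref{eq:rpdbstate-eff}, and \eqref{eq:actionvar}, and the closing contradiction through \eqref{eq:rtrans}. No gaps.
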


\begin{proof}
For each action $a\in \actions$ that is applicable in $\pdbstate$, we establish
(a) $(\rpdbstategeg{\pdbstate}\land r_a)\rightarrow r'_{\textup{PDB}}$ by means
of lemmas \ref{lemma:PDB2-eff} and \ref{lemma:PDB2inf-eff}.

  Afterwards, the desired constraint can be derived by RUP, assuming (b)
  $\rpdbstategeg{\pdbstate}\geq 1$, (c) $\rtrans\geq 1$ and (d)
  $\overline{r'_{\textup{PDB}}}\geq 1$.

  For each action $a\in \actions$, we derive (e) $\overline{r_a}\geq 1$ as follows:
  If $\aalpha$ is applicable in $\pdbstate$, we use (a), (b) and (d).
  Otherwise, there is a $v\in \pre(a)\cap P$ with $v\notin \pdbstate$. From (b)
  and \eqref{eq:rpdbstategeg-eff}, we derive $\rpdbstate{\pdbstate}\geq 1$, which
  with \eqref{eq:rpdbstate-eff} gives us (f) $\overline{v}\geq 1$. Since
  $v\in\pre(a)$, we can use (f) with \eqref{eq:actionvar} to derive (e) also
  for this case.
  Since we have derived (e) for all actions $a\in\actions$, we can use
  \eqref{eq:rtrans} to derive $\overline{\rtrans}\geq 1$, a contradiction to
  (c).
\end{proof}

The previous lemmas have covered the cases where the considered state before a
transition is backwards-reachable. The next series of lemmas covers the cases
where a transition starts in a region that is not backwards-reachable.

We say that action $a$ is \emph{consistent} with abstract state $\pdbstate$, if
there is an abstract state $\widehat{\pdbstate}$ such that the application of
$\aalpha$ in $\widehat{\pdbstate}$ leads to $\pdbstate$. An action $a$ is
consistent with $\pdbstate$ iff
\begin{itemize}
  \item all add effects of $a$ from $P$ are true in $\pdbstate$\\ $\add(a)\cap P\subseteq \pdbstate$,
  \item all delete effects of $a$ from $P$ are false in $\pdbstate$\\ $\del(a)\cap P\cap\pdbstate = \emptyset$, and
  \item all preconditions of $a$ from $P$ that are not affected by $a$ are
    still true in $\pdbstate$: $(\pre(a)\cap P)\setminus \effvars(a)\subseteq
    \pdbstate$
\end{itemize}
If the action is not consistent with $\pdbstate$, we say it is \emph{inconsistent}.

The following lemma shows that we can derive that a backwards-reachable state
cannot be reached with an inconsistent action from a region that is not
backwards-reachable.

\begin{lemma}
  For each action $a\in \actions$ and abstract state $\pdbstate$ with
  $d(\pdbstate)<\infty$ such that $a$ is inconsistent with $\pdbstate$, RUP can
  derive\\
  $({\rpdbinfinite\land r_a)\rightarrow
  \overline{\rpdbstate{\pdbstate}}'}$ from $\ctrans\cup \Hpdb\cup \Hpdb'$.
  \label{lemma:PDBinftyinconsistentaction}
\end{lemma}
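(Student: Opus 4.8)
The plan is to establish the constraint by a single RUP step. Negating the target implication means assuming the three literals $\rpdbinfinite\geq 1$, $r_a\geq 1$ and $\rpdbprimedstate{\pdbstate}\geq 1$, and the goal is to propagate these to a conflict. (In fact $\rpdbinfinite\geq 1$ will never be used; it is harmless to keep it as a standing assumption so that the statement composes uniformly with the consistent-action case treated in the following lemmas.)

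First I would propagate through the action constraint \eqref{eq:actionvar} for $a$: since $r_a$ is true, its right-hand side is a conjunction and every literal is forced. In particular this yields $v\geq 1$ for all $v\in\pre(a)$, $v'\geq 1$ for all $v\in\add(a)$, $\overline{v'}\geq 1$ for all $v\in\del(a)$, and $\eqvar{v}\geq 1$ for all $v\in\variables\setminus\effvars(a)$. Next I would propagate through the primed copy of \eqref{eq:rpdbstate-eff} for $\pdbstate$ in $\Hpdb'$, which is present because $d(\pdbstate)<\infty$: since $\rpdbprimedstate{\pdbstate}$ is true, the primed state variables over $P$ are pinned to $\pdbstate$, giving $v'\geq 1$ for all $v\in\pdbstate$ and $\overline{v'}\geq 1$ for all $v\in P\setminus\pdbstate$.

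The remaining argument is a case split on which of the three defining conditions of consistency fails for $a$ and $\pdbstate$; this is the only genuine choice in the proof and is the part that matters. If $\add(a)\cap P\not\subseteq\pdbstate$, take a witness $v$ in the difference: \eqref{eq:actionvar} forces $v'$ true while the primed \eqref{eq:rpdbstate-eff} forces $v'$ false. If $\del(a)\cap P\cap\pdbstate\neq\emptyset$, take $v$ in the intersection: now \eqref{eq:actionvar} forces $v'$ false and the primed \eqref{eq:rpdbstate-eff} forces it true. If $(\pre(a)\cap P)\setminus\effvars(a)\not\subseteq\pdbstate$, take a witness $v$: from \eqref{eq:actionvar} we have $v\geq 1$ and $\eqvar{v}\geq 1$, and unfolding \eqref{eq:eqvars} --- using $\leqvar{v}$ together with $v\geq 1$ --- forces $v'\geq 1$, contradicting $\overline{v'}\geq 1$. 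In every case propagation reaches a conflict, so the constraint holds by RUP.

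The only step I expect to require care is the third case: there the contradiction is not visible directly from the two bundles of propagated literals but needs the extra hop through the equality reifications \eqref{eq:eqvars} to carry the value of an unaffected precondition variable from the unprimed to the primed copy, exactly as in the proof of Lemma~\ref{lemma:PDB1-eff}. Everything else is routine conflict-by-unit-propagation.
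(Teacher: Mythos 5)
Your proposal matches the paper's proof essentially step for step: negate the implication, propagate the action constraint \eqref{eq:actionvar} and the primed \eqref{eq:rpdbstate-eff}, then split on which consistency condition fails, with the unaffected-precondition case handled via the $\eqvar{v}$ reifications. The only (immaterial) difference is that in that third case you propagate $v'\geq 1$ and clash with $\overline{v'}\geq 1$, whereas the paper propagates $\overline{v}\geq 1$ and clashes with the precondition literal $v\geq 1$; both are valid unit-propagation conflicts.
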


\begin{proof}
  Assume (a) ${\rpdbstate{\pdbstate}}'\geq 1$, (b) $r_a\geq 1$, and (c)
  $\rpdbinfinite\geq 1$.

  From (a) and \eqref{eq:rpdbstate-eff} from  $\Hpdb'$, we get (d)
  $v'\geq 1$ for all $v\in \pdbstate$ and (e) $\overline{v'}\geq 1$ for all
  $v\in P\setminus \pdbstate$.

  From (b) and \eqref{eq:actionvar}, we get (f) $v'\geq 1$ for all
  $v\in\add(a)$, (g) $\overline{v'}\geq 1$ for all
  $v\in\del(a)$, (h) $\eqvar{v}\geq 1$ for all
  $v\in\variables\setminus\effvars(a)$ and (i) $v\geq 1$ for all $v\in\pre(a)$.

  Since $a$ is inconsistent with $\pdbstate$, (*) there is a $v\in P\cap
  \add(a)$ with $\overline{v'}\geq 1$, or (**) there is a $v\in P\cap \del(a)$
  with $v'\geq  1$, or (***) there is a $v\in(\pre(a)\cap P)\setminus
  \effvars(a)$ with $\overline{v'}\geq 1$.  In cases (*) and (**), we can
  derive a contradiction with (f) and (e), or (g) and (e), respectively. In case (***), we can
  for this $v$ derive from (h) and \eqref{eq:eqvars} that $\overline{v}\geq 1$,
  contradicting (i).
\end{proof}

The next lemma allows us to derive the same constraint as the previous lemma but for
a consistent action.
\begin{lemma}
  For each action $a\in \actions$ and abstract state $\pdbstate$ with
  $d(\pdbstate)<\infty$ such that $a$ is consistent with $\pdbstate$, it is possible to derive
  $({\rpdbinfinite\land r_a)\rightarrow
  \overline{\rpdbstate{\pdbstate}}'}$ from $\ctrans\cup \Hpdb\cup \Hpdb'$.
  \label{lemma:PDBinftyconsistentaction}
\end{lemma}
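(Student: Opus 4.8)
The plan is to argue by contradiction inside a CPR derivation, in the same spirit as Lemmas~\ref{lemma:PDB2-eff}--\ref{lemma:PDBinftyinconsistentaction}: assume the antecedent together with the negated conclusion, i.e.\ $\rpdbinfinite\geq 1$, $r_a\geq 1$ and $\rpdbprimedstate{\pdbstate}\geq 1$, and derive a conflict. First I would unpack the hypotheses into literals. From $r_a\geq 1$ and \eqref{eq:actionvar} I get $v\geq 1$ for every $v\in\pre(a)$, $v'\geq 1$ for $v\in\add(a)$, $\overline{v'}\geq 1$ for $v\in\del(a)$, and $\eqvar{v}\geq 1$ for $v\in\variables\setminus\effvars(a)$, the last of which together with \eqref{eq:eqvars} equates $v$ and $v'$ off $\effvars(a)$. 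From $\rpdbprimedstate{\pdbstate}\geq 1$ and the primed copy of \eqref{eq:rpdbstate-eff} I get $v'\geq 1$ for $v\in\pdbstate$ and $\overline{v'}\geq 1$ for $v\in P\setminus\pdbstate$. Combining these, the unprimed state is pinned down on the set $Y=(P\setminus\effvars(a))\cup(\pre(a)\cap P)$ to the assignment $\mu$ with $\mu(v)=1$ iff $v\in\pre(a)\cap P$ or ($v\in P\setminus\effvars(a)$ and $v\in\pdbstate$); these forcings are mutually compatible precisely because the third consistency condition $(\pre(a)\cap P)\setminus\effvars(a)\subseteq\pdbstate$ holds exactly where the two cases overlap.

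The obstacle, and the reason a single RUP step does not suffice, is that $\mu$ does \emph{not} determine the full predecessor abstract state: the variables in $F=(P\cap\effvars(a))\setminus\pre(a)$ stay free, so I cannot simply name the predecessor and invoke $\rpdbinfinite$. This is exactly what Lemma~\ref{lemma:partialstatesetextension} is designed for. I would introduce a fresh reification $r_\mu\Leftrightarrow\sum_{v\in Y}v^{\mu(v)}\geq|Y|$, observe that the literals above give $r_\mu\geq 1$, and apply Lemma~\ref{lemma:partialstatesetextension} with its $Y$, $Z$, $\alpha$ instantiated by our $Y$, by $P$, and by $\mu$. Its hypotheses require, for every extension $\beta$ of $\mu$ to $P$, a reification of shape \eqref{eq:eqbeta} for the abstract state $q_\beta=\{v\in P\mid\beta(v)=1\}$; these are available because each $q_\beta$ lies in $S$. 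Indeed $\pre(\aalpha)=\pre(a)\cap P\subseteq q_\beta$ (those variables lie in $Y$ and $\mu$ sets them to $1$), so $\aalpha$ is applicable in $q_\beta$, and $\app{q_\beta}{\aalpha}=\pdbstate$: off $\effvars(\aalpha)$ by the definition of $\mu$, on $\add(\aalpha)$ by the first consistency condition, and on $\del(\aalpha)$ by the second. Hence $d(q_\beta)\leq\cost(\aalpha)+d(\pdbstate)<\infty$, so $q_\beta\in S$ and reification \eqref{eq:rpdbstate-eff} for $q_\beta$ (in particular its $\Leftarrow$ half) is present in $\Hpdb$.

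Lemma~\ref{lemma:partialstatesetextension} then yields $\overline{r_\mu}+\sum_{\beta}\rpdbstate{q_\beta}\geq 1$, the sum ranging over all extensions $\beta$ of $\mu$. To finish, I would use $\rpdbinfinite\geq 1$ with \eqref{eq:rpdbinfinite-eff} to conclude $\overline{\rpdbstate{\widehat\pdbstate}}\geq 1$ for every $\widehat\pdbstate\in S$, hence $\overline{\rpdbstate{q_\beta}}\geq 1$ for each $\beta$; together with $r_\mu\geq 1$ this contradicts the constraint just derived. The whole argument is a CPR derivation (one reification, one application of Lemma~\ref{lemma:partialstatesetextension}, plus RUP glue), not a single RUP step, which is why the statement says ``it is possible to derive'' rather than ``RUP can derive''. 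I expect the only genuinely delicate point to be this middle step: recognizing that the predecessor abstract state is underdetermined, routing through Lemma~\ref{lemma:partialstatesetextension}, and verifying that \emph{every} consistent predecessor $q_\beta$ is backwards-reachable so that its reification variable exists and is pinned to $0$ by $\rpdbinfinite$; the literal-pushing around \eqref{eq:actionvar} and \eqref{eq:eqvars} is routine and mirrors the neighbouring lemmas.
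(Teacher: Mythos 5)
Your proposal is correct and follows essentially the same route as the paper's proof: the paper likewise reifies the partial assignment forced on $(\pre(a)\cap P)\cup(P\setminus\effvars(a))$ (its literal set $L$ coincides with your $Y$, $\mu$), invokes Lemma~\ref{lemma:partialstatesetextension} to enumerate the consistent abstract predecessors, notes they all have finite abstract goal distance so their reifications exist in $\Hpdb$, and closes by RUP using $\rpdbinfinite$ with \eqref{eq:rpdbinfinite-eff}. Your remarks on the underdetermined variables $(P\cap\effvars(a))\setminus\pre(a)$ and on the third consistency condition guaranteeing compatibility match the paper's reasoning.
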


\begin{proof}
Define $L = (\pre(a)\cap P)\cup(\pdbstate\setminus\effvars(a))\cup
\{\overline{v}\mid v\in (P\setminus\pdbstate)\setminus\effvars(a)\}$
and the corresponding reification
(a) $r_L \Leftrightarrow \sum_{\ell \in L} \ell \geq 1$.
Define $L^+ = \{\ell \in L \mid \ell=v \text{ for some }v \in P\}$ and
$L^- = L\setminus L^+$.

Define set $Q = \{\widehat{\pdbstate}\subseteq P \mid
L^+ \subseteq \widehat{\pdbstate} \text{ and }
L^- \subseteq P\setminus\widehat{\pdbstate}\}$.
This set contains exactly the abstract states $\widehat{\pdbstate}$ such that
$\app{\widehat{\pdbstate}}{\aalpha} = \pdbstate$.
Since $\pdbstate$ has a finite abstract goal distance, this must also be true
for all these $\widehat{\pdbstate}$.
We can use lemma \ref{lemma:partialstatesetextension} to derive
(b) $\overline{r_L} + \sum_{\rpdbstate{\widehat{\pdbstate}}\in Q}\rpdbstate{\widehat{\pdbstate}}\geq 1$.

We are now ready to establish the desired constraint with RUP:
Assume
(c) ${\rpdbstate{\pdbstate}}'\geq 1$,
(d) $r_a\geq 1$, and
(e) $\rpdbinfinite\geq 1$.

From (c) and \eqref{eq:rpdbstate-eff} from  $\Hpdb'$, we get
(f) $v'\geq 1$ for all $v\in \pdbstate$ and
(g) $\overline{v'}\geq 1$ for all $v\in P\setminus \pdbstate$.

From (d) and \eqref{eq:actionvar}, we get
(h) $v'\geq 1$ for all $v\in\add(a)$,
(i) $\overline{v'}\geq 1$ for all $v\in\del(a)$,
(j) $\eqvar{v}\geq 1$ for all $v\in\variables\setminus\effvars(a)$ and
(k) $v\geq 1$ for all $v\in\pre(a)$.

We derive from \eqref{eq:eqvars}, (f) and (j) that
(l) $v\geq 1$ for all $v\in \pdbstate\setminus\effvars(a)$
and analogously from (g) and (j) that
(m) $\overline{v}\geq 1$ for all $v\in (P\setminus\pdbstate)\setminus\effvars(a)$.

From (a) with (k), (l) and (m) we derive
(n) $r_L \geq 1$.

From (e) and \eqref{eq:rpdbinfinite-eff} we get that
(o) $\overline{\rpdbstate{\widehat{\pdbstate}}} \geq 1$ for all $\widehat{\pdbstate} \in S$ and in particular for all $\widehat{\pdbstate} \in Q$.

From (b) and (o) we get $\overline{r_L} \geq 1$, contradicting (n).
\end{proof}

The next lemma generalizes the previous lemmas from individual actions to the
entire transition relation:
\begin{lemma}
  For abstract state $\pdbstate$ with
  $d(\pdbstate)<\infty$, RUP can derive
  $\ctask\cup \Hpdb\cup \Hpdb' \vdash (\rpdbinfinite\land
  \rtrans)\rightarrow \overline{{\rpdbstate{\pdbstate}}'}$.
  \label{lemma:PDBinftytransition}
\end{lemma}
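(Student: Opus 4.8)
The plan is to lift the two per-action statements from Lemmas~\ref{lemma:PDBinftyinconsistentaction} and~\ref{lemma:PDBinftyconsistentaction} to the full transition relation, in the same style as Lemma~\ref{lemma:PDB3-eff}, but in fact more directly since no action needs separate handling. Every action $a\in\actions$ is either consistent or inconsistent with $\pdbstate$, so in both cases I would first derive the constraint
\[
  (\rpdbinfinite\land r_a)\rightarrow \overline{{\rpdbstate{\pdbstate}}'},
\]
applying Lemma~\ref{lemma:PDBinftyinconsistentaction} when $a$ is inconsistent with $\pdbstate$ and Lemma~\ref{lemma:PDBinftyconsistentaction} when $a$ is consistent with $\pdbstate$. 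The hypothesis $d(\pdbstate)<\infty$ of the present lemma is exactly what both of these lemmas require, so they are applicable to every action.

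With these $|\actions|$ constraints derived, I would close the argument by RUP. Assume the negation of the target constraint, i.e., $\rpdbinfinite\geq 1$, $\rtrans\geq 1$, and ${\rpdbstate{\pdbstate}}'\geq 1$. For each $a\in\actions$, the per-action constraint derived above then unit-propagates $\overline{r_a}\geq 1$. Once all action variables $r_a$ are forced to $0$, the reification \eqref{eq:rtrans} (specifically the $\rtrans\Rightarrow\sum_{a\in\actions}r_a\geq 1$ direction) propagates $\overline{\rtrans}\geq 1$, contradicting $\rtrans\geq 1$; hence $(\rpdbinfinite\land \rtrans)\rightarrow \overline{{\rpdbstate{\pdbstate}}'}$ follows by RUP.

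I do not expect a real obstacle here, since the lemma is a routine aggregation step. The one point worth a moment's care is the exhaustiveness of the case split over actions: by definition every action is consistent or inconsistent with $\pdbstate$, and crucially the two per-action lemmas are stated for \emph{all} actions rather than only for those whose induced abstract action is applicable in $\pdbstate$, so no action is left uncovered. The rest is bookkeeping: the finite-distance precondition is inherited verbatim, and the RUP conflict is genuinely reached because unit propagation through \eqref{eq:rtrans} fires once every $r_a$ has been set to $0$.
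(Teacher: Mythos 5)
Your proposal is correct and matches the paper's own proof essentially step for step: it derives the per-action constraint $(\rpdbinfinite\land r_a)\rightarrow \overline{{\rpdbstate{\pdbstate}}'}$ for every action via Lemma~\ref{lemma:PDBinftyinconsistentaction} or Lemma~\ref{lemma:PDBinftyconsistentaction} depending on consistency with $\pdbstate$, and then closes by a single RUP step that forces $\overline{r_a}\geq 1$ for all actions and reaches a conflict through \eqref{eq:rtrans}. No gaps.
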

\begin{proof}
  For each action $a\in \actions$, we establish (a)
  $(\rpdbinfinite\land r_a)\rightarrow \overline{{\rpdbstate{\pdbstate}}'}$
  by means of lemmas \ref{lemma:PDBinftyinconsistentaction} and
  \ref{lemma:PDBinftyconsistentaction}, respectively.
  
  Afterwards, the desired constraint can be derived by RUP, assuming (b)
  ${\rpdbstate{\pdbstate}}'\geq 1$, (c) $\rtrans\geq 1$ and (d)
  $\rpdbinfinite\geq 1$. For every $a\in\actions$, we get from (b), (d) and (a)
  that $\overline{r_a}\geq 1$. With \eqref{eq:rtrans}, this yields
  $\overline{\rtrans}\geq 1$, contradicting (c).
\end{proof}

The following lemma allows us to derive that any successor of any state in a region
that is not backwards-reachable, is contained in the invariant:
\begin{lemma}
  It is possible to derive 
$(\rpdbinfinite\land \rtrans)
  \rightarrow r'_{\textup{PDB}}$ from $\ctrans\cup \Hpdb\cup \Hpdb'$.
  \label{lemma:PDBinftytransition2}
\end{lemma}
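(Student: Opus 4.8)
The plan is to reduce this lemma entirely to Lemma~\ref{lemma:PDBinftytransition}, which already captures the essential monotonicity fact: if the current abstract state has infinite abstract goal distance, then so does every abstract successor. Concretely, the first step will be to invoke Lemma~\ref{lemma:PDBinftytransition} once for each abstract state $\pdbstate$ with $d(\pdbstate)<\infty$ (i.e., each $\pdbstate\in S$), obtaining the constraint $(\rpdbinfinite\land\rtrans)\rightarrow\overline{{\rpdbstate{\pdbstate}}'}$. Intuitively, under $\rpdbinfinite$ the concrete state lies in a region that is not backwards-reachable, so after any transition the primed state variables cannot encode a concrete state whose abstraction is one of the (necessarily backwards-reachable) states in $S$.

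The remaining step is a single RUP derivation. I would assume (a) $\rpdbinfinite\geq 1$, (b) $\rtrans\geq 1$, and (c) $\overline{r'_{\textup{PDB}}}\geq 1$. From (a), (b), and the constraints established in the first step, we immediately get $\overline{{\rpdbstate{\pdbstate}}'}\geq 1$ for every $\pdbstate\in S$. Feeding all of these into the primed version of \eqref{eq:rpdbinfinite-eff} (whose right-hand side is exactly a sum over $S$) propagates $\rpdbinfinite'\geq 1$, and then the primed version of \eqref{eq:rPDB-eff} propagates $r'_{\textup{PDB}}\geq 1$, contradicting (c).

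I do not expect a real obstacle here: the genuine case analysis — consistent versus inconsistent actions and the use of the state-set extension lemma (Lemma~\ref{lemma:partialstatesetextension}) to enumerate the preimages of an abstract state — has already been discharged in Lemmas~\ref{lemma:PDBinftyinconsistentaction}--\ref{lemma:PDBinftytransition}. The only point requiring minor care is that the reifications for $\rpdbinfinite'$ and $r'_{\textup{PDB}}$ propagate in the $\Leftarrow$ direction, which is fine precisely because the derived literals $\overline{{\rpdbstate{\pdbstate}}'}$ are available for $\pdbstate$ ranging over all of $S$, matching the index sets in \eqref{eq:rpdbinfinite-eff} and \eqref{eq:rPDB-eff}.
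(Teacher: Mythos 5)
Your proposal is correct and follows essentially the same route as the paper's proof: establish $(\rpdbinfinite\land\rtrans)\rightarrow\overline{{\rpdbstate{\pdbstate}}'}$ for every $\pdbstate\in S$ via Lemma~\ref{lemma:PDBinftytransition}, then close with a single RUP step that propagates through the primed versions of \eqref{eq:rpdbinfinite-eff} and \eqref{eq:rPDB-eff} to contradict $\overline{r'_{\textup{PDB}}}\geq 1$. No gaps.
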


\begin{proof}
We first establish
(a) $(\rpdbinfinite \land \rtrans)\rightarrow \overline{{\rpdbstate{\pdbstate}}'}$
for all abstract states $\pdbstate\in S$ using lemma \ref{lemma:PDBinftytransition}.

Then we can derive the desired constraint by RUP, assuming
(b) $\rpdbinfinite \geq 1$,
(c) $\rtrans \geq 1$, and
(d) $\overline{r'_{\textup{PDB}}} \geq 1$.
From (b) and (c) and (a) we obtain
(e) $\overline{{\rpdbstate{\pdbstate}}'} \geq 1$ for all abstract states $\pdbstate\in S$.
Using the primed version of \eqref{eq:rpdbinfinite-eff} from $\Hpdb'$ and (e), we obtain
(f) ${\rpdbinfinite}' \geq 1$, which together with the primed version of
\eqref{eq:rPDB-eff} from $\Hpdb'$ yields $r'_{\textup{PDB}} \geq 1$ in
contradiction to (d).
\end{proof}

The final step for the inductivity lemma generalizes the previous results to the
entire PB circuit of the heuristic.
\begin{lemma}
  It is possible to derive the inductivity lemma $(r_{\textup{PDB}} \land\rtrans)\rightarrow
  r'_{\textup{PDB}}$ from $\ctrans\cup \Hpdb\cup
  \Hpdb'$.
\end{lemma}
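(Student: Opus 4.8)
The plan is to glue together the two transition-relation lemmas already established for the efficient PDB circuit and finish with a single RUP step, exactly as in the proof sketch of Lemma~\ref{lemma:inefficientPDBinductivitylemma} for the naive encoding, but now also handling the extra disjunct $\rpdbinfinite$ that appears in the final reification \eqref{eq:rPDB-eff}.

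First I would, for every abstract state $\pdbstate\in S$, establish $(\rpdbstategeg{\pdbstate}\land \rtrans)\rightarrow r'_{\textup{PDB}}$ by invoking Lemma~\ref{lemma:PDB3-eff}. Then I would establish $(\rpdbinfinite\land \rtrans)\rightarrow r'_{\textup{PDB}}$ by Lemma~\ref{lemma:PDBinftytransition2}. Together these constraints cover every summand on the right-hand side of \eqref{eq:rPDB-eff}, i.e., every way in which $r_{\textup{PDB}}$ can be forced to be $1$.

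The inductivity lemma then follows by a single RUP step. Assuming $r_{\textup{PDB}}\geq 1$, $\rtrans\geq 1$, and $\overline{r'_{\textup{PDB}}}\geq 1$, unit propagation through the constraint from Lemma~\ref{lemma:PDBinftytransition2} gives $\overline{\rpdbinfinite}\geq 1$, and unit propagation through each constraint from Lemma~\ref{lemma:PDB3-eff} gives $\overline{\rpdbstategeg{\pdbstate}}\geq 1$ for every $\pdbstate\in S$. With the $\Leftarrow$ direction of the reification \eqref{eq:rPDB-eff} from $\Hpdb$, having all summands of $\rpdbinfinite + \sum_{\pdbstate\in S}\rpdbstategeg{\pdbstate}$ equal to $0$ forces $\overline{r_{\textup{PDB}}}\geq 1$, contradicting the assumption; hence the negation of the target constraint propagates to conflict.

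I do not expect any real obstacle here: the entire content of this lemma is that Lemmas~\ref{lemma:PDB3-eff} and~\ref{lemma:PDBinftytransition2} jointly exhaust the disjuncts of \eqref{eq:rPDB-eff}. The only point requiring care is ensuring the final RUP step has access to \eqref{eq:rPDB-eff} in its $\Leftarrow$ form — which it does, since that reification is part of $\Hpdb$ — and that the previously derived constraints are all in scope before the RUP step is taken.
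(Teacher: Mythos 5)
Your proposal is correct and follows essentially the same route as the paper: establish $(\rpdbstategeg{\pdbstate}\land\rtrans)\rightarrow r'_{\textup{PDB}}$ for every abstract state $\pdbstate$ with $d(\pdbstate)<\infty$ via Lemma~\ref{lemma:PDB3-eff} and $(\rpdbinfinite\land\rtrans)\rightarrow r'_{\textup{PDB}}$ via Lemma~\ref{lemma:PDBinftytransition2}, then close with a single RUP step through the reification \eqref{eq:rPDB-eff}. (One minor nitpick: the half of the reification that propagates $\overline{r_{\textup{PDB}}}$ once all summands are falsified is the constraint the paper denotes $r_{\textup{PDB}}\Rightarrow\cdots$, not the $\Leftarrow$ half, but both are present in $\Hpdb$, so the RUP step goes through exactly as you describe.)
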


\begin{proof}
We first establish with lemma \ref{lemma:PDB3-eff} that
(a) $(\rpdbstategeg{\pdbstate}\land \rtrans) \rightarrow r'_{\textup{PDB}}$
for each abstract state $\pdbstate$ with $d(\pdbstate) < \infty$
and with lemma \ref{lemma:PDBinftytransition2} that
(b) $(\rpdbinfinite\land \rtrans)\rightarrow r'_{\textup{PDB}}$ holds.

The desired constraint can now be derived by RUP, assuming
(c) $r_{\textup{PDB}}\geq 1$,
(d) $\rtrans\geq 1$, and
(e) $\overline{r'_{\textup{PDB}}}\geq 1$.
Using (d) and (e) with (a), we can derive for each abstract state $\aalpha$
with $d(\aalpha)<\infty$ that
(f) $\overline{\rpdbstategeg{\pdbstate}}\geq 1$.
Using (d) and (e) with (b) we get
(g) $\overline{\rpdbinfinite}\geq 1$.
From \eqref{eq:rPDB-eff} with (f) and (g) we get $\overline{r_{\textup{PDB}}} \geq 1$,
contradicting (c).
\end{proof}

}


\begin{thebibliography}{28}
\providecommand{\natexlab}[1]{#1}

\bibitem[{Berg et~al.(2023)Berg, Bogaerts, Nordstr{\"{o}}m, Oertel, and Vandesande}]{berg-et-al-cade2023}
Berg, J.; Bogaerts, B.; Nordstr{\"{o}}m, J.; Oertel, A.; and Vandesande, D. 2023.
\newblock Certified Core-Guided {MaxSAT} Solving.
\newblock In Pientka, B.; and Tinelli, C., eds., \emph{Proceedings of the 29th International Conference on Automated Deduction ({CADE} 2023)}, volume 14132 of \emph{LNCS}, 1--22. Springer.

\bibitem[{Bogaerts et~al.(2022)Bogaerts, Gocht, McCreesh, and Nordstr{\"o}m}]{bogaerts-et-al-aaai2022}
Bogaerts, B.; Gocht, S.; McCreesh, C.; and Nordstr{\"o}m, J. 2022.
\newblock Certified Symmetry and Dominance Breaking for Combinatorial Optimisation.
\newblock In \emph{Proc.\ {AAAI} 2022}, 3698--3707.

\bibitem[{Bogaerts et~al.(2023)Bogaerts, Gocht, McCreesh, and Nordstr{\"o}m}]{bogaerts-et-al-jair2023}
Bogaerts, B.; Gocht, S.; McCreesh, C.; and Nordstr{\"o}m, J. 2023.
\newblock Certified Dominance and Symmetry Breaking for Combinatorial Optimisation.
\newblock \emph{JAIR}, 77: 1539--1589.

\bibitem[{Bonet and Geffner(2001)}]{bonet-geffner-aij2001}
Bonet, B.; and Geffner, H. 2001.
\newblock Planning as Heuristic Search.
\newblock \emph{AIJ}, 129(1): 5--33.

\bibitem[{Buss and Nordstr{\"o}m(2021)}]{buss-nordstroem-hos2021}
Buss, S.; and Nordstr{\"o}m, J. 2021.
\newblock Proof Complexity and {SAT} Solving.
\newblock In Biere, A.; Heule, M.; van Maaren, H.; and Walsh, T., eds., \emph{Handbook of Satisfiability -- Second Edition}, volume 336 of \emph{Frontiers in Artificial Intelligence and Applications}, 233--350. {IOS} Press.

\bibitem[{Cook, Coullard, and Tur{\'{a}}n(1987)}]{cook-et-al-dam1987}
Cook, W.~J.; Coullard, C.~R.; and Tur{\'{a}}n, G. 1987.
\newblock On the complexity of cutting-plane proofs.
\newblock \emph{Discrete Applied Mathematics}, 18(1): 25--38.

\bibitem[{Cruz{-}Filipe et~al.(2017)Cruz{-}Filipe, Heule, Hunt, Kaufmann, and Schneider{-}Kamp}]{cruz-filipe-et-al-cade2017}
Cruz{-}Filipe, L.; Heule, M. J.~H.; Hunt, W.~A., Jr.; Kaufmann, M.; and Schneider{-}Kamp, P. 2017.
\newblock Efficient Certified {RAT} Verification.
\newblock In de~Moura, L., ed., \emph{Proceedings of the 26th International Conference on Automated Deduction ({CADE} 2017)}, volume 10395 of \emph{LNCS}, 220--236. Springer.

\bibitem[{Demirovi{\'c} et~al.(2024)Demirovi{\'c}, McCreesh, McIlree, Nordstr{\"{o}}m, Oertel, and Sidorov}]{demirovic-et-al-cp2024}
Demirovi{\'c}, E.; McCreesh, C.; McIlree, M.~J.; Nordstr{\"{o}}m, J.; Oertel, A.; and Sidorov, K. 2024.
\newblock Pseudo-{B}oolean Reasoning About States and Transitions to Certify Dynamic Programming and Decision Diagram Algorithms.
\newblock In \emph{Proceedings of the 30th International Conference on Principles and Practice of Constraint Programming (CP-24)}, 9:1--9:21. Schloss Dagstuhl -- Leibniz-Zentrum f{\"{u}}r Informatik.

\bibitem[{Edelkamp(2001)}]{edelkamp-ecp2001}
Edelkamp, S. 2001.
\newblock Planning with Pattern Databases.
\newblock In \emph{Proc.\ ECP 2001}, 84--90.

\bibitem[{Edelkamp and Kissmann(2009)}]{edelkamp-kissmann-ijcai2009}
Edelkamp, S.; and Kissmann, P. 2009.
\newblock Optimal Symbolic Planning with Action Costs and Preferences.
\newblock In \emph{Proc.\ IJCAI 2009}, 1690--1695.

\bibitem[{Eriksson and Helmert(2020)}]{eriksson-helmert-icaps2020}
Eriksson, S.; and Helmert, M. 2020.
\newblock Certified Unsolvability for {SAT} Planning with Property Directed Reachability.
\newblock In \emph{Proc.\ ICAPS 2020}, 90--100.

\bibitem[{Eriksson, R{\"o}ger, and Helmert(2017)}]{eriksson-et-al-icaps2017}
Eriksson, S.; R{\"o}ger, G.; and Helmert, M. 2017.
\newblock Unsolvability Certificates for Classical Planning.
\newblock In \emph{Proc.\ ICAPS 2017}, 88--97.

\bibitem[{Eriksson, R{\"o}ger, and Helmert(2018)}]{eriksson-et-al-icaps2018}
Eriksson, S.; R{\"o}ger, G.; and Helmert, M. 2018.
\newblock A Proof System for Unsolvable Planning Tasks.
\newblock In \emph{Proc.\ ICAPS 2018}, 65--73.

\bibitem[{Fikes and Nilsson(1971)}]{fikes-nilsson-aij1971}
Fikes, R.~E.; and Nilsson, N.~J. 1971.
\newblock {STRIPS}: {A} New Approach to the Application of Theorem Proving to Problem Solving.
\newblock \emph{AIJ}, 2: 189--208.

\bibitem[{Gocht, McCreesh, and Nordstr{\"{o}}m(2022)}]{gocht-et-al-cp2022}
Gocht, S.; McCreesh, C.; and Nordstr{\"{o}}m, J. 2022.
\newblock An Auditable Constraint Programming Solver.
\newblock In \emph{Proceedings of the 28th International Conference on Principles and Practice of Constraint Programming (CP-22)}, 25:1--25:18. Schloss Dagstuhl -- Leibniz-Zentrum f{\"{u}}r Informatik.

\bibitem[{Hart, Nilsson, and Raphael(1968)}]{hart-et-al-ieeessc1968}
Hart, P.~E.; Nilsson, N.~J.; and Raphael, B. 1968.
\newblock A Formal Basis for the Heuristic Determination of Minimum Cost Paths.
\newblock \emph{IEEE Transactions on Systems Science and Cybernetics}, 4(2): 100--107.

\bibitem[{Haslum(2016)}]{haslum-github2016-accessed-2025-05-02}
Haslum, P. 2016.
\newblock {INVAL}: the {Independent} {PDDL} Plan {Validator}.
\newblock \url{https://github.com/patrikhaslum/INVAL}.
\newblock Accessed May 2, 2025.

\bibitem[{Heule, Hunt, and Wetzler(2013)}]{heule-et-al-cade2013}
Heule, M. J.~H.; Hunt, W.~A.; and Wetzler, N. 2013.
\newblock Verifying Refutations with Extended Resolution.
\newblock In \emph{Proceedings of the Twenty-Fourth International Conference on Automated Deduction ({CADE}-24)}, 345--359. Springer, Berlin, Heidelberg.

\bibitem[{Hoen et~al.(2024)Hoen, Oertel, Gleixner, and Nordstr{\"{o}}m}]{hoen-et-al-cpaior2024}
Hoen, A.; Oertel, A.; Gleixner, A.; and Nordstr{\"{o}}m, J. 2024.
\newblock Certifying {MIP}-based Presolve Reductions for \mbox{$0$--$1$} Integer Linear Programs.
\newblock In \emph{Proceedings of the 21st International Conference on the Integration of Constraint Programming, Artificial Intelligence, and Operations Research ({CPAIOR}-24)}, 310--328. Springer.

\bibitem[{Howey and Long(2003)}]{howey-long-icaps2003wscompetition}
Howey, R.; and Long, D. 2003.
\newblock {VAL's} Progress: The Automatic Validation Tool for {PDDL2.1} used in the {International} {Planning} {Competition}.
\newblock In \emph{ICAPS 2003 Workshop on the Competition}.

\bibitem[{Katz and Domshlak(2010)}]{katz-domshlak-aij2010}
Katz, M.; and Domshlak, C. 2010.
\newblock Optimal admissible composition of abstraction heuristics.
\newblock \emph{AIJ}, 174(12--13): 767--798.

\bibitem[{Lammich(2020)}]{lammich-jar2020}
Lammich, P. 2020.
\newblock Efficient Verified ({UN}){SAT} Certificate Checking.
\newblock \emph{Journal of Automated Reasoning}, 64(3): 513--532.

\bibitem[{McConnell et~al.(2011)McConnell, Mehlhorn, N{\"a}her, and Schweitzer}]{mcconnell-et-al-csr2011}
McConnell, R.~M.; Mehlhorn, K.; N{\"a}her, S.; and Schweitzer, P. 2011.
\newblock Certifying algorithms.
\newblock \emph{Computer Science Review}, 5(2): 119--162.

\bibitem[{McIlree and McCreesh(2023)}]{mcilree-mccreesh-cp2023}
McIlree, M.~J.; and McCreesh, C. 2023.
\newblock Proof Logging for Smart Extensional Constraints.
\newblock In \emph{Proceedings of the 29th International Conference on Principles and Practice of Constraint Programming, (CP-23)}, 26:1--26:17. Schloss Dagstuhl -- Leibniz-Zentrum f{\"{u}}r Informatik.

\bibitem[{McIlree, McCreesh, and Nordstr{\"{o}}m(2024)}]{mcilree-et-al-cpaior2024}
McIlree, M.~J.; McCreesh, C.; and Nordstr{\"{o}}m, J. 2024.
\newblock Proof Logging for the Circuit Constraint.
\newblock In \emph{Proceedings of the 21st International Conference on the Integration of Constraint Programming, Artificial Intelligence, and Operations Research (CPAIOR-24)}, 38--55. Springer.

\bibitem[{Mugdan, Christen, and Eriksson(2023)}]{mugdan-et-al-icaps2023}
Mugdan, E.; Christen, R.; and Eriksson, S. 2023.
\newblock Optimality Certificates for Classical Planning.
\newblock In \emph{Proc.\ ICAPS 2023}, 286--294.

\bibitem[{Rintanen, Heljanko, and Niemel{\"a}(2006)}]{rintanen-et-al-aij2006}
Rintanen, J.; Heljanko, K.; and Niemel{\"a}, I. 2006.
\newblock Planning as satisfiability: parallel plans and algorithms for plan search.
\newblock \emph{AIJ}, 170(12--13): 1031--1080.

\bibitem[{Tan, Heule, and Myreen(2023)}]{tan-et-al-sttt2023}
Tan, Y.~K.; Heule, M. J.~H.; and Myreen, M.~O. 2023.
\newblock Verified Propagation Redundancy and Compositional {UNSAT} Checking in {CakeML}.
\newblock \emph{Software Tools for Technology Transfer}, 25(2): 167--184.

\end{thebibliography}
\end{document}